\theoremstyle{plain}
\newtheorem{theorem}{Theorem}[section]
\newtheorem{proposition}[theorem]{Proposition}
\theoremstyle{definition}
\newtheorem{definition}[theorem]{Definition}
\newtheorem{example}[theorem]{Example}
\theoremstyle{remark}
\newtheorem{remark}[theorem]{Remark}
\newcommand{\bc}[1]{\left\{{#1}\right\}}
\newcommand{\br}[1]{\left({#1}\right)}
\newcommand{\bs}[1]{\left[{#1}\right]}
\newcommand{\abs}[1]{\left| {#1} \right|}
\newcommand{\myvec}[1]{\ensuremath{\begin{bmatrix}#1\end{bmatrix}}}
\newcommand{\norm}[1]{\left\lVert#1\right\rVert}
\newcommand{\snorm}[1]{\norm{#1}}
\newcommand\ip[2]{\langle #1, #2 \rangle}
\newcommand{\ones}{\bm 1}
\newcommand{\ind}{\mathbbm{1}}
\newcommand{\ima}{\operatorname{im}}
\newcommand{\defeq}{\vcentcolon=}
\newcommand{\eqdef}{=\vcentcolon}
\DeclareMathOperator{\interior}{\operatorname{int}}
\DeclareMathOperator{\relint}{\operatorname{relint}}
\DeclareMathOperator{\relbd}{\operatorname{relbd}}
\DeclareMathOperator{\dom}{\operatorname{dom}}
\DeclareMathOperator{\rank}{\operatorname{rank}}
\DeclareMathOperator*{\argmin}{\arg\!\min}
\DeclareMathOperator*{\argmax}{\arg\!\max}
\DeclareMathOperator{\spn}{\operatorname{span}}
\DeclareMathOperator{\conv}{\operatorname{conv}}
\newcommand{\R}{\mathbb{R}}
\newcommand{\N}{\mathbb{N}}
\newcommand{\E}{\mathbb{E}}
\newcommand{\Scal}{\mathcal{S}}
\newcommand{\Acal}{\mathcal{A}}
\newcommand{\SAcal}{\Scal\times\Acal}
\newcommand{\Bcal}{\mathcal{B}}
\newcommand{\Dcal}{\mathcal{D}}
\newcommand{\Hcal}{\mathcal{H}}
\newcommand{\Rcal}{\mathcal{R}}
\newcommand{\Mcal}{\mathcal{M}}
\newcommand{\Ecal}{\mathcal{E}}
\newcommand{\Fcal}{\mathcal{F}}
\newcommand{\Ocal}{\mathcal{O}}
\newcommand{\Pcal}{\mathcal{P}}
\newcommand{\Ucal}{\mathcal{U}}
\newcommand{\Vcal}{\mathcal{V}}
\newcommand{\Wcal}{\mathcal{W}}
\newcommand{\Xcal}{\mathcal{X}}
\newcommand{\Ycal}{\mathcal{Y}}
\newcommand{\Kcal}{\mathcal{K}}
\newcommand{\f}[1]{\bm{#1}}
\newcommand{\muhat}{\hat{\mu}}
\newcommand{\rhat}{\hat{r}}
\newcommand{\epsilonhat}{\hat{\varepsilon}}
\newcommand{\deltahat}{\hat{\delta}}
\newcommand{\alphabar}{\bar{\alpha}}
\newcommand{\Rbar}{\overline{\R}}
\newcommand{\epsilonbar}{\bar{\varepsilon}}
\newcommand{\mubar}{\bar{\mu}}
\newcommand{\Pp}{\f{\mathbbm{P}}}
\newcommand{\numin}{\nu_{\text{min}}}
\newcommand{\pimin}{\pi_{\text{min}}}
\newcommand{\piminsh}{\pi_{\text{min, Sh}}}
\newcommand{\pimints}{\pi_{\text{min, Ts}}}
\newcommand{\muE}{\mu^{\textsf{E}}}
\newcommand{\muEhatk}{\hat{\mu}_{\DE_k}}
\newcommand{\rE}{r^{\textsf{E}}}
\newcommand{\DE}{\mathcal{D}^{\textsf{E}}}
\newcommand{\HE}{H^{\textsf{E}}}
\newcommand{\NE}{N^{\textsf{E}}}
\newcommand{\RL}{\mathsf{RL}}
\newcommand{\ARL}{\mathsf{A}}
\newcommand{\IRL}{\mathsf{IRL}}
\newcommand{\polreg}{h}
\newcommand{\occreg}{\bar{h}}
\newcommand{\DR}{D}
\newcommand{\rmax}{R}
\newcommand\numberthis{\addtocounter{equation}{1}\tag{\theequation}}
\title{Towards the Transferability of Rewards Recovered \\via Regularized Inverse Reinforcement Learning}
\author{%
  Andreas Schlaginhaufen \\
  SYCAMORE, EPFL\\
  \texttt{andreas.schlaginhaufen@epfl.ch} \\
  \And
  Maryam Kamgarpour \\
  SYCAMORE, EPFL \\
  \texttt{maryam.kamgarpour@epfl.ch} \\
}
\begin{document}
\doparttoc 
\faketableofcontents 
\part{} 
\maketitle
\begin{abstract}
Inverse reinforcement learning (IRL) aims to infer a reward from expert demonstrations, motivated by the idea that the reward, rather than the policy, is the most succinct and transferable description of a task \citep{ng2000algorithms}. However, the reward corresponding to an optimal policy is not unique, making it unclear if an IRL-learned reward is transferable to new transition laws in the sense that its optimal policy aligns with the optimal policy corresponding to the expert's true reward. Past work has addressed this problem only under the assumption of full access to the expert's policy, guaranteeing transferability when learning from two experts with the same reward but different transition laws that satisfy a specific rank condition \citep{rolland2022identifiability}. In this work, we show that the conditions developed under full access to the expert's policy cannot guarantee transferability in the more practical scenario where we have access only to demonstrations of the expert. Instead of a binary rank condition, we propose principal angles as a more refined measure of similarity and dissimilarity between transition laws. Based on this, we then establish two key results: 1) a sufficient condition for transferability to any transition laws when learning from at least two experts with sufficiently different transition laws, and 2) a sufficient condition for transferability to local changes in the transition law when learning from a single expert. Furthermore, we also provide a probably approximately correct (PAC) algorithm and an end-to-end analysis for learning transferable rewards from demonstrations of multiple experts.\looseness-1
\end{abstract}

\section{Introduction}
Reinforcement learning (RL) has achieved remarkable success in various domains such as robotics \citep{hwangbo2019learning}, autonomous driving \citep{lu2023imitation}, or fine-tuning of large language models \citep{stiennon2020learning}. Despite these advances, a key challenge lies in designing appropriate reward functions that reflect the desired outcomes and align with human values. Misaligned rewards can lead to suboptimal behaviors \citep{ngo2022alignment}, undermining the potential benefits of RL in practical scenarios. Inverse reinforcement learning (IRL), also known as inverse optimal control \citep{kalman1964linear} or structural estimation \citep{rust1994structural}, addresses this problem by inferring a reward from demonstrations of an expert acting optimally in a Markov decision process (MDP).

Compared to behavioral cloning \citep{pomerleau1988alvinn}, which directly fits a policy to the expert's demonstrations, 
IRL is believed to provide a more transferable description of the expert's task \citep{ng2000algorithms}, as recovering the expert’s underlying reward would enable us to train a policy in a new environment with different dynamics. However, it is also known that the reward corresponding to some optimal policy is not unique \citep{ng1999policy}, making it difficult to recover the expert's true underlying reward. This raises the question: \textit{Is a reward recovered via IRL transferable to a new environment in the sense that its optimal policy aligns with the expert's true reward?} For example, in autonomous driving, could we effectively reuse a reward learned from demonstrations of one car in a given city to train or fine-tune a policy for another car in another city?

Ensuring transferability is challenging, as neither the optimal policy corresponding to a reward nor the reward corresponding to an optimal policy is unique. This leads to trivial solutions to the IRL problem, such as constant rewards that make all policies optimal. Common approaches to address this challenge include characterizing the entire set of rewards for which the expert is optimal \citep{metelli2021provably}, or assuming the expert is optimal with respect to an entropy regularized RL problem \citep{ziebart2010modeling}, leading to many popular IRL and imitation learning algorithms \citep{ho2016generative,fu2017,garg2021}. Entropy regularization results in a unique and more uniform optimal policy, serving as a model for the expert's bounded rationality \citep{ortega2015information}. \looseness-1

In the entropy-regularized setting, several recent works study the set of rewards for which a given expert policy is optimal. In particular, \citet{cao2021identifiability, skalse2023invariance} show that under entropy regularization, the expert's reward can be identified up to so-called potential shaping transformations \citep{ng1999policy}. The authors of \citep{pmlr-v202-schlaginhaufen23a} extend this result to more general steep regularization. Furthermore, they show that to guarantee transferability to any transition law, the expert's reward needs to be identified up to a constant. The latter can be achieved either by restricting the reward class, e.g., to state-only rewards \citep{amin2017repeated}, or by learning from multiple experts with the same reward but different transition laws, given that a specific rank condition is satisfied \citep{cao2021identifiability, rolland2022identifiability}. However, the above results cannot be applied directly in practice, as they rely on having full access to the experts' policies, whereas in practice, we typically only have a finite set of demonstrations available.

\paragraph{Contributions} We consider the framework of regularized IRL \citep{jeon2021regularized} and address the transferability of rewards recovered from a finite set of expert demonstrations.
\begin{itemize}
    \item We define a novel notion of transferability (Definition~\ref{def:eps_transferability}), to address the practical limitation of not having perfect access to the experts' policies. Furthermore, we show that when learning from finite data, the conditions developed under full access to the experts' policies are not sufficient to guarantee transferability (Example~\ref{ex1}).
    \item Instead of a binary rank condition, we propose to use principal angles to characterize the similarity and dissimilarity between transition laws (Definition~\ref{def:principal_angles}). 
    Based on these principal angles, we then establish two key transferability results: 1) a guarantee for transferability to any transition laws when learning from at least two experts with sufficiently different transition laws (Theorem~\ref{thm:global_transferability}), and 2) a guarantee for transferability to local changes in the transition law when learning from a single expert (Theorem~\ref{thm:local_transferability}). 
    \item Assuming oracle access to a probably approximately correct (PAC) algorithm for the forward RL problem, we provide a PAC algorithm for the IRL problem, which in $\Ocal(K^2/\epsilonhat^2)$ steps recovers a reward for which, with high probability, all $K$ experts are $\epsilonhat$-optimal (Theorem~\ref{thm:convergence}). Together with our results on transferability, this establishes end-to-end guarantees for learning transferable rewards from a finite set of expert demonstrations. 
    \item We experimentally validate our results in a gridworld environment (Section~\ref{sec:experiments}).\footnote{The code is openly accessible at \url{https://github.com/andrschl/transfer_irl}.}
\end{itemize}
\section{Background}
\paragraph{Notation}
Given $x$ and $y$ in some Euclidean vector space $\Vcal$, we denote the $p$-norm by $\|x\|_p$, the orthogonal projection onto a closed convex set $\Xcal\subset \Vcal$ by $\Pi_{\mathcal{X}}(x) = \arg\min_{y\in\mathcal{X}} \|x-y\|_2$, and the standard dot product by $\ip{x}{y}$. For a linear operator $A$, we denote its image and rank by $\ima A$ and $\rank A$, respectively. Given two sets $\Xcal$ and $\Ycal$, we denote $\Xcal + \Ycal$ for their Minkowski sum and $\Ycal^{\Xcal}$ for the set of all functions mapping from $\Xcal$ to $\Ycal$. Additionally, we denote $\Delta_{\Xcal}$ for the probability simplex over $\Xcal$ and $\ind$ for the indicator function. The interior $\interior\Xcal$, the relative interior $\relint\Xcal$, the relative boundary $\relbd\Xcal$, and the convex hull $\conv\Xcal$ of some set $\Xcal$ are defined in Appendix~\ref{app:sec:notations}, along with an overview of all other notations.
\paragraph{Regularized MDPs}
We consider a regularized MDP \citep{geist2019theory} defined by a tuple $\br{\Scal,\Acal, P, \nu_0, r, \gamma, \polreg}$. Here, $\Scal$ and $\Acal$ represent finite state and action spaces with $|\Scal|, |\Acal|>1$, $\nu_0\in\Delta_{\Scal}$ the initial state distribution, $P\in\Delta_{\Scal}^{\SAcal}$ the transition law, $r\in\R^{\SAcal}$ the reward, and $\gamma\in(0,1)$ the discount factor. Furthermore, $\polreg:\Xcal\to\R$ is a strictly convex regularizer that is defined on a closed convex set $\Xcal\subseteq \R^{\Acal}$ with $\relint \Delta_\Acal\subseteq \interior \Xcal$. Starting from some initial state $s_0\sim\nu_0$ the agent can at each step in time $t$, choose an action $a_t\in\mathcal{A}$, will arrive in state $s_{t+1}\sim P(\cdot|s_t, a_t)$, and receives reward $r(s_t,a_t)$. The goal is to find a Markov policy $\pi\in\Delta_{\Acal}^{\Scal}$ maximizing the regularized objective $\E_{\pi} \bs{\sum_{t=0}^\infty \gamma^t \bs{r(s_t,a_t)-\polreg\br{\pi(\cdot|s_t)}}}$. Following the classical linear programming approach to MDPs \citep{puterman2014markov}, this can be cast equivalently as the convex optimization problem 
\begin{equation}\label{eq:mdp_occ}\tag{O-RL}
    \max_{\mu \in \Mcal} J(r,\mu), \quad \text{with} \quad J(r,\mu)\defeq  \ip{r}{\mu} - \occreg(\mu),
\end{equation}
where $\Mcal$ denotes the set of occupancy measures, $\mu^\pi(s,a)\defeq (1-\gamma)\E_{\pi}\bs{\sum_{t=0}^\infty \gamma^t \ind(s_t=s, a_t=a)}$, and we have $\occreg(\mu)\defeq \E_{(s,a)\sim\mu}\bs{\polreg(\pi^\mu(\cdot|s))}$, with $\pi^\mu$ being the policy corresponding to $\mu$ (see Appendix~\ref{app:sec:notations}). The set of occupancy measures is characterized by the Bellman flow constraints
\begin{equation}\label{eq:bellman_flow}
    \Mcal = \bc{\mu\in\R^{\SAcal}_+ : (E  - \gamma P)^\top\mu = (1-\gamma)\nu_0}\subseteq \Delta_{\SAcal},
\end{equation}
where $E:\R^{\Scal}\to\R^{\Scal\times\Acal}$ and $P:\R^{\Scal}\to\R^{\Scal\times\Acal}$ are the linear operators mapping $v\in\R^{\Scal}$ to $(Ev)(s,a) = v(s)$ and $(Pv)(s,a) = \sum_{s'} P(s'|s,a)v(s')$, respectively. 

Due to the strict convexity of $\polreg$, the regularized MDP problem has a unique optimal policy \citep{geist2019theory}, hence guaranteeing the uniqueness of the optimal occupancy measure in \eqref{eq:mdp_occ}.
In addition, we assume that the gradients of $\polreg$ become unbounded towards the relative boundary of the simplex as detailed in Assumption~\ref{ass:steep_regularization} below.
\vspace{-0.cm}
\begin{restatable}[Steep regularization]{assumption}{steepregularization}
\label{ass:steep_regularization}%
    Suppose that $\polreg:\Xcal\to \R$ is differentiable in $\interior\Xcal$ and that $\lim_{l\to \infty} \norm{\nabla \polreg(p_l)}_2= \infty$ if $\br{p_l}_{l\in\N}$ is a sequence in $\interior\Xcal$ converging to a point $p\in \relbd \Delta_{\Acal}$.
\end{restatable}
Assumption~\ref{ass:steep_regularization} ensures that the optimal policy is non-vanishing, and together with Assumption~\ref{ass:occ_lower_bound} below, we also have that the optimal occupancy measure is non-vanishing.

\vspace{-0.cm}
\begin{restatable}[Exploration]{assumption}{occlowerbound}
\label{ass:occ_lower_bound}%
Let $\nu(s) \defeq \sum_a \mu(s,a)\geq\numin>0$ for any $s\in\Scal$ and $\mu\in\Mcal$.
\end{restatable}
\vspace{-0.cm}
One way to guarantee Assumption~\ref{ass:occ_lower_bound} is to impose a lower bound on the initial state distribution $\nu_0$. 
In the following, it will be convenient to denote the optimal solution to \eqref{eq:mdp_occ} for the reward $r$ as
\begin{equation}\label{eq:opt_occ}
    \RL(r)\defeq \argmax_{\mu\in\Mcal} J(r,\mu),
\end{equation}
and the suboptimality of some occupancy measure $\mu$ for the reward $r$ as
\begin{equation}\label{eq:subopt}
    \mathsf{SubOpt}(r, \mu) \defeq \max_{\mu'\in\Mcal} J(r, \mu') - J(r, \mu).
\end{equation}
That is, $\mu = \RL(r)$ if and only if $\mathsf{SubOpt}(r, \mu) = 0$. 

\begin{remark}
    As we aim to analyze the transferability of rewards to new transition laws $P\in\Delta_{\Scal}^{\SAcal}$, it will often be useful to explicitly specify the dependency on $P$. We do so by adding a subscript -- e.g. we write $\Mcal_P$, $\RL_P$, and $\mathsf{SubOpt}_P$. However, for better readability, we drop these subscripts whenever there is no potential for confusion.
\end{remark}
\vspace{-0.cm}
\paragraph{Inverse reinforcement learning}
Given a dataset of trajectories sampled from an expert $\muE$ that is optimal for some reward $\rE$, the goal in IRL is to recover a reward $\rhat\in\Rcal$, within a predefined reward class $\Rcal\subseteq \R^{\SAcal}$, such that the expert is optimal for $\rhat$. That is, ideally, we aim to find a reward in the feasible reward set 
\begin{equation}\label{eq:feasible_reward_set_def}
    \IRL(\muE) \defeq \bc{r\in\Rcal: \muE\in\RL(r)}.
\end{equation}
However, since we don't have direct access to the expert's policy but only to a finite set of demonstrations, the best we can hope for is an algorithm that with high probability outputs a reward $\rhat\in\Rcal$ such that $\mathsf{SubOpt}(\rhat, \muE)$ is small -- i.e. an algorithm that is PAC \citep{syed2007game}.
\vspace{-0.cm}
\paragraph{Reward equivalence}
The reward corresponding to an optimal occupancy is not unique. For example, all rewards in the affine subspace $r + \Ucal$, where $\Ucal\defeq \ima(E-\gamma P)$ is the subspace of so-called potential shaping transformations, correspond to the same optimal occupancy measure \citep{ng1999policy}. From a geometric perspective, the subspace $\mathcal{U}=\text{im}(E-\gamma P)$ lies perpendicular to the set of occupancy measures $\mathcal{M}$. Therefore, adding an element of $\mathcal{U}$ to the reward leaves the performance difference between any two occupancy measures invariant. Hence, it is often convenient to consider these rewards as equivalent \citep{pmlr-v139-kim21c} and to measure distances between rewards in the resulting quotient space. Given a linear subspace $\Vcal \subset \R^{\SAcal}$, the quotient space $\R^{\SAcal}/\Vcal$ is the set of all equivalence classes $[r]_{\Vcal} \defeq \bc{r'\in \R^{\SAcal} : r' - r \in \Vcal}$, which is itself a vector space with addition and multiplication operation defined by $[r]_{\Vcal} + [r']_{\Vcal} = [r+r']_{\Vcal}$ and $c[r]_{\Vcal} = [cr]_{\Vcal}$ for $c\in\R$. Intuitively, $\mathbb{R}^{\mathcal{S} \times \mathcal{A}}/\mathcal{V}$ is the vector space obtained by collapsing $\mathcal{V}$ to zero, or in other words, it is isomorphic to the orthogonal complement of $\mathcal{V}$. We endow $\R^{\SAcal}/\Vcal$ with the quotient norm $\norm{[r]_{\Vcal}}_2 \defeq \min_{v\in \Vcal}\norm{r+ v}_2 = \norm{\Pi_{\Vcal^\perp}r}_2$ and we say that $r$ and $r'$ are close in $\R^{\SAcal}/\Vcal$ if $\norm{[r]_{\Vcal}-[r']_{\Vcal}}_2$ is small. Moreover, the expert's reward is said to be identifiable up to some equivalence class $[\cdot]_{\Vcal}$ if $\IRL(\muE)\subseteq [\rE]_{\Vcal}$. In this paper, we will consider the equivalence relations induced by constant shifts, i.e., $\Vcal=\ones\defeq\bc{r\in\R^{\SAcal}: r(s,a)=c\in\R}$, and by potential shaping transformations, i.e., $\Vcal=\Ucal$. Note that since $\ones$ is a subspace of $\Ucal$ and $\Ucal$ is $|\Scal|$-dimensional, $[r]_{\ones}$ is a strict subset of $[r]_{\Ucal}$ whenever $|\Scal| > 1$.
\vspace{-0.cm}
\section{Transferability}\label{sec:transferability} 
\vspace{-0.cm}
In this section, we present our main results on transferability in IRL. To this end, we first introduce the problem of learning $\varepsilon$-transferable rewards from multiple experts acting in different environments.
\vspace{-0.cm}
\subsection{Problem formulation}
\vspace{-0.cm}
Let $\Rcal\subseteq\R^{\SAcal}$ be a compact reward class, and suppose we are given access to $K$ expert data sets, 
\small
\begin{equation}
\DE_k=\bc{\br{s_0^{k,i}, a_0^{k,i},\hdots,s_{\HE-1}^{k,i}, a_{\HE-1}^{k,i}}}_{i=0}^{\NE-1},\quad k=0,\hdots, K-1,
\end{equation}
\normalsize
consisting of trajectories sampled independently from the experts $\muE_{P^0}, \hdots, \muE_{P^{K-1}}$. Each expert is optimal for the same unrevealed reward $\rE\in\Rcal$, but under different transition laws, $P^0, \hdots, P^{K-1}$. Our goal is to recover a reward $\rhat\in\Rcal$ that is transferable across a set of transition laws $\Pcal\subseteq\Delta_{\Scal}^{\SAcal}$. Specifically, the optimal occupancy measure corresponding to $\rhat$ should remain approximately optimal for $\rE$ under every transition law in $\Pcal$. This yields the following definition of $\varepsilon$-transferability.
\vspace{-0.cm}
\begin{restatable}[$\varepsilon$-transferability]{definition}{epstransferability}
\label{def:eps_transferability}%
Fix some $\varepsilon>0$. We say that $\rhat$ is $\varepsilon$-transferable to some set of transition laws $\Pcal\subseteq\Delta_{\Scal}^{\SAcal}$ if $\mathsf{SubOpt}_P(\rE, \RL_P(\rhat)) \leq \varepsilon$ for all $P\in\Pcal$. We say that $\rhat$ is exactly transferable to $\Pcal$ if it is $\varepsilon$-transferable to $\Pcal$ with $\varepsilon=0$.
\end{restatable}
\vspace{-0.cm}
The error margin of $\varepsilon$ is crucial, as exact transferability is unrealistic when learning from finite expert data. Moreover, note that Definition~\ref{def:eps_transferability} is a definition of uniform transferability, as it requires $\rhat$ to be $\varepsilon$-transferable to any $P\in\Pcal$ with the same fixed $\varepsilon$.
In the following, we will analyze the transferability of a reward $\rhat$ for which all experts are $\epsilonhat$-optimal for some $\epsilonhat>0$. That is,
\begin{equation}\label{eq:reward_opt}
    \mathsf{SubOpt}_{P^k}(\rhat, \muE_{P^k}) \leq \epsilonhat, \quad k=0,\hdots, K-1.
\end{equation}
In particular, we aim to establish appropriate conditions for choosing $\epsilonhat$ so as to guarantee $\varepsilon$-transferability to some set of transition laws $\Pcal$. In Section~\ref{sec:algorithm}, we will then provide an IRL algorithm that, with high probability, outputs a reward $\rhat$ such that \eqref{eq:reward_opt} holds.
\begin{remark}
As discussed in Appendix~\ref{app:sec:suboptimal_experts}, the assumption of perfect expert optimality with respect to $\rE$ can be relaxed to allow for a misspecification error. All our results remain applicable in this setting but include an additional error term due to the experts' suboptimality.
\end{remark}
\vspace{-0.cm}
\subsection{Related work}
\vspace{-0.cm}
Most previous work has focused on reward identifiability. For a single expert, \citet{cao2021identifiability, skalse2023invariance, pmlr-v202-schlaginhaufen23a} show that under Assumption~\ref{ass:steep_regularization} (steepness) the feasible reward set \eqref{eq:feasible_reward_set_def} can be expressed as
\begin{equation}\label{eq:feasible_reward_set_explicit}
    \IRL(\muE) = \br{\nabla \occreg(\muE) + \Ucal}\cap\Rcal = [\rE]_{\Ucal}\cap\Rcal.
\end{equation}
In other words, steepness ensures that the expert's reward is identifiable up to potential shaping. To identify the reward up to a constant, we can either restrict the reward class, e.g. to state-only rewards as explored by \citet{amin2017repeated}, or learn from multiple experts \citep{cao2021identifiability, rolland2022identifiability}. In particular, when we are given access to two experts, $\muE_{P^0}$ and $\muE_{P^1}$, we can identify the experts' reward up to the intersection
\begin{equation}
    \IRL_{P^0}(\muE_{P^0}) \cap  \IRL_{P^1}(\muE_{P^1}) = [\rE]_{\Ucal_{P^0}}\cap [\rE]_{\Ucal_{P^1}}\cap \Rcal = \br{\rE + \Ucal_{P^0}\cap\Ucal_{P^1}}\cap\Rcal.
\end{equation}
That is, for the unrestricted reward class, $\Rcal=\R^{\SAcal}$, the reward is identifiable up to a constant if and only if $\Ucal_{P^0}\cap\Ucal_{P^1}=\ones$. Or equivalently, if and only if the rank condition
\begin{equation}\label{eq:rank_cond}
    \rank\br{\myvec{E-\gamma P^0, && E-\gamma P^1}} = 2|\Scal| - 1,
\end{equation}
is satisfied \citep{rolland2022identifiability}. Moreover, \citet{pmlr-v202-schlaginhaufen23a} show that identifying the expert's reward up to a constant is a necessary and sufficient condition for exact transferability to any full-dimensional set $\Pcal\subseteq \Delta_{\Scal}^{\SAcal}$ (a set $\Pcal$ whose interior, with respect to the subspace topology on $\Delta_{\Scal}^{\SAcal}$ \citep{bourbaki1966elements}, is non-empty).
\vspace{-0.1cm}
\paragraph{Limitations}
The above results assume perfect access to the expert's policy, which isn't realistic. In practice, we can only learn a reward for which the experts are approximately optimal. In Example~\ref{ex1} below, we show that under approximate optimality of the experts, the learned reward can perform very poorly in a new environment, even if the rank condition in Equation~\eqref{eq:rank_cond} is satisfied.

\begin{example}\label{ex1}
We consider a two-state, two-action MDP with $\Scal=\Acal=\bc{0,1}$, uniform initial state distribution, discount rate $\gamma = 0.9$, and Shannon entropy regularization $\polreg = - \Hcal$ (see Appendix~\ref{app:sec:regularizers}). Suppose the expert reward is $\rE(s,a) = \ind\{s=1\}$ and consider the transition laws, $P^0$ and $P^1$, defined by $P^0(0|s, a) = 0.75$ and $P^1(0|s,a) = 0.25 + \beta\cdot\ind\bc{s=0,a=0}$ for some $\beta\in[0,0.75]$. Also, consider the two experts $\muE_{P^0}=\RL_{P^0}(\rE)$ and $\muE_{P^1}=\RL_{P^1}(\rE)$, and suppose we recovered the reward $\rhat(s,a) = - \rE$. Then, as detailed in Appendix~\ref{app:sec:ex1}, the following holds: 1) We have $\mathsf{SubOpt}_{P^0}(\rhat, \muE_{P^0})=0$ and $\mathsf{SubOpt}_{P^1}(\rhat, \muE_{P^1})= \Ocal(\beta)$. That is, for small $\beta$, the reward $\rhat$ is a good solution to the IRL problem, as both experts are approximately optimal under $\rhat$. 2) The rank condition \eqref{eq:rank_cond} between $P^0$ and $P^1$ is satisfied for any $\beta>0$. 3) For a new transition law $P$ defined by $P(0|s,a)=\ind\bc{s=1,a=0}$, we have $\mathsf{SubOpt}_{P}(\rE, \RL_P(\rhat))\approx 4.81$, i.e. $\RL_P(\rhat)$ performs poorly under the experts' reward.
\end{example}
\vspace{-0.1cm}
\vspace{-0.cm}
\subsection{Theoretical insights}
\vspace{-0.1cm}
To establish a sufficient condition for $\varepsilon$-transferability, our goal is to bound the suboptimality of an optimal occupancy measure, $\RL(r)$, for some reward $r'$, in terms of reward distances measured in the quotient space $\R^{\SAcal}/\Ucal$. To this end, we first establish the relationship between the suboptimality in Equation~\eqref{eq:subopt} and the Bregman divergence corresponding to the occupancy measure regularization.
\vspace{-0.1cm}
\paragraph{Bregman divergences}
The Bregman divergence \citep{teboulle1992entropic} associated to $\occreg$ is defined as
\begin{equation}\label{eq:bregman}
    D_{\occreg}(\mu, \mu') = \occreg(\mu) - \occreg(\mu') - \ip{\nabla \occreg(\mu')}{\mu - \mu'}.
\end{equation}
\vspace{-0.1cm}
\begin{restatable}{proposition}{bregmandivergence}
    \label{prop:bregman_divergence}%
    Under Assumptions~\ref{ass:steep_regularization} and \ref{ass:occ_lower_bound}, we have $\mathsf{SubOpt}(r', \mu) = D_{\occreg}(\mu, \RL(r'))$ for any $\mu\in\Mcal$.
\end{restatable}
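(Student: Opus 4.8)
The plan is to unfold both sides from their definitions and reduce the claim to a first-order stationarity condition, which holds precisely because the optimal occupancy measure lies in the relative interior of $\Mcal$. First I would set $\mu^\star \defeq \RL(r')$, the unique maximizer of $J(r',\cdot)$ over $\Mcal$ guaranteed by strict convexity of $\occreg$. Using the definition of suboptimality in \eqref{eq:subopt} together with $J(r',\nu)=\ip{r'}{\nu}-\occreg(\nu)$, a direct expansion gives
\[
\ell(r',\mu) = J(r',\mu^\star) - J(r',\mu) = \occreg(\mu) - \occreg(\mu^\star) - \ip{r'}{\mu - \mu^\star}.
\]
Comparing this with the definition \eqref{eq:bregman} of $D_{\occreg}(\mu,\mu^\star)$, the two expressions coincide for every $\mu\in\Mcal$ if and only if the gradient $g \defeq r' - \nabla\occreg(\mu^\star) = \nabla_\mu J(r',\mu^\star)$ satisfies $\ip{g}{\mu - \mu^\star} = 0$ for all $\mu\in\Mcal$.

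The crux is therefore to upgrade the variational inequality coming from optimality to an exact equality. Since $\mu^\star$ maximizes the concave objective $J(r',\cdot)$ over the convex set $\Mcal$, first-order optimality yields $\ip{g}{\mu - \mu^\star}\le 0$ for every $\mu\in\Mcal$. Here I would use that $\occreg$ is differentiable at $\mu^\star$, which holds because Assumptions~\ref{ass:steep_regularization} and \ref{ass:occ_lower_bound} force $\mu^\star$ to be non-vanishing, so that $\pi^{\mu^\star}$ lies in $\interior\Xcal$ where $\polreg$ is differentiable. The same non-vanishing property places $\mu^\star$ in $\relint\Mcal$. Consequently, for each fixed $\mu\in\Mcal$ the reflected point $\tilde\mu \defeq \mu^\star - \epsilon(\mu-\mu^\star)$ remains feasible for all sufficiently small $\epsilon>0$. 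Applying the variational inequality at $\tilde\mu$ gives $\ip{g}{\tilde\mu - \mu^\star} = -\epsilon\,\ip{g}{\mu-\mu^\star}\le 0$, hence $\ip{g}{\mu-\mu^\star}\ge 0$; combined with $\ip{g}{\mu-\mu^\star}\le 0$ this forces $\ip{g}{\mu - \mu^\star}=0$, completing the reduction and the proof.

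The main obstacle I anticipate is the careful justification of this last step: that $\mu^\star\in\relint\Mcal$ and that $\occreg$ is genuinely differentiable there, so that stationarity is an equality rather than merely an inequality. This is exactly where the steepness assumption (preventing $\pi^{\mu^\star}$ from reaching $\relbd\Delta_\Acal$) and the exploration assumption (keeping $\nu(s)\ge\numin>0$) are indispensable. Without interiority, the reflected feasible point $\tilde\mu$ need not exist and one would only obtain the one-sided bound $\ell(r',\mu)\ge D_{\occreg}(\mu,\RL(r'))$, so the clean identity would fail; I would thus flag the invocation of these two assumptions as the load-bearing part of the argument.
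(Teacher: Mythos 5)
Your proof is correct, but it takes a genuinely different route from the paper's. The paper's proof is a three-line computation: it writes $\ell(r',\mu) = \ip{r'}{\mu^\star - \mu} - \occreg(\mu^\star) + \occreg(\mu)$ with $\mu^\star = \RL(r')$, and then replaces $r'$ by $\nabla\occreg(\mu^\star)$ in the inner product by invoking two structural facts: the characterization $\IRL(\mu^\star) = (\nabla\occreg(\mu^\star) + \Ucal)\cap\Rcal$ from Equation~\eqref{eq:feasible_reward_set_explicit} (imported from prior work under the steepness assumption), which gives $r' - \nabla\occreg(\mu^\star)\in\Ucal$, and the observation that $\mu - \mu^\star\in\Ucal^\perp$ because both occupancy measures satisfy the same affine Bellman flow constraints. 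You instead derive the required orthogonality $\ip{r' - \nabla\occreg(\mu^\star)}{\mu - \mu^\star} = 0$ from first principles: the one-sided variational inequality at the maximizer, upgraded to an equality via the reflected feasible point $\mu^\star - \epsilon(\mu - \mu^\star)$, which exists precisely because steepness and exploration force $\mu^\star$ to be componentwise positive and hence in $\relint\Mcal$, where $\occreg$ is differentiable. Both arguments rest on the same underlying facts (strict positivity of the optimal occupancy measure and differentiability there), but yours is self-contained and avoids citing the feasible-set characterization, at the cost of proving only the orthogonality to $\Mcal - \mu^\star$ rather than the stronger statement $r' - \nabla\occreg(\mu^\star)\in\Ucal$; the paper's route makes that stronger duality between reward equivalence classes and occupancy measures explicit, which it reuses later (e.g., in Lemma~\ref{lem:bregman_bound}). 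Your closing remark that without interiority one would retain only the inequality $\ell(r',\mu)\ge D_{\occreg}(\mu,\RL(r'))$ is a correct and worthwhile observation that the paper does not state at this point.
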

\vspace{-0.cm}
Proposition~\ref{prop:bregman_divergence} above demonstrates that the suboptimality of an occupancy measure $\mu$ for the reward $r'$ coincides with the Bregman divergence between $\mu$ and the optimal occupancy measure under $r'$. This generalizes \citep[Lemma 26]{mei2020global} from entropy regularization to any steeply regularized MDP. The proof is presented in Appendix~\ref{app:sec:suboptimality_bounds}.

\vspace{-0.cm}
\paragraph{Reward approximation}
Next, we show that under strong convexity and local Lipschitz gradients, the Bregman divergence between two optimal occupancy measures is bounded in terms of reward distances in $\R^{\SAcal}/\Ucal$. 
\vspace{-0.cm}
\begin{restatable}[Regularity]{assumption}{regularity}
\label{ass:regularity}%
Suppose the following holds:
\vspace{-0.cm}
\begin{enumerate}[a)]
    \item The regularizer $\occreg$ is $\eta$-strongly convex over the set of occupancy measures $\Mcal$. That is, we have
        \begin{equation}\label{eq:strong_convexity}
            \occreg(\mu') \geq \occreg(\mu) + \ip{\nabla\occreg(\mu)}{\mu'-\mu} + \dfrac{\eta}{2}\norm{\mu'-\mu}_2^2,\quad \forall \mu, \mu'\in\Mcal.
        \end{equation}
    \vspace{-0.cm}
    \item The gradient $\nabla\occreg$ is locally Lipschitz continuous over $\relint\Mcal$. That is, for any closed convex subset $\mathcal{K}\subset\relint\Mcal$ there exists $L_{\mathcal{K}}>0$ such that
\begin{equation}\label{eq:lipschitz_gradients}
    \norm{\nabla\occreg(\mu)-\nabla\occreg(\mu')}_2 \leq L_{\mathcal{K}} \norm{\mu -\mu'}_2, \quad \forall \mu, \mu' \in\mathcal{K}.
\end{equation}
\end{enumerate}

\end{restatable}
\vspace{-0.cm}
We will show later that Assumption~\ref{ass:regularity} is met for Shannon and Tsallis entropy regularization (see Proposition~\ref{prop:strong_convexity_lipschitz_gradients}). Under the above assumption, the following lemma establishes the desired upper and lower bound on the Bregman divergence between two optimal occupancy measures with respect to reward distances measured in $\R^{\SAcal}/\Ucal$.
\vspace{-0.cm}
\begin{restatable}{lemma}{bregmanbound}
    \label{lem:bregman_bound}%
    Suppose Assumptions~\ref{ass:steep_regularization},\ref{ass:occ_lower_bound}, and \ref{ass:regularity} hold, and let $r,r'\in\Rcal$. Then, we have
    \begin{equation}\label{eq:bregman_bound}
        \frac{\sigma_{\Rcal}}{2}\norm{[r]_{\Ucal}- [r']_{\Ucal}}_{2}^2 \leq \mathsf{SubOpt}(r', \RL(r)) = D_{\occreg}\br{\RL(r), \RL(r')} \leq \frac{1}{2\eta}\norm{[r]_{\Ucal}- [r']_{\Ucal}}^2_{2},
    \end{equation}
    for some problem-dependent constant $\sigma_{\Rcal}>0$.
\end{restatable}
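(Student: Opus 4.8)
The plan is to prove the three parts separately: the central equality is immediate from Proposition~\ref{prop:bregman_divergence} applied with $\mu = \RL(r)$, so the work lies in the two inequalities. Throughout, write $\mu \defeq \RL(r)$ and $\mu' \defeq \RL(r')$. First I would record two structural facts that drive both bounds. By the characterization of the feasible reward set in Equation~\eqref{eq:feasible_reward_set_explicit} (which is where steepness, Assumption~\ref{ass:steep_regularization}, enters), each reward equals the occupancy-regularizer gradient at its optimal occupancy up to potential shaping; applied to $\mu$ and $\mu'$ this gives $r - \nabla\occreg(\mu) \in \Ucal$ and $r' - \nabla\occreg(\mu') \in \Ucal$. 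Moreover, since $\mu, \mu' \in \Mcal$ both satisfy $(E - \gamma P)^\top \mu = (1-\gamma)\nu_0 = (E-\gamma P)^\top \mu'$, their difference lies in the null space of $(E - \gamma P)^\top$, i.e. $\mu - \mu' \in \Ucal^\perp$ because $\Ucal = \ima(E - \gamma P)$. Combining these, the $\Ucal$-components are annihilated by the pairing with $\mu - \mu'$, giving $\ip{\nabla\occreg(\mu) - \nabla\occreg(\mu')}{\mu - \mu'} = \ip{r - r'}{\mu - \mu'} = \ip{\Pi_{\Ucal^\perp}(r - r')}{\mu - \mu'}$, and I recall that $\norm{[r]_{\Ucal} - [r']_{\Ucal}}_2 = \norm{\Pi_{\Ucal^\perp}(r - r')}_2$.

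For the upper bound I would pass to the optimal value function $\phi(r) \defeq \max_{\mu \in \Mcal} J(r, \mu) = (\occreg + \iota_{\Mcal})^*(r)$, its convex conjugate form. A short computation using that $\mu$ is the unique maximizer for $r$ shows $\ell(r', \RL(r)) = \phi(r') - \phi(r) - \ip{\nabla\phi(r)}{r' - r} = D_{\phi}(r', r)$, with $\nabla\phi(r) = \mu$ by Danskin's theorem. Since $\occreg$ is $\eta$-strongly convex on $\Mcal$ (strong convexity in Assumption~\ref{ass:regularity}), $\occreg + \iota_{\Mcal}$ is $\eta$-strongly convex and hence its conjugate $\phi$ is $\tfrac{1}{\eta}$-smooth, yielding $D_\phi(r', r) \leq \tfrac{1}{2\eta}\norm{r' - r}_2^2$. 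Finally, because $\ip{u}{\mu}$ is constant over $\Mcal$ for $u \in \Ucal$, the function $\phi$ differs from $\phi \circ \Pi_{\Ucal^\perp}$ by an affine term, so $D_\phi$ is unchanged under replacing $r, r'$ by their projections onto $\Ucal^\perp$; this turns $\norm{r' - r}_2$ into $\norm{\Pi_{\Ucal^\perp}(r' - r)}_2 = \norm{[r]_{\Ucal} - [r']_{\Ucal}}_2$ and gives the claimed $\tfrac{1}{2\eta}$ bound.

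For the lower bound I would instead combine strong convexity with the local Lipschitz gradient condition. Strong convexity gives $D_{\occreg}(\mu, \mu') \geq \tfrac{\eta}{2}\norm{\mu - \mu'}_2^2$, so it suffices to lower bound $\norm{\mu - \mu'}_2$ by the reward distance. Projecting the gradient identity above onto $\Ucal^\perp$ yields $\norm{[r]_{\Ucal} - [r']_{\Ucal}}_2 = \norm{\Pi_{\Ucal^\perp}(\nabla\occreg(\mu) - \nabla\occreg(\mu'))}_2 \leq \norm{\nabla\occreg(\mu) - \nabla\occreg(\mu')}_2$. I then want a compact convex set $\Kcal \subset \relint\Mcal$ containing $\mu$ and $\mu'$ so that the local Lipschitz gradient in Assumption~\ref{ass:regularity} applies with some $L_{\Kcal}$, giving $\norm{\nabla\occreg(\mu) - \nabla\occreg(\mu')}_2 \leq L_{\Kcal}\norm{\mu - \mu'}_2$. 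Chaining these gives $D_{\occreg}(\mu, \mu') \geq \tfrac{\eta}{2 L_{\Kcal}^2}\norm{[r]_{\Ucal} - [r']_{\Ucal}}_2^2$, so that $\sigma_{\Rcal} = \eta / L_{\Kcal}^2$.

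The main obstacle is constructing $\Kcal$ and thereby certifying $\sigma_{\Rcal} > 0$ uniformly over $r, r' \in \Rcal$. I would take $\Kcal \defeq \conv(\RL(\Rcal))$: the map $r \mapsto \RL(r) = \nabla\phi(r)$ is Lipschitz by the $\tfrac{1}{\eta}$-smoothness above, hence continuous, so $\RL(\Rcal)$ is compact (as $\Rcal$ is compact) and so is its convex hull in finite dimensions. The crucial point is that $\Kcal \subset \relint\Mcal$: steepness (Assumption~\ref{ass:steep_regularization}) forces every optimal policy, and together with the exploration lower bound (Assumption~\ref{ass:occ_lower_bound}) every optimal occupancy measure, to stay uniformly bounded away from $\relbd\Mcal$, so the compact set $\RL(\Rcal)$ and its convex hull avoid the relative boundary. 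This guarantees a finite $L_{\Kcal}$ and hence $\sigma_{\Rcal} > 0$. Verifying this uniform interiority, rather than the algebra of the two bounds, is the delicate step.
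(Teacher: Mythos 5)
Your proof is correct, and it splits into a part that mirrors the paper and a part that is genuinely different. The equality and the upper bound follow essentially the paper's route: the paper likewise identifies $\ell(r',\RL(r))$ with the Bregman divergence $D_{f^*}(r',r)$ of the conjugate $f^*=(\occreg+\delta_{\Mcal})^*$ and invokes the duality between $\eta$-strong convexity of $\occreg+\delta_{\Mcal}$ and $1/\eta$-smoothness of $f^*$ in the quotient norm (Proposition~\ref{prop:dual_strong_convexity_smoothness}(a)); your explicit reduction to the quotient norm via the invariance of $D_{f^*}$ under $\Ucal$-shifts just spells out that step. Where you diverge is the lower bound. The paper proves dual strong convexity of $f^*$ directly, following a Goebel--Rockafellar argument: it restricts the supremum defining $f^*(r')$ to an enlarged set $\Kcal_\epsilon=\conv(\RL(\Rcal))+\epsilon(\Bcal\cap\operatorname{aff}\Mcal)\subset\relint\Mcal$, applies the descent lemma there, and plugs in the perturbation $\bar\mu=\mu+\alpha\,\Pi_{\Ucal^\perp}(r'-r)$, obtaining $\sigma_{\Rcal}=2(\alpha-L\alpha^2/2)$ with $\alpha=\tau/(LD)$. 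You instead chain primal strong convexity, $D_{\occreg}(\mu,\mu')\geq\tfrac{\eta}{2}\norm{\mu-\mu'}_2^2$, with the inverted Lipschitz bound $\norm{[r]_{\Ucal}-[r']_{\Ucal}}_2\leq\norm{\nabla\occreg(\mu)-\nabla\occreg(\mu')}_2\leq L_{\Kcal}\norm{\mu-\mu'}_2$ on $\Kcal=\conv(\RL(\Rcal))$, getting $\sigma_{\Rcal}=\eta/L_{\Kcal}^2$. Your route is more elementary -- no enlargement of $\Kcal$ is needed since you only evaluate $\nabla\occreg$ at the two optimizers, whereas the paper must keep its perturbed point feasible -- and both arguments hinge on exactly the fact you flag as delicate: $\conv(\RL(\Rcal))$ is a compact convex subset of $\relint\Mcal$, which indeed follows from steepness, Assumption~\ref{ass:occ_lower_bound}, continuity of $\RL$, compactness of $\Rcal$, and the observation that an entrywise lower bound on occupancy measures survives convex combinations (cf.\ Proposition~\ref{prop:lines2lines}). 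The price of your route is a quantitatively weaker constant: $\eta/L_{\Kcal}^2$ scales like $\pimin^2$ for Shannon entropy, versus roughly $\pimin$ for the paper's $\sigma_{\Rcal}$, so the explicit constants in Proposition~\ref{prop:strong_convexity_lipschitz_gradients} and Corollary~\ref{cor:explicit_transferability} would degrade if your $\sigma_{\Rcal}$ were substituted; for the lemma as stated, which only asks for some $\sigma_{\Rcal}>0$, this is immaterial.
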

\vspace{-0.0cm}
\begin{remark}
    The proof of Lemma~\ref{lem:bregman_bound} hinges on the duality between equivalence classes of rewards and optimal occupancy measures (see Appendix~\ref{app:sec:convex_background}). The main idea is to leverage duality of Bregman divergences, and a dual smoothness and strong convexity result in Proposition~\ref{prop:dual_strong_convexity_smoothness}. A key challenge arises because, by Assumption~\ref{ass:steep_regularization}, the regularizer cannot be globally smooth. This results in a problem-dependent dual strong convexity constant $\sigma_{\Rcal}$ \citep{goebel2008local}. In Proposition~\ref{prop:strong_convexity_lipschitz_gradients}, we will provide a lower bound on $\sigma_{\Rcal}$ for the specific choices of Shannon and Tsallis entropy regularization. For more details, we refer to the full proof in Appendix~\ref{app:sec:suboptimality_bounds}.
\end{remark}
\vspace{-0.cm}
The above lemma has two key implications: First, the lower bound in \eqref{eq:bregman_bound} implies that if we recover a reward $\rhat$ for which all experts are approximately optimal, then the distance between $\rhat$ and $\rE$ can be bounded in the quotient spaces $\R^{\SAcal}/\Ucal_{P^k}$. Second, the upper bound shows that to control the performance of $\RL_P(\rhat)$ in a new environment $P$, we need to tightly bound the distance between $\rhat$ and $\rE$ in $\R^{\SAcal}/\Ucal_{P}$. As distances in $\R^{\SAcal}/\Ucal_{P}$ are bounded by distances in $\R^{\SAcal}/\ones$, this can be achieved by bounding the distance between $\rhat$ and $\rE$ in $\R^{\SAcal}/\ones$. However, revisiting Example~\ref{ex1} in light of Lemma~\ref{lem:bregman_bound} shows that even though $\rhat$ and $\rE$ are close in $\R^{\SAcal}/\Ucal_{P^k}$, this does not guarantee their proximity in $\R^{\SAcal}/\ones$ and $\R^{\SAcal}/\Ucal_{P}$.
\vspace{-0.cm}
\renewcommand{\thmcontinues}[1]{continued}
\begin{example}[continues=ex1]
    Recall the definition $\Ucal_{P^k}=\ima(E-\gamma P^k)$. Given that in Example~\ref{ex1} we have $\mathsf{SubOpt}_{P^0}(\rhat, \muE_{P^0})=0$ and $\mathsf{SubOpt}_{P^1}(\rhat, \muE_{P^1})= \Ocal(\beta)$, Lemma~\ref{lem:bregman_bound} ensures that $\rhat$ and $\rE$ coincide in $\R^{\SAcal}/\Ucal_{P^{0}}$, and for small $\beta$, they are close in $\R^{\SAcal}/\Ucal_{P^{1}}$. However, as illustrated in Figure\ref{fig:reward_spaces}(a) this doesn't ensure that $\rhat$ and $\rE$ are close in $\R^{\SAcal}/\ones$ and $\R^{\SAcal}/\Ucal_{P}$. In particular, it can be computed that $\norm{[\rhat]_{\Ucal_P}- [\rE]_{\Ucal_P}}_2\approx 1.51$, which by Lemma~\ref{lem:bregman_bound} explains the poor transferability to $P$.\looseness-1
\end{example}
\subsection{Sufficient conditions for transferability}
\vspace{-0.cm}
With Lemma~\ref{lem:bregman_bound} in place, we are set to present our results on $\varepsilon$-transferability. Example~\ref{ex1} indicates that a sufficient condition for learning transferable rewards from experts ($K=2$) should not rely solely on the binary rank condition \eqref{eq:rank_cond}, which only checks if $\Ucal_{P^{0}}\cap\Ucal_{P^{1}}=\ones$. Instead, we should consider the relative orientation between $\Ucal_{P^0}$ and $\Ucal_{P^1}$. To formalize this, we need to introduce the concept of principal angles between linear subspaces, as outlined in Definition~\ref{def:principal_angles} below.
\vspace{-0.cm}
\begin{restatable}[Principal angles \citep{galantai2013projectors}]{definition}{principalangles}
\label{def:principal_angles}%
Let $\Vcal,\Wcal\subseteq \R^n$ be two subspaces of dimension $m\leq n$. The principal angles $0 \leq \theta_1(\Vcal,\Wcal) \leq \hdots \leq \theta_m(\Vcal,\Wcal) \eqdef \theta_{\max}(\Vcal,\Wcal) \leq\pi/2$ between $\Vcal$ and $\Wcal$ are defined recursively via
\begin{align*}
    \cos(\theta_i(\Vcal,\Wcal)) = &\max_{v\in \Vcal, w\in \Wcal} \ip{v}{w}\; \text{s.t.} \; \norm{v}_2 = \norm{w}_2 = 1, \, \ip{v}{v_j}=\ip{w}{w_j}=0, \,j=1,\hdots, i-1,
\end{align*}
where $v_j, w_j$ are the maximizers corresponding to the angle $\theta_j$. For two transition laws $P,P'$, we define $\theta_i(P, P')\defeq \theta_i(\Ucal_{P}, \Ucal_{P'})$ and refer to $\theta_i(P, P')$ as the $i$-th principal angles between $P$ and $P'$.\looseness-1
\end{restatable}
\vspace{-0.cm}
Principal angles are the natural generalization of angles between two lines or planes to higher dimensional subspaces. For principal angles between transition laws, we have the following proposition.
\vspace{-0.cm}
\begin{restatable}{proposition}{maxangle}
    \label{prop:maxangle_bound}
    Let $P,P'\in\Delta^{\SAcal}_{\Scal}$ and $H_\gamma = 1/(1-\gamma)$. Then, we have $\theta_1(P,P') = 0$ and $\sin\br{\theta_{\max}(P,{P'})}\leq \gamma  H_\gamma\sqrt{|\Scal|/|\Acal|} \snorm{P-P'}$, where $\snorm{\cdot}$ denotes the spectral norm.
\end{restatable}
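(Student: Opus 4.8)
The plan is to treat the two claims separately, since the first is an exact statement about a common direction and the second is a quantitative perturbation bound. For $\theta_1(P,P')=0$, I would show that the constant vector $\ones$ lies in every $\Ucal_P$. Taking $v=\tfrac{1}{1-\gamma}\ones\in\R^{\Scal}$ and using that $P$ is row-stochastic gives $(E-\gamma P)v = \tfrac{1}{1-\gamma}(1-\gamma)\ones = \ones$, so $\ones\in\Ucal_P\cap\Ucal_{P'}$. Since $\cos\theta_1$ is the maximal inner product between unit vectors of the two subspaces, the common direction $\ones/\norm{\ones}_2$ achieves $\cos\theta_1=1$, i.e.\ $\theta_1=0$.

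For the second claim I would use the standard identity that, for two subspaces of \emph{equal} dimension, $\sin\theta_{\max}(\Vcal,\Wcal)=\norm{\Pi_{\Wcal^\perp}\Pi_{\Vcal}}=\max_{u\in\Vcal,\,\norm{u}_2=1}\operatorname{dist}(u,\Wcal)$ \citep{galantai2013projectors}; this applies here because $E-\gamma P$ is injective, so $\dim\Ucal_P=\dim\Ucal_{P'}=|\Scal|$ as noted earlier. The key idea is then to exhibit, for each unit $u=(E-\gamma P)v\in\Ucal_P$, a nearby competitor in $\Ucal_{P'}$, namely $(E-\gamma P')v$. The residual telescopes cleanly,
\[
    u-(E-\gamma P')v=\gamma(P'-P)v,
\]
so that $\operatorname{dist}(u,\Ucal_{P'})\leq \gamma\norm{(P-P')v}_2\leq \gamma\snorm{P-P'}\norm{v}_2$, where $\snorm{\cdot}$ is the spectral norm.

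It remains to convert $\norm{v}_2$ back into $\norm{u}_2=\norm{(E-\gamma P)v}_2=1$, which is where the main work lies: I need a coercivity bound $\norm{(E-\gamma P)v}_2\geq (1-\gamma)\sqrt{|\Acal|/|\Scal|}\,\norm{v}_2$ (which also re-proves injectivity). The plan is to argue in the $\infty$-norm first: picking a state $s^\star$ with $|v(s^\star)|=\norm{v}_\infty$, the triangle inequality together with stochasticity of $P$ gives $|(E-\gamma P)v(s^\star,a)|\geq (1-\gamma)\norm{v}_\infty$ for \emph{every} action $a$. Crucially, this holds for all $|\Acal|$ actions at $s^\star$ simultaneously, so summing the squared entries over $a$ yields
\[
    \norm{(E-\gamma P)v}_2^2\ \geq\ |\Acal|(1-\gamma)^2\norm{v}_\infty^2\ \geq\ \tfrac{|\Acal|}{|\Scal|}(1-\gamma)^2\norm{v}_2^2,
\]
using $\norm{v}_\infty\geq\norm{v}_2/\sqrt{|\Scal|}$. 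Combining this with the distance bound and setting $\norm{u}_2=1$ gives $\sin\theta_{\max}(P,P')\leq \gamma\snorm{P-P'}/\big((1-\gamma)\sqrt{|\Acal|/|\Scal|}\big)=\gamma H_\gamma\sqrt{|\Scal|/|\Acal|}\,\snorm{P-P'}$. The subtle point I expect to be the main obstacle is retaining the full factor $|\Acal|$ in the coercivity estimate: a naive bound using a single $(s^\star,a)$ entry loses $\sqrt{|\Acal|}$ and produces a weaker constant, whereas exploiting that the lower bound is uniform over all actions at $s^\star$ is exactly what yields the claimed $\sqrt{|\Scal|/|\Acal|}$ dependence.
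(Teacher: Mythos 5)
Your proof is correct, and both halves match the paper's quantitative content, but your route to the angle bound is genuinely more self-contained. The paper obtains $\sin(\theta_{\max})=\snorm{\Pi_{\Ucal_P}-\Pi_{\Ucal_{P'}}}$ and then invokes a black-box projector perturbation theorem, $\snorm{\Pi_A-\Pi_B}\leq \min\bc{\snorm{A^\dagger},\snorm{B^\dagger}}\snorm{A-B}$ for equal-rank matrices, applied to $A=E-\gamma P$ and $B=E-\gamma P'$; the whole burden then falls on bounding $\snorm{(E-\gamma P)^\dagger}=\sigma_{\min}(E-\gamma P)^{-1}$, which the paper does with essentially the same coercivity estimate as yours (they go through the $1$-norm per action and sum $\norm{(I-\gamma P_a)x}_2^2$ over $a$, you go through the $\infty$-norm at a maximizing state and sum over the $|\Acal|$ actions there; both retain the full $|\Acal|/|\Scal|$ factor and give $\sigma_{\min}\geq(1-\gamma)\sqrt{|\Acal|/|\Acal|^{-1}|\Scal|^{-1}}$, i.e.\ the identical constant). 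What you do differently is replace the cited perturbation theorem by the one-sided identity $\sin\theta_{\max}(\Vcal,\Wcal)=\max_{u\in\Vcal,\norm{u}_2=1}\operatorname{dist}(u,\Wcal)$ for equal-dimensional subspaces and exhibit the explicit competitor $(E-\gamma P')v$, whose residual telescopes to $\gamma(P'-P)v$. This buys an elementary, citation-free argument whose only nontrivial external input is the equal-dimension identity for $\sin\theta_{\max}$ (and you correctly note that injectivity of $E-\gamma P$, which your coercivity bound re-proves, is what guarantees the dimensions agree at $|\Scal|$); the paper's version buys brevity and a statement that would survive even if one only had a bound on one of the two pseudoinverses. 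One small presentational gap: you should make explicit that $\operatorname{dist}(u,\Wcal)$ is symmetric in the two subspaces here (or that the identity holds with either subspace as the domain of the maximization), since $\sin\theta_{\max}$ as defined is symmetric while $\norm{\Pi_{\Wcal^\perp}\Pi_{\Vcal}}$ is a priori not for unequal dimensions; with $\dim\Ucal_P=\dim\Ucal_{P'}=|\Scal|$ established this is standard and the proof goes through.
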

\vspace{-0.1cm}
The proof can be found in Appendix~\ref{app:sec:perturbation_bounds}. The above result shows that while the first principal angle between two transition laws is always zero, all principal angles are small if the transition laws are close to one another. In Example~\ref{ex1}, we have $\sin(\theta_2(P^0, P^1))=\mathcal{O}(\beta)$, indicating that the second and in this case maximal principal angle is small when $\beta$ is small (see Appendix~\ref{app:sec:ex1}). The following result shows that when learning from two experts, the transferability error is directly controlled by the second principal angle between the experts' transition laws.
\vspace{-0.cm}
\begin{restatable}{theorem}{globaltransferability}
\label{thm:global_transferability}%
Let $K=2$, $\theta_2({P^0}, {P^1})>0$, and suppose that Assumptions~\ref{ass:steep_regularization},\ref{ass:occ_lower_bound}, and \ref{ass:regularity} hold. If $\mathsf{SubOpt}_{P^k}(\rhat, \muE_{P^k}) \leq \epsilonhat$ for $k=0,1$, then $\rhat$ is $\varepsilon$-transferable to $\Pcal=\Delta_{\Scal}^{\SAcal}$ with
\begin{equation*}
    \varepsilon = \epsilonhat /\bs{\eta \sigma_{\Rcal}\sin\br{\theta_2({P^0}, {P^1}}/2)^2}.
\end{equation*}
\end{restatable}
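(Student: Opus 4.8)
The plan is to combine the two-sided estimate of Lemma~\ref{lem:bregman_bound} with a spectral estimate for two subspaces, reducing the whole statement to the smallest nontrivial principal angle $\theta_2(P^0,P^1)$. First I would fix an arbitrary $P\in\Delta_{\Scal}^{\SAcal}$ and apply the \emph{upper} bound of Lemma~\ref{lem:bregman_bound} (with $r=\rhat$, $r'=\rE$, under transition law $P$) to obtain $\ell_P(\rE,\RL_P(\rhat))\le\tfrac{1}{2\eta}\norm{[\rhat]_{\Ucal_P}-[\rE]_{\Ucal_P}}_2^2$. Since $\ones\subseteq\Ucal_P$ for every $P$, we have $\Ucal_P^\perp\subseteq\ones^\perp$ and hence $\norm{[\rhat]_{\Ucal_P}-[\rE]_{\Ucal_P}}_2=\norm{\Pi_{\Ucal_P^\perp}(\rhat-\rE)}_2\le\norm{\Pi_{\ones^\perp}(\rhat-\rE)}_2=\norm{[\rhat]_{\ones}-[\rE]_{\ones}}_2$. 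This produces a bound that is \emph{uniform} over all $P$ and reduces the theorem to controlling $\norm{[\rhat]_{\ones}-[\rE]_{\ones}}_2$.

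Next I would extract information on $\rhat-\rE$ from the two optimality hypotheses. Writing $\muE_{P^k}=\RL_{P^k}(\rE)$ and applying the \emph{lower} bound of Lemma~\ref{lem:bregman_bound} (with $r=\rE$, $r'=\rhat$, transition law $P^k$) gives $\tfrac{\sigma_{\Rcal}}{2}\norm{[\rhat]_{\Ucal_{P^k}}-[\rE]_{\Ucal_{P^k}}}_2^2\le\ell_{P^k}(\rhat,\muE_{P^k})\le\epsilonhat$, so that $\norm{\Pi_{\Ucal_{P^k}^\perp}(\rhat-\rE)}_2\le\delta$ with $\delta^2\defeq 2\epsilonhat/\sigma_{\Rcal}$, for $k=0,1$.

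The crux is a purely geometric estimate. Set $\tilde d\defeq\Pi_{\ones^\perp}(\rhat-\rE)$ and $V_k\defeq\Ucal_{P^k}\cap\ones^\perp$, so that $\Ucal_{P^k}=\ones\oplus V_k$ orthogonally and $\Pi_{\Ucal_{P^k}^\perp}(\rhat-\rE)=(\id-\Pi_{V_k})\tilde d$. Because the shared direction $\ones$ already realizes $\theta_1(P^0,P^1)=0$, the smallest principal angle between $V_0$ and $V_1$ inside $\ones^\perp$ equals $\phi\defeq\theta_2(P^0,P^1)>0$. Using that $\id-\Pi_{V_k}$ is a self-adjoint projection,
\[
    \norm{(\id-\Pi_{V_0})\tilde d}_2^2+\norm{(\id-\Pi_{V_1})\tilde d}_2^2=\ip{(2\id-\Pi_{V_0}-\Pi_{V_1})\tilde d}{\tilde d}.
\]
The eigenvalues of $\Pi_{V_0}+\Pi_{V_1}$ restricted to $\ones^\perp$ are $1\pm\cos\theta_i(V_0,V_1)$ together with $0$ on $(V_0+V_1)^\perp$, so the largest is $1+\cos\phi$ and therefore $2\id-\Pi_{V_0}-\Pi_{V_1}\succeq(1-\cos\phi)\id=2\sin^2(\phi/2)\,\id$ on $\ones^\perp$. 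Combining this with the bound $\norm{(\id-\Pi_{V_k})\tilde d}_2\le\delta$ from the previous step yields $2\delta^2\ge 2\sin^2(\phi/2)\norm{\tilde d}_2^2$, i.e. $\norm{[\rhat]_{\ones}-[\rE]_{\ones}}_2^2=\norm{\tilde d}_2^2\le\delta^2/\sin^2(\phi/2)=2\epsilonhat/(\sigma_{\Rcal}\sin^2(\phi/2))$. Substituting into the uniform bound from the first paragraph gives $\ell_P(\rE,\RL_P(\rhat))\le\epsilonhat/(\eta\sigma_{\Rcal}\sin^2(\theta_2(P^0,P^1)/2))$ for every $P\in\Delta_{\Scal}^{\SAcal}$, which is precisely $\varepsilon$-transferability with the claimed $\varepsilon$.

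The main obstacle is this geometric core: a naive triangle-inequality argument that bounds the $V_0$- and $V_1$-components of $\tilde d$ separately only yields a factor $\sin\theta_2$ and is too lossy; obtaining the sharp $\sin(\theta_2/2)$ requires the ``angle-bisector'' worst case, which the spectral identity captures through the smallest eigenvalue $1-\cos\phi=2\sin^2(\phi/2)$ of $2\id-\Pi_{V_0}-\Pi_{V_1}$. A secondary point that needs care is the identification of $\theta_2(P^0,P^1)$ with the smallest principal angle between $V_0$ and $V_1$, that is, stripping off the trivial shared direction $\ones$ responsible for $\theta_1=0$, together with checking that the constants $\eta$ and $\sigma_{\Rcal}$ from Lemma~\ref{lem:bregman_bound} can be applied uniformly across the transition laws $P^0,P^1$ and $P$.
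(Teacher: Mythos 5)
Your proof is correct and follows the same overall skeleton as the paper's: apply the lower bound of Lemma~\ref{lem:bregman_bound} at $P^0,P^1$ to get $\norm{\Pi_{\Ucal_{P^k}^\perp}(\rhat-\rE)}_2\le\sqrt{2\epsilonhat/\sigma_{\Rcal}}$, convert this into a bound on $\norm{[\rhat]_{\ones}-[\rE]_{\ones}}_2$ with the factor $1/\sin(\theta_2/2)$, and finish with the upper bound of Lemma~\ref{lem:bregman_bound} together with $\norm{[\cdot]_{\Ucal_P}}_2\le\norm{[\cdot]_{\ones}}_2$. Where you genuinely diverge is in the geometric core, i.e.\ the content of the paper's Lemma~\ref{lem:two_expert_identifiability}. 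The paper parametrizes the worst case explicitly: it decomposes $\rE-\rhat$ as $x_k+y_k+z$ with $x_k\in\Ucal_{P^k}\cap\ones^\perp$, $y_k\in\Ucal_{P^k}^\perp$, $z\in\ones$, sets up a constrained maximization over unit vectors $u_k,v_k$ and coefficients $\alpha_k,\beta_k$, solves the resulting $2\times2$ linear system, argues the maximum sits at $\beta_0=\beta_1=\epsilonbar$, and simplifies down to $2/(1-\cos\theta_2)$. You instead observe that $\Pi_{\Ucal_{P^k}^\perp}(\rhat-\rE)=(\id-\Pi_{V_k})\tilde d$ with $V_k=\Ucal_{P^k}\cap\ones^\perp$ and invoke the classical spectral fact $\lambda_{\max}(\Pi_{V_0}+\Pi_{V_1})=1+\cos\theta_1(V_0,V_1)$, so that $2\id-\Pi_{V_0}-\Pi_{V_1}\succeq 2\sin^2(\phi/2)\,\id$ on $\ones^\perp$; summing the two squared residuals then gives the same constant in three lines. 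Your identification $\theta_1(V_0,V_1)=\theta_2(P^0,P^1)$ is justified because $\theta_2>0$ forces $\Ucal_{P^0}\cap\Ucal_{P^1}=\ones$, so the first principal pair must be the constant direction. The spectral route is shorter, less error-prone, and makes the sharpness of the $\sin(\theta_2/2)$ constant transparent (it is attained on the eigenvector of $\Pi_{V_0}+\Pi_{V_1}$ for the top eigenvalue, your ``angle-bisector'' direction); the paper's explicit maximization has the minor advantage of not requiring the eigenstructure of sums of projections as an external input. One point you correctly flag but should make explicit in a final write-up is that $\eta$ and $\sigma_{\Rcal}$ must be taken as constants valid uniformly over the transition laws involved ($P^0$, $P^1$, and the target $P$), which is how the paper implicitly treats them as well.
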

\vspace{-0.cm}
\begin{proof}[Sketch of proof]
The main idea of the proof is illustrated in Figure~\ref{fig:reward_spaces}(b). First, it follows from Lemma~\ref{lem:bregman_bound} that $\rhat$ and $\rE$ are $\epsilonbar = \sqrt{2\epsilonhat/\sigma_{\Rcal}}$-close in $\R^{\SAcal}/\Ucal_{P^k}$ for $k=0,1$, respectively. From Figure~\ref{fig:reward_spaces}(b) we see -- using basic trigonometry -- that this implies that $\rhat$ and $\rE$ are at least $\Delta = \epsilonbar/\sin(\theta/2)$-close in $\R^{\SAcal}/\ones$. As shown in the full proof in Appendix~\ref{app:sec:global_transferability}, the relevant angle, $\theta$, is the second principal angle $\theta_2({P^0}, {P^1})$. The result then follows from the upper bound in Lemma~\ref{lem:bregman_bound}. 
\end{proof}
\vspace{-0.cm}
\begin{figure}[t]
\vspace{-0.cm}
  \centering
  \includegraphics[width=\linewidth]{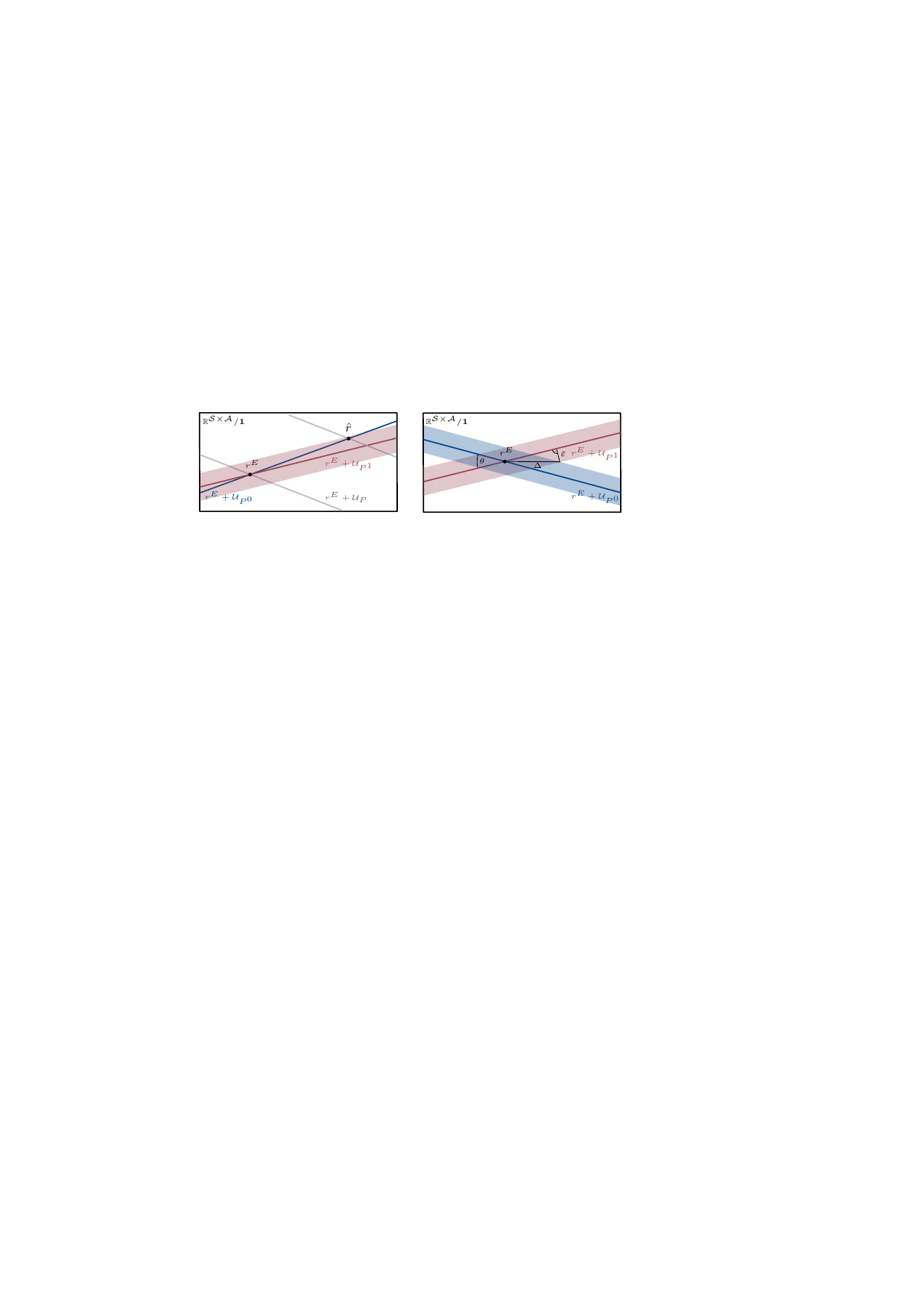}
  \vspace{-0.cm}
    \begin{flushleft}\hspace{1.2cm}(\textit{a}) Rewards in Example~\ref{ex1}.  \hspace{3.1cm}(\textit{b}) Proof sketch Theorem~\ref{thm:global_transferability}.\end{flushleft}\vspace{-0.1cm}
  \caption{(\textit{a}) illustrates the equivalence classes $[\rhat]_{\Ucal}$ and $[\rE]_{\Ucal}$, corresponding to the transition laws $P^0, P^1, P$ from Example~\ref{ex1}, for a small $\beta$, in $\R^{\SAcal}/\ones$. The blue lines correspond to $P^0$, the red lines to $P^1$, and the gray lines to $P$. Furthermore, the shaded areas illustrate the approximation error around $[\rE]_{\Ucal_{P^k}}$, as guaranteed by Lemma~\ref{lem:bregman_bound}. (\textit{b}) illustrates the uncertainty set for the recovered reward when learning from two experts, as discussed in the proof sketch of Theorem~\ref{thm:global_transferability}.
  \label{fig:reward_spaces}}
\end{figure}
\vspace{-0.cm}
Some observations are in order. First, the above theorem shows that the larger the second principal angle between the two experts' transition laws, the better the recovered reward transfers to a new environment. Second, observe that $\theta_2({P^0}, {P^1})>0$ is equivalent to the rank condition \eqref{eq:rank_cond}, as the second principal angle between two subspaces is non-zero if and only if their intersection is at most one-dimensional. 
Therefore, for exact transferability, Theorem~\ref{thm:global_transferability} requires the rank condition \eqref{eq:rank_cond} to be satisfied and $\epsilonhat=0$, recovering the results by \citet{cao2021identifiability, rolland2022identifiability, pmlr-v202-schlaginhaufen23a}. But in contrast to past results, Theorem~\ref{thm:global_transferability} applies to more realistic scenarios, where $\epsilonhat$ is merely small, not zero. Finally, we note that Theorem~\ref{thm:global_transferability} can be trivially generalized to $K\geq2$ experts by replacing $\theta_2({P^0}, {P^1})$ with the maximum of $\theta_2({P^k}, {P^{l}})$ over $0\leq k\leq l\leq K-1$. However, such bounds may be loose for $K>2$, potentially leaving considerable room for improvement in this setting.
\vspace{-0.cm}
\paragraph{Local transferability}
When learning a reward $\rhat$ from a single expert ($K=1$), \citet{pmlr-v202-schlaginhaufen23a} show that, without reducing the dimension of the reward class, $\rhat$ cannot be exactly transferable to any neighborhood of the expert's transition law $P_0$. However, Theorem~\ref{thm:local_transferability} below shows that by allowing for an $\varepsilon$ of error, we can guarantee transferability to a neighborhood of $P_0$.\looseness-1
\begin{restatable}{theorem}{localtransferability}
\label{thm:local_transferability}%
Let $K=1$, $D\defeq \max_{r,r'\in\Rcal} \norm{r-r'}_2$, and suppose that Assumptions~\ref{ass:steep_regularization},\ref{ass:occ_lower_bound}, and \ref{ass:regularity} hold. If $\mathsf{SubOpt}_{P^0}(\rhat, \muE) \leq \epsilonhat$, then $\rhat$ is $\varepsilon_P$-transferable to $P\in\Delta_{\Scal}^{\SAcal}$ with
\begin{equation*}
    \varepsilon_P = 2\max\bc{2\epsilonhat/\sigma_{\Rcal},  D^2\sin\br{\theta_{\max}({P^0}, P)}^2} / \eta.  
\end{equation*}
\end{restatable}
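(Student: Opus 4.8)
The plan is to reduce everything to the two-sided estimate of Lemma~\ref{lem:bregman_bound} together with a single geometric inequality comparing the orthogonal complements of $\Ucal_{P^0}$ and $\Ucal_P$. Write $\delta \defeq \rhat - \rE$ and recall that the quotient norm satisfies $\norm{[\rhat]_{\Ucal_{P'}}-[\rE]_{\Ucal_{P'}}}_2 = \norm{\Pi_{\Ucal_{P'}^\perp}\delta}_2$ for any transition law $P'$, so the whole argument can be phrased in terms of projecting the fixed vector $\delta$ onto complements of the subspaces $\Ucal_{P'}$.

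First I would apply the \emph{lower} bound of Lemma~\ref{lem:bregman_bound} at the expert's transition law $P^0$. Since $\muE = \RL_{P^0}(\rE)$ and $\ell_{P^0}(\rhat,\muE)\leq\epsilonhat$ by hypothesis, taking $r'=\rhat$ and $r=\rE$ in the lower bound gives $\tfrac{\sigma_{\Rcal}}{2}\norm{\Pi_{\Ucal_{P^0}^\perp}\delta}_2^2 \leq \epsilonhat$, i.e. $\norm{\Pi_{\Ucal_{P^0}^\perp}\delta}_2 \leq \sqrt{2\epsilonhat/\sigma_{\Rcal}}$, so $\rhat$ and $\rE$ are close in $\R^{\SAcal}/\Ucal_{P^0}$. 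Next I would apply the \emph{upper} bound of Lemma~\ref{lem:bregman_bound} at the new transition law $P$, taking $r'=\rE$ and $r=\rhat$, which yields $\ell_P(\rE, \RL_P(\rhat)) \leq \tfrac{1}{2\eta}\norm{\Pi_{\Ucal_P^\perp}\delta}_2^2$. It then remains to control $\norm{\Pi_{\Ucal_P^\perp}\delta}_2$ using what is already known about $\delta$ in the $P^0$-quotient. Decomposing $\delta = \Pi_{\Ucal_{P^0}^\perp}\delta + \Pi_{\Ucal_{P^0}}\delta$, applying $\Pi_{\Ucal_P^\perp}$, and using the triangle inequality gives
\begin{equation*}
\norm{\Pi_{\Ucal_P^\perp}\delta}_2 \leq \norm{\Pi_{\Ucal_P^\perp}\Pi_{\Ucal_{P^0}^\perp}\delta}_2 + \norm{\Pi_{\Ucal_P^\perp}\Pi_{\Ucal_{P^0}}\delta}_2 \leq \norm{\Pi_{\Ucal_{P^0}^\perp}\delta}_2 + \norm{\Pi_{\Ucal_P^\perp}\Pi_{\Ucal_{P^0}}}_2\,\norm{\delta}_2,
\end{equation*}
where the first summand is bounded by the previous step and $\norm{\delta}_2\leq D$ by the definition of the diameter.

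The crux is the middle operator norm $\norm{\Pi_{\Ucal_P^\perp}\Pi_{\Ucal_{P^0}}}_2$, which quantifies how much of $\Ucal_{P^0}$ escapes $\Ucal_P$. Here I would invoke the standard fact that for two subspaces of equal dimension this ``gap'' equals $\sin\theta_{\max}(\Ucal_{P^0},\Ucal_P) = \sin\theta_{\max}(P^0,P)$; equality of dimension holds because $\Ucal_{P'}=\ima(E-\gamma P')$ is always $|\Scal|$-dimensional, so Definition~\ref{def:principal_angles} applies directly. I expect this identity to be the main obstacle: one must carefully relate the operator-theoretic gap between the complement projections to the largest principal angle, a perturbation result that fits naturally alongside Proposition~\ref{prop:maxangle_bound} in the perturbation-bounds appendix.

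Combining the pieces yields $\norm{\Pi_{\Ucal_P^\perp}\delta}_2 \leq \sqrt{2\epsilonhat/\sigma_{\Rcal}} + D\sin\theta_{\max}(P^0,P)$. Finally, the elementary inequality $(a+b)^2 \leq 4\max\bc{a^2,b^2}$ with $a=\sqrt{2\epsilonhat/\sigma_{\Rcal}}$ and $b=D\sin\theta_{\max}(P^0,P)$ converts the squared bound into $\ell_P(\rE,\RL_P(\rhat)) \leq \tfrac{2}{\eta}\max\bc{2\epsilonhat/\sigma_{\Rcal},\, D^2\sin(\theta_{\max}(P^0,P))^2} = \varepsilon_P$, which is exactly the claimed transferability margin. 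The only nonroutine input beyond Lemma~\ref{lem:bregman_bound} is the gap--principal-angle identity; everything else is a triangle inequality and a one-line scalar estimate.
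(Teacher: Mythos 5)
Your proposal is correct and follows essentially the same route as the paper: the lower bound of Lemma~\ref{lem:bregman_bound} at $P^0$, a triangle-inequality transfer of the quotient-norm bound from $\Ucal_{P^0}$ to $\Ucal_P$ controlled by $\sin\theta_{\max}(P^0,P)$, the upper bound of Lemma~\ref{lem:bregman_bound} at $P$, and the scalar estimate $(a+b)^2\leq 4\max\{a^2,b^2\}$. The only cosmetic difference is that you express the subspace gap as $\snorm{\Pi_{\Ucal_P^\perp}\Pi_{\Ucal_{P^0}}}$ while the paper's Proposition~\ref{prop:quotientnorm_transfer} uses $\snorm{\Pi_{\Ucal_P}-\Pi_{\Ucal_{P^0}}}$; both equal $\sin\theta_{\max}$ for the equal-dimensional subspaces $\Ucal_{P'}=\ima(E-\gamma P')$.
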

The above theorem (which is proven in Appendix~\ref{app:sec:local_transferability}) shows that the reward learned from a single expert transfers to transition laws that are sufficiently close to the expert's, where the closeness is measured in terms of the maximal principal angle. In other words, while a large second principal angle between two experts' transition laws, as per Theorem~\ref{thm:global_transferability}, ensures that the reward recovered from these two experts is transferable to arbitrary transition laws, a small largest principal angle between two transition laws ensures that a reward recovered in one environment can be successfully transferred to the other environment. 
\begin{remark}
    As discussed in Appendix~\ref{app:sec:estimating_principal_angles}, we can compute the principal angles using a singular value decomposition. Moreover, given estimates $\hat P^0, \hat P^1$ of the transition laws $P^0, P^1$, the error in the estimate of $\sin  \theta_i(P^0, P^1)$ scales with $\Ocal(\max\{||P^0-\hat P^0||, ||P^1-\hat P^1||\})$.
\end{remark}
\looseness-1
\vspace{-0.cm}
\paragraph{Regularizers}\label{sec:entropy}
To provide more insights about Theorems~\ref{thm:global_transferability} and \ref{thm:local_transferability}, we provide explicit values for the primal and dual strong convexity constants, $\eta$ and $\sigma_{\Rcal}$, respectively. To this end, we focus on the Shannon entropy regularization $\polreg(p) = - \tau \Hcal(p)$ and the Tsallis-1/2 entropy regularization $\polreg(p) = - \tau \Hcal_{1/2}(p)$ as defined in Appendix~\ref{app:sec:regularizers}. While the Shannon entropy regularization is commonly used in IRL \citep{ziebart2010modeling, ho2016generative}, the Tsallis-1/2 entropy is more often adopted in the multi-armed bandit literature \citet{zimmert2021tsallis}. Both regularizations satisfy Assumption~\ref{ass:steep_regularization} as well as Assumption~\ref{ass:regularity} with the constants detailed in Proposition~\ref{prop:strong_convexity_lipschitz_gradients} in the appendix.
In general, the Tsallis entropy leads to a slightly smaller strong convexity constant $\eta$, but avoids an exponential dependence on the effective horizon $H_\gamma = 1/(1-\gamma)$ in $\sigma_{\Rcal}$. Below, we summarize the implications of Proposition~\ref{prop:strong_convexity_lipschitz_gradients} for $\varepsilon$-transferability of a reward $\rhat$ recovered from two experts.
\begin{restatable}{corollary}{explicit_transferability}
\label{cor:explicit_transferability}%
Suppose the conditions in Theorem~\ref{thm:global_transferability} hold. Furthermore, let $H_\gamma \defeq 1/(1-\gamma)$, $\rmax\defeq\max_{r\in\Rcal}\norm{r}_\infty$, $\DR = \max_{r,r'\in\Rcal} \norm{r-r'}_2$, and $\tau < D$. Then, for the Shannon entropy $\rhat$ is $\varepsilon$-transferable to $\Pcal=\Delta_{\Scal}^{\SAcal}$ with
\small
\begin{equation}
    \varepsilon = \dfrac{ 2H_\gamma^2 D|\Scal||\Acal|^{2+H_\gamma}\exp\br{\frac{2R H_\gamma}{\tau}}}{\numin^2 \tau \sin\br{\theta_2({P^0}, {P^1})/2}^2} \epsilonhat,
\end{equation}
\normalsize
and for the Tsallis entropy with
\small
\begin{equation}
    \varepsilon = \dfrac{4\sqrt{2}H_\gamma^5 D|\Scal||\Acal|^2\br{2\rmax /\tau + 3\sqrt{|\Acal|}}^3 }{\numin^2 \tau \sin\br{\theta_2({P^0}, {P^1})/2}^2} \epsilonhat.
\end{equation}
\normalsize
\end{restatable}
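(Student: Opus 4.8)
The plan is to specialize the general transferability bound of Theorem~\ref{thm:global_transferability} by substituting the explicit strong convexity constants for each regularizer. Recall that under its hypotheses Theorem~\ref{thm:global_transferability} yields $\varepsilon$-transferability to $\Pcal = \Delta_{\Scal}^{\SAcal}$ with
\[
\varepsilon = \frac{\epsilonhat}{\eta\,\sigma_{\Rcal}\,\sin\br{\theta_2(P^0,P^1)/2}^2}.
\]
Since the factor $\sin\br{\theta_2(P^0,P^1)/2}^2$ is common to both claimed bounds, the entire task reduces to computing the product $\eta\,\sigma_{\Rcal}$ for the Shannon and the Tsallis-$1/2$ entropy and rearranging.

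First I would invoke Proposition~\ref{prop:strong_convexity_lipschitz_gradients}, which supplies the primal strong convexity constant $\eta$ of the occupancy regularizer $\occreg$ over $\Mcal$ and the problem-dependent dual strong convexity constant $\sigma_{\Rcal}$ for each entropy. For the Shannon entropy the exponential term $\exp\br{2R H_\gamma/\tau}$ and the factor $|\Acal|^{H_\gamma}$ should enter through $\sigma_{\Rcal}$: because Assumption~\ref{ass:steep_regularization} precludes global smoothness, the dual strong convexity constant is governed by the local smoothness of $\occreg$ on the compact subset of $\relint\Mcal$ corresponding to the image of $\Rcal$, and this local smoothness constant $L_{\Kcal}$ degrades exponentially as the optimal occupancy measures approach the boundary of the simplex, at a rate set by $R/\tau$ and the effective horizon. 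For the Tsallis-$1/2$ entropy the same quantity is instead polynomial, producing the factors $H_\gamma^5$ and $\br{2R/\tau + 3\sqrt{|\Acal|}}^3$ that reflect its milder boundary behavior.

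Second I would substitute these expressions into the display above and simplify. The factor $1/\br{1 - \tau/(2D)}$, together with the hypothesis $\tau < 2D$, should arise from the lower bound on $\sigma_{\Rcal}$, which involves the diameter $D$ of the reward class; the dependence on $\numin^2$ should come from $\eta$, since the exploration lower bound of Assumption~\ref{ass:occ_lower_bound} enters quadratically when lifting the per-state strong convexity of the entropy to a strong convexity bound for $\occreg$ over $\Mcal$. Collecting the remaining powers of $|\Scal|$, $|\Acal|$, $H_\gamma$, and $\tau$ then yields the two stated formulas.

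The substitution itself is routine; the genuine content lives entirely in Proposition~\ref{prop:strong_convexity_lipschitz_gradients}. Accordingly, the main obstacle is not in proving this corollary but in establishing that proposition, i.e. in deriving sharp, explicit values of $\eta$ and especially of $\sigma_{\Rcal}$. The delicate part there is the dual strong convexity constant: because the regularizer is steep and hence non-smooth at the relative boundary, one must restrict to a compact set $\Kcal \subset \relint\Mcal$ determined by $\Rcal$ and control the induced local Lipschitz constant $L_{\Kcal}$ of $\nabla\occreg$, carefully tracking how it blows up with the effective horizon $H_\gamma$. Given those constants, assembling the corollary requires only bookkeeping of the dimensional factors.
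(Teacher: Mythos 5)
Your proposal is correct and follows the paper's own (implicit) proof exactly: the corollary is obtained by substituting the constants $\eta$ and $\sigma_{\Rcal}$ from Proposition~\ref{prop:strong_convexity_lipschitz_gradients} into the bound $\varepsilon = \epsilonhat/\bs{\eta\,\sigma_{\Rcal}\sin\br{\theta_2(P^0,P^1)/2}^2}$ of Theorem~\ref{thm:global_transferability}, and both resulting expressions check out under this substitution. One small misattribution: the factor $\numin^2$ does not come entirely from $\eta$ (which contributes only a single power of $\numin$); the second power enters through $\sigma_{\Rcal}$, whose lower bound inherits a $\numin$ from the local Lipschitz constant of $\nabla\occreg$.
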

We observe that transferability generally becomes more challenging with decreasing regularization parameter $\tau$, i.e. if the expert's policy becomes more deterministic. Furthermore, we see that it is easier to recover a transferable reward in a Tsallis entropy-regularized MDP. Corollary~\ref{cor:explicit_transferability} also shows that the constant between $\varepsilon$ and $\epsilonhat$ tends to be large, meaning that we need to recover a reward for which the experts are $\epsilonhat$-optimal with a very small $\epsilonhat$ to guarantee $\varepsilon$-transferability for a reasonable $\varepsilon$. However, it's important to note that our results provide sufficient conditions for the worst case, and it remains for future work to determine under what conditions these constants can be improved.

\begin{remark}
Our results in this section, especially Proposition~\ref{prop:bregman_divergence} and Lemma~\ref{lem:bregman_bound}, are critically relying on the steepness of the regularization (Assumption~\ref{ass:steep_regularization}), which is essential to ensure that the expert's reward can be identified up to the equivalence class of potential shaping transformations. Although we can still upper bound the suboptimality
$\mathsf{SubOpt}(\rE, \RL(\rhat))$ in terms of the distance between $\rhat$ to $\rE$ in $\R^{\SAcal}/\Ucal$ without this assumption (see Proposition~\ref{prop:unreg_subopt_bound}), we no longer have a lower bound as in Lemma~\ref{lem:bregman_bound}, which is essential for establishing closeness of $\rhat$ and $\rE$ in $\R^{\SAcal}/\Ucal$. Hence, we expect it to be difficult to obtain guarantees similar to those in Theorem~\ref{thm:global_transferability} and \ref{thm:local_transferability} for the unregularized setting, without either reducing the dimension of the reward class \citep{amin2017repeated} or making specific assumptions about the feasible reward sets \citep[Assumption 4.1]{metelli2021provably}.\looseness-1
\end{remark}
\vspace{-0.cm}
\section{Algorithm}\label{sec:algorithm}
\vspace{-0.cm}
To provide end-to-end guarantees for recovering transferable rewards from a finite set of expert demonstrations, we analyze the convergence and sample complexity (in terms of expert demonstrations) of an algorithm for recovering a reward for which all $K$ experts are approximately optimal. To this end, we focus on the reward class $\Rcal=\bc{r\in\R^{\SAcal}: \norm{r}_1\leq 1}$. Furthermore, we assume oracle access to a $(\varepsilon, \delta)$-PAC algorithm for the forward problem \eqref{eq:mdp_occ}. That is, a polynomial-time algorithm, $\ARL^{\varepsilon,\delta}$, that outputs a policy $\pi = \ARL^{\varepsilon,\delta}(r)$ such that with probability at least $1-\delta$ it holds that $\mathsf{SubOpt}(r, \mu^\pi)\leq\varepsilon$ (see e.g. \citep{lan2023policy} for a specific example). The key idea of our meta-algorithm is to learn a reward minimizing the sum of the suboptimalities of the $K$ experts $\muE_{P^0}, \hdots, \muE_{P^{K-1}}$. This leads us to the following multi-expert IRL problem
\begin{equation*}\label{eq:irl}\tag{O-IRL}
    \min_{r\in\Rcal} \sum_{k=0}^{K-1}\mathsf{SubOpt}_{P^k}(r, \muEhatk),
\end{equation*}
where $\muEhatk(s,a) \defeq (1-\gamma)/\NE \sum_{i=0}^{\NE-1} \sum_{t=0}^{\HE-1} \gamma^t\ind\{s_t^{k,i}=s, a_t^{k,i}=a\}$ is an empirical expert occupancy measure. To solve Problem \eqref{eq:irl}, we propose the projected gradient descent scheme as detailed in Algorithm~\ref{alg:multi_expert_irl} below, where $\texttt{rollout}_{P^k}(\pi, N, H)$ samples $N$ independent trajectories of length $H$ from policy $\pi$. Using a stochastic online gradient descent analysis, Theorem~\ref{thm:convergence} shows that any PAC algorithm for the forward problem yields a PAC algorithm for the inverse problem.
\vspace{-0.cm}
\begin{algorithm}[h]
\SetKwComment{Comment}{\%}{}
\SetKwInput{Input}{Input}
\SetKwInput{Initialize}{Initialize}
\SetKwInput{Return}{Return}
    \caption{Multi-expert IRL\label{alg:multi_expert_irl}}
    \DontPrintSemicolon
    \Input{$\alpha, T, \{\DE_k\}_{k=0}^{K-1}, N, H, \varepsilon_{\text{opt}}, \delta_{\text{opt}}$.}
    \Initialize{$\pi\in\Delta_{\Acal}^{\Scal}$ and $r\in\Rcal$ arbitrarily.}
    \For{$i=0, \hdots, T-2$}{
        \For{$k=0, \hdots, K-1$}{
            $\pi_{k,t} = \ARL_{P^k}^{\varepsilon_{\text{opt}}, \delta_{\text{opt}}}(r_t)$ \tcp*{Forward RL.}
            $\Dcal_{k,t} = \texttt{rollout}_{P^k}(\pi_{k,t}, N, H)$\; 
        }
        $g_t = \sum_{k=0}^{K-1}\br{\muhat_{\Dcal_{k,t}} - \muhat_{\DE_k}}$\;
        
        $r_{t+1} = \Pi_{\Rcal}(r_t - \alpha g_t)$  \tcp*{Reward step.}
    }
    \Return{$\rhat \defeq \frac{1}{T}\sum_{t=0}^{T-1} r_t$.}
\end{algorithm}
\vspace{-0.cm}
\begin{restatable}{theorem}{irlconvergence}
\label{thm:convergence}%
    Suppose that $\NE=\Omega\big(K\log(|\Scal||\Acal|/\deltahat)/\epsilonhat^2\big)$ and $\HE=\Omega\big(\log(K/\epsilonhat)/\log(1/\gamma)\big)$. Running Algorithm~\ref{alg:multi_expert_irl} for $T=\Omega\big(K^2/\epsilonhat^2\big)$ iterations with step-size $\alpha=1/(K\sqrt{T})$, where $\delta_{\text{opt}} = \Ocal\big(\deltahat\epsilonhat^2/K^3\big)$ , $\varepsilon_{\text{opt}}=\Ocal(\epsilonhat/K)$, $N=\Omega\big(K\log(K|\Scal||\Acal|/(\deltahat\epsilonhat))/\epsilonhat^2\big)$, and $H=\HE$, it holds with probability at least $1-\deltahat$ that $\mathsf{SubOpt}_{P^k}(\rhat, \muE_{P^k}) \leq \epsilonhat, \; \text{ for } k=0,\hdots, K-1.$
\end{restatable}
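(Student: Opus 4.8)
The plan is to read Algorithm~\ref{alg:multi_expert_irl} as projected \emph{inexact} stochastic subgradient descent on the empirical objective $\hat F(r)\defeq\sum_{k=0}^{K-1}\ell_{P^k}(r,\muEhatk)$ over the $\ell_1$-ball $\Rcal$, and to control the excess risk against the comparator $\rE\in\Rcal$. First I would record that each map $r\mapsto\ell_{P^k}(r,\mu)=\max_{\mu'\in\Mcal_{P^k}}J_{P^k}(r,\mu')-\langle r,\mu\rangle+\occreg(\mu)$ is convex (a maximum of affine functions minus a linear term), with exact subgradient $\RL_{P^k}(r)-\mu$ by Danskin's theorem; note this argument needs only convexity, not the regularity Assumption~\ref{ass:regularity}. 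Since the constants $\occreg(\muEhatk)$ are independent of $r$, they cancel in every difference $\hat F(r)-\hat F(r')$, and using $\ell_{P^k}(\rE,\muE_{P^k})=0$ one checks the clean reduction
\[
\sum_{k}\ell_{P^k}(\rhat,\muE_{P^k}) \;\le\; \bigl[\hat F(\rhat)-\hat F(\rE)\bigr] \;+\; 2\,\norm{\textstyle\sum_{k}(\muEhatk-\muE_{P^k})}_\infty ,
\]
in which the regularizer terms drop out entirely. Because every summand $\ell_{P^k}(\rhat,\muE_{P^k})$ is nonnegative, it suffices to bound the optimization (regret) term and the expert-estimation term each by $\Ocal(\epsilonhat)$, which then yields the claimed per-expert guarantee.

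The key step, and the one I expect to be the crux, is to absorb the forward-RL oracle's error without losing a factor of $K$. Rather than converting the oracle suboptimality into an occupancy distance via strong convexity (which would force the far stronger $\varepsilon_{\text{opt}}=\Ocal(\epsilonhat^2/K^2)$), I would observe that a policy $\pi_{k,t}$ with $\ell_{P^k}(r_t,\mu^{\pi_{k,t}})\le\varepsilon_{\text{opt}}$ makes its occupancy $\mu^{\pi_{k,t}}$ an $\varepsilon_{\text{opt}}$-subgradient of $\phi_k(r)\defeq\max_{\mu\in\Mcal_{P^k}}J_{P^k}(r,\mu)$ at $r_t$: for all $r$ one has $\phi_k(r)\ge J_{P^k}(r,\mu^{\pi_{k,t}})=\phi_k(r_t)+\langle r-r_t,\mu^{\pi_{k,t}}\rangle-\ell_{P^k}(r_t,\mu^{\pi_{k,t}})$. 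Consequently $\bar g_t\defeq\sum_k(\mu^{\pi_{k,t}}-\muEhatk)$ is a $K\varepsilon_{\text{opt}}$-subgradient of $\hat F$ at $r_t$, which is exactly the linear dependence matching $\varepsilon_{\text{opt}}=\Ocal(\epsilonhat/K)$.

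With this in place I would run the standard projected recursion with the realized direction $g_t=\sum_k(\muhat_{\Dcal_{k,t}}-\muEhatk)$. Nonexpansiveness of $\Pi_\Rcal$ gives $\norm{r_{t+1}-\rE}_2^2\le\norm{r_t-\rE}_2^2-2\alpha\langle g_t,r_t-\rE\rangle+\alpha^2\norm{g_t}_2^2$, which telescopes; combining with the $\varepsilon$-subgradient inequality yields
\[
\hat F(\rhat)-\hat F(\rE) \;\le\; \frac{D_{\Rcal}^2}{2\alpha T} + \frac{\alpha}{2T}\sum_{t}\norm{g_t}_2^2 + K\varepsilon_{\text{opt}} + \frac1T\sum_{t}\langle \bar g_t-g_t,\,r_t-\rE\rangle ,
\]
where the $\ell_2$-diameter satisfies $D_{\Rcal}\le2$ and $\norm{g_t}_2\le\sqrt2\,K$ by simplex geometry, so the first two terms are $\Ocal(K/\sqrt T)=\Ocal(\epsilonhat)$ under $\alpha=1/(K\sqrt T)$ and $T=\Omega(K^2/\epsilonhat^2)$. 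The remaining error splits as $\bar g_t-g_t=\sum_k(\mu^{\pi_{k,t}}-\mu^{\pi_{k,t}}_H)+\sum_k(\mu^{\pi_{k,t}}_H-\muhat_{\Dcal_{k,t}})$: the first (horizon truncation) is bounded deterministically in $\ell_1$ by $K\gamma^{H}=\Ocal(\epsilonhat)$ through $H=\HE=\Omega(\log(K/\epsilonhat)/\log(1/\gamma))$, and the second is conditionally zero-mean rollout noise.

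For the noise and the expert terms the recurring device is to keep the sum over experts \emph{inside} the norm and concentrate coordinatewise, exploiting independence across the $K$ experts; this is what keeps $N$ and $\NE$ linear rather than quadratic in $K$. Concretely, each coordinate of $\sum_k(\mu^{\pi_{k,t}}_H-\muhat_{\Dcal_{k,t}})$ is an average of $KN$ independent $[0,1]$ rollout contributions, so Hoeffding with a union bound over the $|\Scal||\Acal|$ coordinates and the $T$ iterations gives $\Ocal\bigl(\sqrt{K\log(TK|\Scal||\Acal|/\deltahat)/N}\bigr)=\Ocal(\epsilonhat)$ under the stated $N$; likewise $\norm{\sum_k(\muEhatk-\muE_{P^k})}_\infty$ splits into a truncation part $\le K\gamma^{\HE}=\Ocal(\epsilonhat)$ and a sampling part concentrated over all $K\NE$ independent trajectory estimators, giving $\Ocal\bigl(\sqrt{K\log(|\Scal||\Acal|/\deltahat)/\NE}\bigr)=\Ocal(\epsilonhat)$ under $\NE=\Omega(K\log(|\Scal||\Acal|/\deltahat)/\epsilonhat^2)$. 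Assembling, $\sum_k\ell_{P^k}(\rhat,\muE_{P^k})=\Ocal(\epsilonhat)$; a union bound over the $TK$ oracle calls (each failing with probability $\delta_{\text{opt}}=\Ocal(\deltahat\epsilonhat^2/K^3)$, so that $TK\delta_{\text{opt}}=\Ocal(\deltahat)$) together with the two concentration events caps the total failure probability at $\Ocal(\deltahat)$, and rescaling the hidden constants in the $\Omega(\cdot)$ conditions sharpens the $\Ocal(\epsilonhat)$ bound to the exact threshold $\epsilonhat$. The main obstacle throughout is precisely this $K$-bookkeeping: it is the combination of the $\varepsilon$-subgradient observation and the across-experts concentration that produces the stated rates rather than worse polynomial dependence on $K$.
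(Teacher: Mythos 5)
Your proposal is correct and follows essentially the same route as the paper's proof: a projected online-gradient-descent regret analysis over $\Rcal$ with the oracle error entering linearly as $K\varepsilon_{\text{opt}}$, Hoeffding applied jointly to the $KN$ (resp.\ $K\NE$) independent trajectory contributions to keep the sample complexity linear in $K$, a separate $K\gamma^{H}$ truncation term, and a union bound over the oracle calls and concentration events. The only differences are organizational --- you package the oracle suboptimality as a $K\varepsilon_{\text{opt}}$-subgradient of $\hat F$ and cancel the $\occreg$ terms upfront via the comparator $\rE$ and $\ell_{P^k}(\rE,\muE_{P^k})=0$, whereas the paper substitutes the oracle output into the max directly and later removes the $\occreg(\mu_{k,t})$ terms via concavity of $\mu\mapsto J(r,\mu)$ and optimality of the experts --- but these yield the same bounds.
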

The above result generalizes \citep[Theorem 2]{syed2007game} by considering multiple experts and by proving convergence in terms of the expert suboptimality. We refer to Appendix~\ref{app:sec:irl_convergence} for the proof and the precise constants. Theorem~\ref{thm:convergence} shows that with $\Omega(K/\epsilonhat^2)$ demonstrations of each expert, we recover in $\Omega(K^2/\epsilonhat^2)$ steps of Algorithm~\ref{alg:multi_expert_irl} a reward $\rhat$ for which all experts are $\epsilonhat$-optimal. Together with Theorem~\ref{thm:global_transferability} and \ref{thm:local_transferability}, this provides a bound on the sample and time complexity of recovering in $\varepsilon$-transferable rewards in regularized IRL.\looseness-1
\vspace{-0.cm}
\begin{figure}[t]
  \centering
  \includegraphics[width=\linewidth]{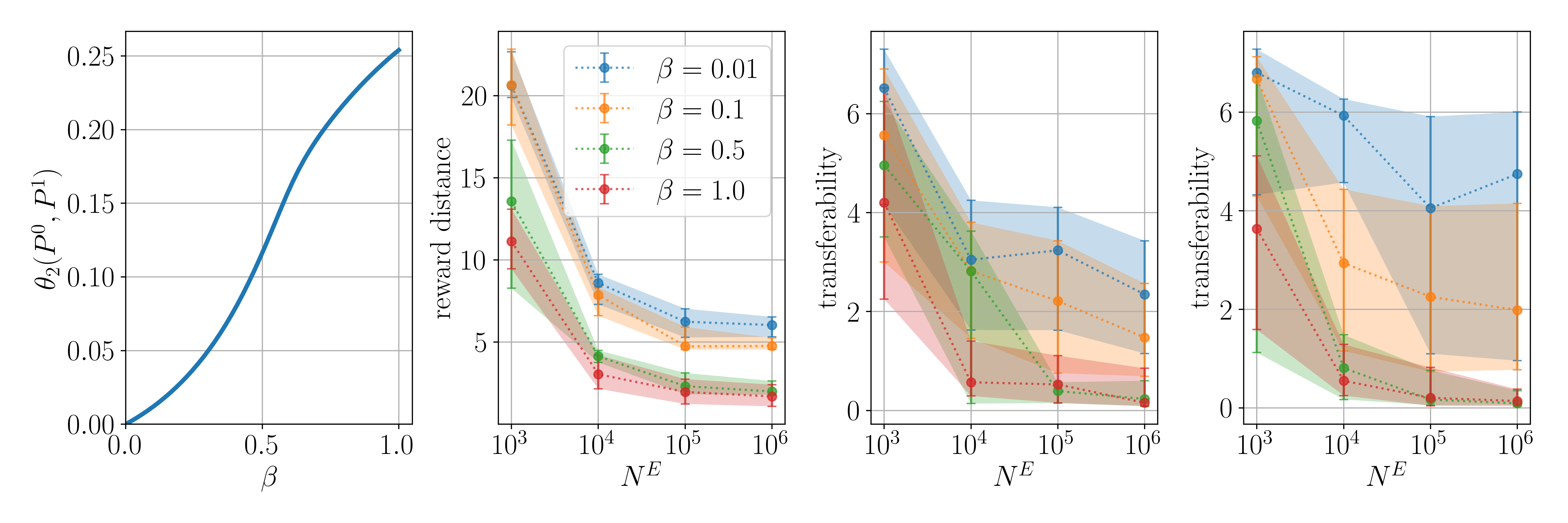}
  \vspace{-0.cm}
    \begin{flushleft}\hspace{2.2cm}(\textit{a})  \hspace{2.8cm}(\textit{b})  \hspace{2.8cm}(\textit{c})  \hspace{2.8cm}(\textit{d})\end{flushleft}\vspace{-0.1cm}
  \caption{(\textit{a}) shows the second principal angle between the experts, for varying wind strength $\beta$. (\textit{b}) shows the distance between $\rhat$ and $\rE$ in $\R^{\SAcal}/\ones$ for a varying number of expert demonstrations $\NE$ and wind strength $\beta$. (\textit{c}) and (\textit{d}) show the transferability to $P^{\text{South}}$ and $P^{\text{Shifted}}$ in terms of $\mathsf{SubOpt}_{P^{\text{South}}}(\rE, \RL_{P^{\text{South}}}(\rhat))$ and $\mathsf{SubOpt}_{P^{\text{Shifted}}}(\rE, \RL_{P^{\text{Shifted}}}(\rhat))$, respectively. The circles indicate the median and the shaded areas the 0.2 and 0.8 quantiles over 10 independent realizations of the expert data.
  \label{fig:experiment_summary}}
  \vspace{-0.cm}
\end{figure}
\vspace{-0.cm}
\section{Experiments}\label{sec:experiments}
To validate our results experimentally, we adopt a stochastic variant of the \texttt{WindyGridworld} environment \citep{sutton2018reinforcement}. In this environment, the agent moves to the intended grid cell with a probability of $(1-\beta)$ and is pushed one step further in the direction of the wind with a probability of $\beta$. 
Using Algorithm 1, we recover a reward $\rhat$ from demonstrations of two experts, both exposed to the same wind strength $\beta$ but different wind directions -- North and East. The experiments are repeated for a varying number of expert demonstrations $\NE\in\{ 10^3, 10^4, 10^5, 10^6\}$ and wind strengths $\beta\in \{0.01,0.1,0.5,1.0 \}$. We then test the transferability to two different environments: one with South wind, $P^{\text{South}}$, and a zero-wind environment with cyclically shifted actions, $P^{\text{Shifted}}$. Figure~\ref{fig:experiment_summary}(a) shows that the second principal angle between the two experts' transition laws $P_0$ and $P_1$ increases with increasing wind strength. Moreover, Figure~\ref{fig:experiment_summary}(b)-(d) show that both the closeness between $\rhat$ and $\rE$ in $\R^{\SAcal}/\ones$ and the transferability to $P^{\text{South}}$ and $P^{\text{Shifted}}$ improve with a larger second principal angle, as expected from Theorem~\ref{thm:global_transferability}. For a more detailed discussion of the experiments we refer to Appendix~\ref{app:sec:experiments}.\looseness-1

\vspace{-0.cm}
\section{Conclusion}\label{sec:conclusion}
\vspace{-0.cm}
\paragraph{Summary} In this paper, we investigated the transferability of rewards in regularized IRL. We showed that the conditions established under full access to the experts' policies do not guarantee transferability when learning a reward from a finite set of expert demonstrations. To address this issue, we proposed using principal angles as a more refined measure of the similarity and dissimilarity of transition laws. Assuming a strongly convex and locally smooth regularization, we then showed that if we recover a reward for which at least two experts are nearly optimal, and their environments are sufficiently different in terms of the second principal angle between their transition laws, then the recovered reward is universally transferable. Furthermore, we showed that if two environments are sufficiently similar in terms of the maximal principal angle between their transition laws, rewards learned in one environment can be effectively transferred to the other environment. Additionally, we provided explicit constants for the Shannon and Tsallis entropy, as well as a PAC algorithm for recovering a reward for which all experts are approximately optimal. As a result, we established end-to-end guarantees for learning transferable rewards in regularized IRL. Additionally, we experimentally validated our results through gridworld experiments.
\vspace{-0.cm}
\paragraph{Limitations and future work}
Our results provide only sufficient conditions for transferability. It would be valuable to investigate necessary conditions to check whether our bounds are tight. Furthermore, extending our analysis to lower-dimensional reward classes could reduce the complexity of learning transferable rewards. Although our paper focuses on discrete state and action spaces, an exciting avenue for future research would be to extend our results to continuous state and action spaces, which are more commonly encountered in practice. We expect that our proof methods can be generalized to this setting, but the analysis will be more intricate due to the infinite-dimensional reward and occupancy measure spaces. Finally, as our work is mainly theoretical, experimental validation on real-world applications could provide valuable insight into the practical aspects and challenges of transferability.
\paragraph{Acknowledgments}
Andreas Schlaginhaufen is funded by a PhD fellowship from
the Swiss Data Science Center.
\bibliographystyle{abbrvnat}
\bibliography{refs}


\newpage
\appendix
\addcontentsline{toc}{section}{Appendix} 
\part{Appendix} 
\parttoc 
\newpage


\section{Notations}\label{app:sec:notations}
\paragraph{Overview}
Here, we provide an overview of some of the most important notations. However, every notation is defined when it is introduced as well.
\begin{table}[H]
\caption{Notations.}
\vspace{-0.3cm}
\label{tab:notations}
\vskip 0.15in
\begin{center}
\begin{tabular}{ll}
\toprule
$\Ycal^{\Xcal}$ & $\defeq \quad$ set of functions $f:\Xcal\to \Ycal$\\
\midrule
$\Delta_{\Xcal}$ & $\defeq \quad$ probability simplex over some discrete set $\Xcal$\\
\midrule
$\Mcal$ & $\defeq \quad$ set of feasible occupancy measures\\
\midrule
$\Rcal$ & $\defeq \quad \bc{r\in\R^{\SAcal}: \norm{r}_1\leq 1}$, reward class\\
\midrule
$R$ & $\defeq \quad \max_{r\in\Rcal} \norm{r}_\infty$, reward bound\\
\midrule
$\DR$ & $\defeq \quad \max_{r,r\in\Rcal} \norm{r-r'}_2$, diameter of the reward class\\
\midrule
$H_\gamma$ & $\defeq \quad 1/(1-\gamma)$, effective horizon\\
\midrule
$\nu(s)$ & $\defeq \quad \sum_{a} \mu(s,a)$\\
\midrule
$\pi^\mu(a|s)$ & $\defeq \quad \begin{cases}
    \mu(s,a) / \sum_{a'}\mu(s,a') = \mu(s,a) / \nu(s) &,\nu(s)>0 \\
    1/|\Acal| \quad \text{(arbitrary)} &, \text{otherwise}
\end{cases}$  \\
\midrule
$\pi_s$ & $\defeq \quad  \pi(\cdot|s)$  \\
\midrule
$\occreg(\mu)$ & $\defeq \quad \E_{(s,a)\sim\mu}\bs{\polreg(\pi_s^\mu)}$ \\
\midrule
$J(r, \mu)$ & $\defeq \quad \ip{r}{\mu} - \occreg(\mu)$\\
\midrule
$\mathsf{SubOpt}(r, \mu')$ & $\defeq \quad \max_{\mu\in\Mcal} J(r, \mu) - J(r, \mu')$\\
\midrule
$\RL(r)$ & $\defeq \quad \argmax_{\mu\in\Mcal} J(r, \mu)$, optimal occupancy measure for $r$\\
\midrule
$\IRL(\muE)$ & $\defeq \quad \bc{r\in\Rcal: \muE = \RL(r)}$, feasible reward set for $\muE$\\
\midrule
$\ARL^{\varepsilon,\delta}(r)$ & $\defeq \quad$ PAC RL algorithm outputting some $\varepsilon$-optimal policy with probability at least $1-\delta$\\
\midrule
$\muhat_{\Dcal}$ & $\defeq \quad \frac{1-\gamma}{N} \sum_{i=0}^{N-1} \sum_{t=0}^{T-1} \gamma^t \ind\bc{s_t^i = s, a_t^i = a}$, where $\Dcal = \bc{\br{s_0^i, a_0^i, \hdots, s_{H-1}^i, a_{H-1}^i}}_{i=0}^{N-1}$\\
\midrule
$E$ & $\defeq \quad$ linear operator $\R^{\Scal}\to \R^{\SAcal}$ defined by $(Ef)(s,a) = f(s)$ for $f\in\R^{\Scal}$\\
\midrule
$P$ & $\defeq \quad$ linear operator $\R^{\Scal}\to \R^{\SAcal}$ defined by $(Pf)(s,a) = \sum_{s'} P(s'|s,a) f(s')$ for $f\in\R^{\Scal}$\\
\midrule
$\ima\br{A}$ & $\defeq \quad$ image of a linear operator $A$\\
\midrule
$\Ucal$ & $\defeq \quad$ $\ima(E-\gamma P) \subset\R^{\SAcal}$, potential shaping subspace\\
\midrule
$\ones$ & $\defeq \quad$ $\bc{f = \text{constant}} \subset\R^{\SAcal}$, constant subspace\\
\midrule
$\Vcal^\perp$ & $\defeq \quad$ orthogonal complement of a linear subspace $\Vcal$\\
\bottomrule
\end{tabular}
\end{center}
\vskip -0.1in
\end{table}

\paragraph{Additional definitions}
In the following, we briefly recall some additional definitions. To this end, we denote $\Bcal(x,r)\defeq \bc{x\in \R^n:\norm{x}_2 <r}$ for an open ball of radius $r$ with center $x$.
\begin{definition}[Interior]
The interior of a set $\mathcal{X}\subseteq\R^n$ is defined as
\begin{equation*}
    \interior\mathcal{X} \defeq \bc{x\in\mathcal{X}: \Bcal(x,r)\subseteq \mathcal{X} \text{ for some } r>0}.
\end{equation*}
\end{definition}
\begin{definition}[Affine hull]
The affine hull of a set $\mathcal{X}\subseteq\R^n$ is defined as
\begin{equation*}
    \operatorname{aff}\mathcal{X} \defeq \bc{\theta_1 x_1 + \hdots + \theta_k x_k:x_1,\hdots,x_k\in\mathcal{X}, \theta_1+ \hdots + \theta_k=1}.
\end{equation*}
\end{definition}
\begin{definition}[Relative interior]
The relative interior of a set $\mathcal{X}\subseteq\R^n$ is defined as
\begin{equation*}
    \relint\mathcal{X} \defeq \bc{x\in\mathcal{X}: \Bcal(x,r)\cap \operatorname{aff}\mathcal{X}\subseteq \mathcal{X} \text{ for some } r>0}.
\end{equation*}
\end{definition}
\begin{definition}[Relative boundary]
The relative boundary of a closed set $\mathcal{X}\subseteq\R^n$ is defined as
\begin{equation*}
    \relbd\mathcal{X} \defeq \mathcal{X} \setminus \relint \mathcal{X}.
\end{equation*}
\end{definition}
\begin{definition}[Convex hull]
The convex hull of a set $\mathcal{X}\subseteq\R^n$ is defined as
\begin{equation*}
    \conv\mathcal{X} \defeq \bc{\theta_1 x_1 + \hdots + \theta_k x_k:x_1,\hdots,x_k\in\mathcal{X}, \theta_1+ \hdots + \theta_k=1, \theta_i\geq 0, i=1,\hdots, k}.
\end{equation*}
\end{definition}

\section{Conjugate duality in regularized IRL}\label{app:sec:convex_background}
In this section, we first recall some background from convex analysis and then briefly discuss the duality between reward equivalence classes and optimal occupancy measures.
\paragraph{Definitions}
We recall a few definitions related to convex functions. In convex analysis it is standard to consider extended real value functions $f:\R^n\to \Rbar\defeq [-\infty, \infty]$, where convex functions defined on some subset $\Xcal\subset \R^n$ are extended over the entire space by setting their value to $+\infty$ outside of their domain. The effective domain is defined as $\dom f \defeq \bc{x : f(x) < \infty}$, and a convex function $f$ is said to be proper if $f>-\infty$ and $\dom f \neq \emptyset$. Furthermore, $f$ is referred to as closed if its epigraph $\bc{\br{x, y}:x\in\dom f, y\geq f(x)}$ is a closed set.\footnote{A proper convex function is closed if and only if it is lower semi-continuous \citep{Rockafellar1970}.} In particular, $f$ is closed if it is continuous on $\dom f$ and $\dom f$ is a closed set \citep{boyd2004convex}. Lastly, we recall two key concepts in convex analysis -- the subdifferential and the convex conjugate of some convex function.

\begin{definition}[Subdifferential] 
A subgradient of $f:\R^n\to \Rbar$ at some point $x\in\R^n$ is a vector $g\in\R^n$ such that $f(x')\geq f(x) + g^\top\br{x- x}$ for all $x'\in\mathcal{X}$. The subdifferential $\partial f (x)$ at $x\in\mathcal{X}$ is the set of all subgradients at $x$, where $\partial f (x)$ is defined to be empty if $x\notin \dom f$. \looseness-1
\end{definition}
\begin{definition}[Convex Conjugate]
    The convex conjugate of $f:\R^n\to \Rbar$ is the function $f^*:\R^n\to \Rbar$ defined as
    \begin{equation}
    f^*(y) = \sup_{x} \ip{y}{x} - f(x).
    \end{equation}
\end{definition}

\paragraph{Key results} Next, we list two key results from convex analysis.
\begin{theorem}[\citep{Rockafellar1970}]\label{thm:differentiability}
    A function $f:\R^n\to\Rbar$ is differentiable at some point $x\in\dom f$ if and only if $\partial f(x)$ is singleton. In this case we have $\partial f(x) = \bc{\nabla f(x)}$.
\end{theorem}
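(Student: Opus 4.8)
The plan is to treat $f$ as a proper convex function (as is standard in this setting and in the cited reference) and, for the nontrivial content, to assume $x\in\interior\dom f$ so that the subdifferential $\partial f(x)$ is nonempty and compact. The central tool is the one-sided directional derivative
\begin{equation*}
    f'(x;d) \defeq \lim_{t\downarrow 0}\frac{f(x+td)-f(x)}{t},
\end{equation*}
which exists for convex $f$ because the difference quotient is nondecreasing in $t$, is positively homogeneous and convex (hence sublinear) in $d$, and admits the support-function representation $f'(x;d)=\max_{g\in\partial f(x)}\ip{g}{d}$. I would first establish this representation by observing that $f'(x;\cdot)$ is a proper closed sublinear function whose subdifferential at the origin is exactly $\partial f(x)$; the representation is then the statement that a closed sublinear function equals the support function of its subdifferential at $0$, with the maximum attained because $\partial f(x)$ is compact.

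For the forward direction, assume $f$ is differentiable at $x$. Convexity gives the gradient inequality $f(x')\geq f(x)+\ip{\nabla f(x)}{x'-x}$ for all $x'$, so $\nabla f(x)\in\partial f(x)$. For uniqueness, let $g\in\partial f(x)$ be arbitrary; the subgradient inequality applied along a ray yields $\ip{g}{d}\leq f'(x;d)=\ip{\nabla f(x)}{d}$ for every direction $d$, and replacing $d$ by $-d$ forces $\ip{g}{d}=\ip{\nabla f(x)}{d}$ for all $d$, whence $g=\nabla f(x)$. Thus $\partial f(x)=\bc{\nabla f(x)}$.

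For the reverse direction, assume $\partial f(x)=\bc{g}$ is a singleton. The support-function representation immediately gives $f'(x;d)=\ip{g}{d}$, i.e. the directional derivative is \emph{linear} in $d$. The remaining and main step is to upgrade linearity of the directional derivative to full (Fréchet) differentiability, namely $f(x+h)-f(x)-\ip{g}{h}=o(\norm{h}_2)$. Here I would use that a finite convex function on an open set is locally Lipschitz, together with the monotonicity of the difference quotients $(f(x+td)-f(x))/t$ in $t$; combining local Lipschitz continuity with a Dini-type monotone convergence argument over the compact unit sphere of directions yields \emph{uniform} convergence of the difference quotients to the linear limit $\ip{g}{\cdot}$, which is precisely differentiability with $\nabla f(x)=g$.

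I expect the reverse direction to be the main obstacle: pointwise existence and linearity of directional derivatives do not by themselves imply differentiability, and it is convexity---through monotone difference quotients, local Lipschitz continuity, and compactness over directions in finite dimensions---that closes this gap.
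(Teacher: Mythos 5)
The paper offers no proof of this statement: it is imported verbatim from the cited reference (Rockafellar, Theorem~25.1), so there is nothing internal to compare against. Your sketch is correct and is essentially the canonical argument from that source --- the support-function representation $f'(x;\cdot)=\max_{g\in\partial f(x)}\ip{g}{\cdot}$, the gradient inequality for the forward direction, and the upgrade from linearity of the directional derivative to Fr\'echet differentiability via local Lipschitz continuity and compactness of the unit sphere. The only step you assume rather than prove is that a singleton subdifferential forces $x\in\interior\dom f$; this is worth one line (if $x$ lay on the boundary of $\dom f$, the normal cone $N_{\dom f}(x)$ would be nontrivial and $\partial f(x)+N_{\dom f}(x)\subseteq\partial f(x)$ would make any nonempty $\partial f(x)$ unbounded, contradicting the singleton hypothesis), but it does not affect the soundness of the rest.
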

\begin{theorem}[\citep{Rockafellar1970}]\label{thm:conjugate_duality}
    For any proper convex function $f:\R^n\to \Rbar$ it holds
    \begin{equation*}
        f^*(y) = \ip{y}{x} - f(x) \quad \iff \quad y \in \partial f(x).
    \end{equation*}
    If additionally $f$ is closed, then
    \begin{equation*}
        f^*(y) = \ip{y}{x} - f(x) \quad \iff \quad y \in \partial f(x) \quad \iff \quad x \in \partial f^*(y).
    \end{equation*}
\end{theorem}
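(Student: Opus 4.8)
The plan is to prove the two equivalences in turn: the first follows directly from the definitions of the conjugate and the subdifferential, while the second is obtained by applying the first to the conjugate $f^*$ and then invoking the biconjugate theorem. Throughout I take $f$ to be proper convex (and additionally closed for the second equivalence), with $f^*$ its convex conjugate.

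First I would establish $f^*(y) = \ip{y}{x} - f(x) \iff y \in \partial f(x)$. By definition, $f^*(y) = \sup_{x'} \br{\ip{y}{x'} - f(x')}$, so the equality $f^*(y) = \ip{y}{x} - f(x)$ holds exactly when $x$ attains this supremum, i.e. when $\ip{y}{x'} - f(x') \leq \ip{y}{x} - f(x)$ for every $x'$. Rearranging this inequality yields $f(x') \geq f(x) + \ip{y}{x'-x}$ for all $x'$, which is precisely the defining condition $y \in \partial f(x)$. Both implications are immediate rearrangements, so neither convexity nor closedness is needed at this stage; only the definitions enter.

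Next I would obtain the symmetric characterization under the closedness assumption. The key observation is that whenever $f$ is proper, convex, and closed, its conjugate $f^*$ is again proper, convex, and closed, so the first equivalence applies verbatim to $f^*$: we have $x \in \partial f^*(y) \iff f^{**}(x) = \ip{x}{y} - f^*(y)$. It then remains to identify $f^{**}$ with $f$, at which point the right-hand side becomes $f(x) = \ip{x}{y} - f^*(y)$, equivalently $f^*(y) = \ip{y}{x} - f(x)$, which by the first equivalence is the same as $y \in \partial f(x)$. Chaining these gives $y \in \partial f(x) \iff x \in \partial f^*(y)$, as claimed.

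The main obstacle is the identity $f^{**} = f$, which is the sole place the closedness hypothesis is used and the only step that is not a purely formal manipulation. Establishing it requires a separation argument: one shows that the epigraph of a closed convex $f$ equals the intersection of all closed halfspaces containing it, so that $f$ agrees with the supremum of its affine minorants, which is exactly $f^{**}$. Since this is the standard Fenchel--Moreau theorem, I would cite it rather than reprove it; granting that, everything else in the argument is definitional, and the failure of the second equivalence for non-closed $f$ is explained precisely by the failure of $f^{**}=f$.
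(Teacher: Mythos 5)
Your proof is correct and is exactly the standard argument behind this classical result, which the paper does not prove but simply cites from Rockafellar: the first equivalence is the equality case of the Fenchel--Young inequality read off directly from the definitions of $f^*$ and $\partial f(x)$, and the second follows by applying that equivalence to the (proper, closed, convex) conjugate $f^*$ together with the biconjugation identity $f^{**}=f$, which you rightly attribute to Fenchel--Moreau and need not reprove. The only point worth noting is that properness of $f$ is what rules out degenerate $\infty-\infty$ cases in your rearrangements, and your argument uses it correctly.
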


\paragraph{Duality in IRL}\label{app:sec:consequences_irl}
Let $f:\R^{\SAcal}\to\Rbar$ be given by $f\defeq \occreg + \delta_{\Mcal}$, where $\delta_{\Mcal}$ is a characteristic function defined as $\delta_{\Mcal}(\mu)=0$ if $\mu\in\Mcal$ and $\delta_{\Mcal}(\mu)=\infty$, otherwise. Since $f$ is closed proper convex, Theorem~\ref{thm:differentiability} and \ref{thm:conjugate_duality} imply that for a strictly convex $\occreg$ we have
\begin{equation*}
    \RL(r) = \nabla f^*(r) \quad \text{ and } \quad \IRL(\mu) = \partial f(\mu) \cap \Rcal.
\end{equation*}
Additionally, under Assumption~\ref{ass:steep_regularization} and Slater's condition, which is ensured by Assumption~\ref{ass:occ_lower_bound}, we have $\partial f(\mu) = \nabla \occreg(\mu) + \Ucal$ \citep{pmlr-v202-schlaginhaufen23a}. Hence, $\mu$ is optimal for $r$ if and only if $r\in [\nabla \occreg(\mu)]_\Ucal $. Therefore, there is a one-to-one mapping between elements of the quotient space $\R^{\SAcal}/\Ucal$, i.e. reward equivalence classes, and corresponding optimal occupancy measures in $\Mcal$. This mapping is given by 
\begin{equation}
    \nabla f^*:\R^{\SAcal}/\Ucal\to\Mcal, [r]_\Ucal\mapsto\nabla f^*(r) = \RL(r),
\end{equation}
and its inverse by 
\begin{equation}
    \partial f:\Mcal\to \R^{\SAcal}/\Ucal, \mu\mapsto \partial f(\mu) = \nabla \occreg(\mu) + \Ucal.
\end{equation}

\section{Regularizers}\label{app:sec:regularizers}
We dedicate this section to discuss optimal policies in regularized MDPs and to recall their explicit form for Shannon and Tsallis entropy regularization.

\paragraph{Optimal policies} Throughout the appendix, it will convenient to use the notation $\pi_s\defeq \pi(\cdot|s)$. Given some proper closed strongly convex policy regularizer $\polreg$, it can be shown \citep{geist2019theory} that the optimal policy, $\pi^*$, satisfies
\begin{equation}\pi_s^*= \nabla \polreg^*(q^*_s) = \argmax_{\pi_s\in\Delta_{\Acal}} \ip{\pi_s}{q^*_s} - \polreg(\pi_s), \; \forall s\in \Scal,\end{equation}
where $q^*_s\in\R^{\Acal}$ is the optimal $q$-function defined via
\begin{equation}q^*_s(a) \defeq q^*(s,a) \defeq \max_{\pi\in\Delta_{\Acal}^{\Scal}} \E_{\pi}\bs{r(s_t, a_t) + \sum_{t\geq 1}\gamma^t\bs{r(s_t, a_t) - \polreg(\pi_s)}\bigg| s_0=s, a_0=a}.\end{equation}
Next, we discuss the explicit form of $\pi^*_s = \nabla \polreg^*(q^*_s)$ for the specific cases of Shannon and Tsallis-$1/2$ entropy regularization.

\paragraph{Shannon entropy} For some $\tau>0$, we define the Shannon entropy regularizer as $\polreg\defeq - \tau \Hcal$, where
\begin{equation}\Hcal(\pi_s)\defeq -\sum_a \pi_s(a)\log\pi_s(a),\end{equation}
is the Shannon entropy satisfying $0\leq \Hcal \leq \log|\Acal|$. It can be shown that $\polreg$ is $\tau$-strongly convex with respect to $\norm{\cdot}_1$ \citep{cover1999elements}, and the optimal policy satisfies \citep{geist2019theory}
\begin{equation}\pi^*(a|s) = \dfrac{\exp\br{q^*(s,a)/\tau}}{\sum_{a'}\exp\br{q^*(s,a')/\tau}}.\end{equation}

\paragraph{Tsallis entropy} For some parameter $\alpha\in\R$, the Tsallis entropy, $\Hcal_\alpha$, is defined as 
\begin{equation}\Hcal_\alpha(\pi_s) \defeq \dfrac{1}{\alpha-1}\br{1- \sum_a \pi_s(a)^\alpha}.\end{equation}
In the limit $\alpha\to 1$, the Tsallis entropy equals the Shannon entropy defined above. However, in this paper, we use \emph{Tsallis entropy} to refer to the choice $\alpha=1/2$, which is often adopted as regularization in multi-armed bandit and, more recently, policy optimization algorithms \citep{zimmert2021tsallis, dann2023best}. That is, we consider $\polreg(\pi_s) = -\tau \Hcal_{1/2}(\pi_s) = -2\tau \br{\sum_a \sqrt{\pi_s(a)}-1}$ for some $\tau>0$. We have $0 \leq -\polreg \leq 2\tau \br{\sqrt{|\Acal|}-1}$ and it can be shown that the optimal policy satisfies 
\begin{equation}\label{eq:tsallis_policy}
    \pi^*(a|s) = \br{\dfrac{\tau}{x_s - q^*(s,a)}}^2,
\end{equation}
where $x_s$ is a normalization parameter such that $\sum_a \pi^*(a|s) = 1$ \citep{zimmert2021tsallis}. Furthermore, since $\polreg$ has diagonal Hessian $\nabla^2\polreg(\pi_s)_{a,a}=\tau / (2\pi_s(a)^{3/2})$, it is $\tau/2$-strongly convex with respect to $\norm{\cdot}_2$ and hence also $\tau/(2|\Acal|)$-strongly convex with respect to $\norm{\cdot}_1$.

\section{Technical Lemmas}\label{app:sec:technical_lemmas}
In this section, we show several new technical results that are required for the proofs of our main theorems.
\subsection{Lipschitz continuity from policies to occupancy measures}
\begin{proposition}\label{prop:lipschitz-occ2policy}
    Let Assumption~\ref{ass:occ_lower_bound} hold.
    For any $\mu_1,\mu_2\in\Mcal$, we have
    \begin{equation} (1-\gamma)\norm{\mu_1 - \mu_2}_1 \leq \max_s\norm{\pi_s^{\mu_1}-\pi_s^{\mu_2}}_1 \leq \dfrac{2}{\numin} \norm{\mu_1 - \mu_2}_1.\end{equation}
\end{proposition}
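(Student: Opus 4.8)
The plan is to prove the two inequalities separately, as they capture opposite directions. The upper bound says policies vary Lipschitz-continuously in the occupancy measure (this is where the exploration lower bound $\numin$ enters), while the lower bound says the occupancy measure varies Lipschitz-continuously in the policy (this is where the Bellman flow constraint enters, and is what produces the contraction factor $1-\gamma$). Throughout I write $\nu_i(s) = \sum_a \mu_i(s,a)$, so that $\pi_s^{\mu_i}(a) = \mu_i(s,a)/\nu_i(s)$ and, by Assumption~\ref{ass:occ_lower_bound}, $\nu_i(s) \geq \numin$.

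For the \textbf{upper bound} I would fix a state $s$ and use the add-and-subtract decomposition
\[
\pi_s^{\mu_1}(a) - \pi_s^{\mu_2}(a) = \frac{\mu_1(s,a) - \mu_2(s,a)}{\nu_1(s)} + \mu_2(s,a)\left(\frac{1}{\nu_1(s)} - \frac{1}{\nu_2(s)}\right).
\]
Summing absolute values over $a$, the first term contributes $\norm{\mu_1(s,\cdot) - \mu_2(s,\cdot)}_1/\nu_1(s)$, while the second, after using $\sum_a \mu_2(s,a) = \nu_2(s)$, simplifies to $|\nu_1(s)-\nu_2(s)|/\nu_1(s)$. Since $|\nu_1(s)-\nu_2(s)| \leq \norm{\mu_1(s,\cdot)-\mu_2(s,\cdot)}_1$ by the triangle inequality and $\nu_1(s)\geq\numin$, both terms are bounded by $\norm{\mu_1(s,\cdot)-\mu_2(s,\cdot)}_1/\numin$, yielding the factor $2/\numin$. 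Taking the maximum over $s$ and bounding the per-state $\ell_1$ mass by the total mass $\norm{\mu_1-\mu_2}_1$ finishes this direction.

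For the \textbf{lower bound}, the key observation is that both $\mu_1,\mu_2$ satisfy the Bellman flow constraint $(E-\gamma P)^\top\mu_i = (1-\gamma)\nu_0$, so their difference lies in the kernel: $E^\top(\mu_1-\mu_2) = \gamma P^\top(\mu_1-\mu_2)$. Reading off the $s'$-component gives $\nu_1(s')-\nu_2(s') = \gamma\sum_{s,a}P(s'|s,a)(\mu_1-\mu_2)(s,a)$, and summing absolute values (using that $P(\cdot|s,a)$ is a probability distribution) yields the self-bounding estimate $\norm{\nu_1-\nu_2}_1 \leq \gamma\norm{\mu_1-\mu_2}_1$. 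I would then split
\[
\mu_1(s,a) - \mu_2(s,a) = \nu_1(s)\left(\pi_s^{\mu_1}(a) - \pi_s^{\mu_2}(a)\right) + (\nu_1(s)-\nu_2(s))\,\pi_s^{\mu_2}(a),
\]
sum over $(s,a)$, and use $\sum_s\nu_1(s)=1$ together with the previous estimate to obtain $\norm{\mu_1-\mu_2}_1 \leq \max_s\norm{\pi_s^{\mu_1}-\pi_s^{\mu_2}}_1 + \gamma\norm{\mu_1-\mu_2}_1$; rearranging gives the claimed $(1-\gamma)$ factor.

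The main obstacle is the lower bound. Unlike the upper bound, which is a direct pointwise computation, it requires recognizing that the Bellman flow constraint controls how much the state-occupancy marginals $\nu_i$ can differ, and then closing the argument through the self-referential inequality $\norm{\mu_1-\mu_2}_1 \leq (\ldots) + \gamma\norm{\mu_1-\mu_2}_1$. The telescoping decomposition of $\mu_1-\mu_2$ into a ``policy-difference'' part and a ``marginal-difference'' part is exactly what lets these two ingredients combine cleanly.
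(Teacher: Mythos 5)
Your proof is correct. The lower bound is essentially the paper's own argument: the same decomposition $\mu_1-\mu_2 = \nu_1(\pi^{\mu_1}-\pi^{\mu_2}) + (\nu_1-\nu_2)\pi^{\mu_2}$, the same Bellman-flow estimate $\norm{\nu_1-\nu_2}_1\leq\gamma\norm{\mu_1-\mu_2}_1$, and the same rearrangement. Where you genuinely diverge is the upper bound. The paper handles it globally, applying the reverse triangle inequality to the same decomposition to get $\norm{\mu_1-\mu_2}_1 \geq \numin\max_s\norm{\pi^{\mu_1}_s-\pi^{\mu_2}_s}_1 - \gamma\norm{\mu_1-\mu_2}_1$, which again invokes the Bellman flow constraint and yields the slightly sharper constant $(1+\gamma)/\numin$ before loosening it to $2/\numin$. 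You instead work pointwise in $s$, splitting $\pi_s^{\mu_1}-\pi_s^{\mu_2}$ into a numerator-difference term and a normalization term, each controlled by $\norm{\mu_1(s,\cdot)-\mu_2(s,\cdot)}_1/\numin$. Your version never touches the Bellman flow constraint for that direction, so it holds for arbitrary nonnegative measures on $\SAcal$ whose state marginals are bounded below by $\numin$, not just occupancy measures; the price is that you land exactly at $2/\numin$ rather than $(1+\gamma)/\numin$. Both routes deliver the stated inequality, so this is a matter of taste rather than a gap.
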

\begin{proof}
To show the first inequality, we decompose
\begin{align*}
    \norm{\mu_1 - \mu_2}_1 &\leq \sum_{s,a} \abs{\nu_1(s)(\pi^{\mu_1}(a|s) - \pi^{\mu_2}(a|s))} + \sum_{s,a} \abs{(\nu_1(s) - \nu_2(s))\pi^{\mu_2}(a|s)}\\
    &\leq \max_s \norm{\pi^{\mu_1}_s - \pi^{\mu_2}_s}_1 + \norm{\nu_1 - \nu_2}_1\nonumber,
\end{align*}
where we used the triangle and Hölder's inequality. From the Bellman flow constraints,
\begin{equation*}
    \nu(s) = \gamma \sum_{s',a'}P(s|s',a')\mu(s',a') + (1-\gamma) \nu_0(s),
\end{equation*}
it follows that
\begin{align*}
    \norm{\nu_1 - \nu_2}_1 &= \gamma \sum_{s} \abs{\sum_{s',a'}P(s|s',a')(\mu_1(s',a')-\mu_2(s',a'))}\\
    &\leq \gamma \sum_{s',a'}\underbrace{\sum_s P(s|s',a')}_{=1} \abs{\mu_1(s',a')-\mu_2(s',a')}\nonumber\\
    &= \gamma\norm{\mu_1- \mu_2}_1\nonumber,
\end{align*}
where we again used the triangle inequality. Hence, we have
\begin{equation*}
    \max_s \norm{\pi^{\mu_1}_s - \pi^{\mu_2}_s}_1 \geq \norm{\mu_1 - \mu_2}_1 - \norm{\nu_1 - \nu_2}_1 \geq (1-\gamma) \norm{\mu_1- \mu_2}_1.
\end{equation*}
To show the second inequality, we use the reverse triangle inequality
\begin{align}
    \norm{\mu_1 - \mu_2}_1 &\geq \sum_{s,a} \abs{\nu_1(s)(\pi^{\mu_1}(a|s) - \pi^{\mu_2}(a|s))} - \sum_{s,a} \abs{(\nu_1(s) - \nu_2(s))\pi^{\mu_2}(a|s)}\\
    &= \sum_{s} \nu_1(s)\norm{\pi^{\mu_1}_s - \pi^{\mu_2}_s}_1 - \norm{\nu_1 - \nu_2}_1\nonumber,\\
    &\geq \numin \max_s\norm{\pi^{\mu_1}_s - \pi^{\mu_2}_s}_1 - \gamma\norm{\mu_1 - \mu_2}_1\nonumber,
\end{align}
where in the last step we used again that $\norm{\nu_1 - \nu_2}_1\leq \gamma \norm{\mu_1 - \mu_2}_1$.
By rearranging terms we arrive at the desired inequality
\begin{equation}
    \max_s\norm{\pi^{\mu_1}_s - \pi^{\mu_2}_s}_1 \leq \dfrac{1+\gamma}{\numin} \norm{\mu_1 - \mu_2}_1\leq \dfrac{2}{\numin} \norm{\mu_1 - \mu_2}_1.
\end{equation}
\end{proof}

\subsection{Strong convexity}\label{app:sec:strong_convexity}
Next, we show that strong convexity of the policy regularizer translates into strong convexity in the occupancy measure.
\begin{proposition}[Strong convexity] 
\label{prop:strong_convexity_general}
Let Assumption~\ref{ass:occ_lower_bound} hold and suppose that $\polreg$ is $\eta_{\polreg}$-strongly convex with respect to the $\norm{\cdot}_1$ norm. Then, $\occreg$ is $\eta_{\polreg}\numin/H_\gamma^2$-strongly convex with respect to $\norm{\cdot}_1$.
\end{proposition}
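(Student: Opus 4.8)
The plan is to reduce the strong convexity of $\occreg$ over $\Mcal$ to that of $\polreg$ over the simplex by first establishing a clean state-wise decomposition of the Bregman divergence $D_{\occreg}$. Writing $\occreg(\mu)=\sum_s \nu(s)\,\polreg(\pi^\mu_s)$ with $\nu(s)=\sum_a\mu(s,a)$ and $\pi^\mu_s(a)=\mu(s,a)/\nu(s)$, I would work first on $\relint\Mcal$, where each $\pi^\mu_s$ lies in the relative interior of $\Delta_{\Acal}$ and $\occreg$ is differentiable. Using $\partial\nu(s)/\partial\mu(s,a)=1$ and $\partial\pi^\mu_s(a')/\partial\mu(s,a)=(\ind\{a=a'\}-\pi^\mu_s(a'))/\nu(s)$, a direct computation gives
\begin{equation*}
    \nabla\occreg(\mu)_{(s,a)} = \polreg(\pi^\mu_s) - \ip{\nabla\polreg(\pi^\mu_s)}{\pi^\mu_s} + \nabla\polreg(\pi^\mu_s)_a .
\end{equation*}

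Next, I would substitute this gradient into $D_{\occreg}(\mu',\mu)=\occreg(\mu')-\occreg(\mu)-\ip{\nabla\occreg(\mu)}{\mu'-\mu}$ and simplify. Summing the contribution of $\nabla\occreg(\mu)_{(s,a)}$ against $\mu'(s,a)-\mu(s,a)$ over $a$, the state-marginal part contributes $(\polreg(\pi^\mu_s)-\ip{\nabla\polreg(\pi^\mu_s)}{\pi^\mu_s})(\nu'(s)-\nu(s))$ and the policy part contributes $\ip{\nabla\polreg(\pi^\mu_s)}{\nu'(s)\pi^{\mu'}_s-\nu(s)\pi^\mu_s}$; after regrouping, the terms coupling $\nu'(s)-\nu(s)$ with the policies cancel and one is left with the identity
\begin{equation*}
    D_{\occreg}(\mu',\mu)=\sum_s \nu'(s)\,D_{\polreg}(\pi^{\mu'}_s,\pi^\mu_s),
\end{equation*}
where $\nu'(s)=\sum_a\mu'(s,a)$ and $D_{\polreg}$ is the Bregman divergence of the policy regularizer. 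This identity is the crux of the argument and the step I expect to demand the most care: the gradient bookkeeping and the precise cancellation of the cross terms are where an error would most likely creep in.

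With the identity in hand, the remaining estimates are routine. The assumed $\eta_{\polreg}$-strong convexity of $\polreg$ with respect to $\|\cdot\|_1$ gives $D_{\polreg}(\pi^{\mu'}_s,\pi^\mu_s)\geq \tfrac{\eta_{\polreg}}{2}\|\pi^{\mu'}_s-\pi^\mu_s\|_1^2$, and Assumption~\ref{ass:occ_lower_bound} yields $\nu'(s)\geq\numin$, so that $D_{\occreg}(\mu',\mu)\geq \tfrac{\eta_{\polreg}\numin}{2}\sum_s\|\pi^{\mu'}_s-\pi^\mu_s\|_1^2\geq \tfrac{\eta_{\polreg}\numin}{2}\max_s\|\pi^{\mu'}_s-\pi^\mu_s\|_1^2$. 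Applying the lower bound of Proposition~\ref{prop:lipschitz-occ2policy}, $\max_s\|\pi^{\mu'}_s-\pi^\mu_s\|_1\geq (1-\gamma)\|\mu'-\mu\|_1=\|\mu'-\mu\|_1/H_\gamma$, then gives $D_{\occreg}(\mu',\mu)\geq \tfrac{\eta_{\polreg}\numin}{2H_\gamma^2}\|\mu'-\mu\|_1^2$, which is exactly $\eta_{\polreg}\numin/H_\gamma^2$-strong convexity in the Bregman form. To obtain the secant form of strong convexity over all of $\Mcal$ rather than only $\relint\Mcal$, I would finally extend the Bregman characterization to the relative boundary by a continuity/limiting argument, using that $\polreg$ (being strongly convex) stays finite and continuous on the closed simplex.
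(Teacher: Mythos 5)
Your argument is correct and reaches the paper's constant $\eta_{\polreg}\numin/H_\gamma^2$, but it takes a genuinely different route. You first prove the exact decomposition $D_{\occreg}(\mu',\mu)=\sum_s\nu'(s)\,D_{\polreg}(\pi^{\mu'}_s,\pi^{\mu}_s)$ (I checked the cancellation of the cross terms against the gradient formula $\nabla\occreg(\mu)(s,a)=\polreg(\pi^\mu_s)+\nabla\polreg(\pi^\mu_s)(a)-\ip{\nabla\polreg(\pi^\mu_s)}{\pi^\mu_s}$, which is the paper's Equation~\eqref{eq:occreg_gradient}; the identity holds) and then pass to strong convexity in Bregman form. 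The paper instead works directly with the secant (Jensen-type) inequality: for $\mu=\alpha\mu_1+\bar\alpha\mu_2$ it writes $\pi^{\mu}_s=\beta_s\pi^{\mu_1}_s+(1-\beta_s)\pi^{\mu_2}_s$ with $\beta_s=\alpha\nu_1(s)/(\alpha\nu_1(s)+\bar\alpha\nu_2(s))$, applies the secant form of strong convexity of $\polreg$ statewise, and lower-bounds the resulting weight $\nu_1(s)\nu_2(s)/(\alpha\nu_1(s)+\bar\alpha\nu_2(s))$ by $\numin$. Both proofs then share the same endgame: bound the weighted sum by $\numin\max_s\norm{\pi^{\mu_1}_s-\pi^{\mu_2}_s}_1^2$ and invoke the lower inequality of Proposition~\ref{prop:lipschitz-occ2policy}. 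What the paper's route buys is that it never differentiates $\occreg$, so it applies verbatim at all points of $\Mcal$, including the relative boundary where steep regularizers have unbounded gradients; your route buys a cleaner and reusable identity (the Bregman divergence of $\occreg$ is a $\nu'$-weighted sum of policy-level Bregman divergences), at the cost of having to restrict to $\relint\Mcal$ and then extend by a limiting argument. On that last step, be slightly careful with the justification: strong convexity of $\polreg$ does not by itself give continuity of $\polreg$ on the closed simplex (only $\relint\Delta_\Acal\subseteq\interior\Xcal$ is assumed), so the cleanest extension is along line segments toward a relative-interior point, using that a closed proper convex function is continuous on any closed segment contained in its domain; this is a minor repair, not a gap in the main argument.
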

\begin{proof}
    We need to show that for $\alpha\in(0,1), \alphabar=1-\alpha$ and any two $\mu_1,\mu_2\in\mathcal{M}$ it holds that
    \begin{equation}
    \occreg(\alpha \mu_1 + \alphabar \mu_2) \leq \alpha\occreg(\mu_1) + \alphabar\occreg(\mu_2) - \dfrac{\alpha \alphabar \eta}{2} \norm{\mu_1-\mu_2}_1^2,
\end{equation}
for $\eta = \eta_{\polreg}\numin/H_\gamma^2$. To this end, we start similarly as in the proof of strict convexity by \citet{pmlr-v202-schlaginhaufen23a}, but use $\nu(s)\geq \numin$ and strong convexity of $\polreg$.
\begin{align}\label{eq:strong_convexity_eq1}
    &\occreg(\alpha\mu_1+ \bar{\alpha}\mu_2) \\
    =& \sum_{s} \br{\alpha\nu_1(s) + \alphabar\nu_2(s)}\polreg\br{\dfrac{\alpha\mu_1(s,\cdot) + \alphabar\mu_2(s,\cdot)}{\alpha\nu_1(s) + \alphabar\nu_2(s)}}\nonumber\\
    =& \sum_{s} \br{\alpha\nu_1(s) + \alphabar\nu_2(s)}\polreg\br{\dfrac{\alpha\mu_1(s, \cdot)}{\alpha\nu_1(s) + \alphabar\nu_2(s)}\dfrac{\nu_1(s)}{\nu_1(s)} + \dfrac{\alphabar\mu_2(s, \cdot)}{\alpha\nu_1(s) + \alphabar\nu_2(s)}\dfrac{\nu_2(s)}{\nu_2(s)}}\nonumber\\
    =& \sum_{s} \br{\alpha\nu_1(s) + \alphabar\nu_2(s)}\polreg\br{\underbrace{\dfrac{\alpha\nu_1(s)}{\alpha\nu_1(s) + \alphabar\nu_2(s)}}_{\beta_s}\pi^{\mu_1}_s + \underbrace{\dfrac{\alphabar\nu_2(s)}{\alpha\nu_1(s) + \alphabar\nu_2(s)}}_{1-\beta_s}\pi^{\mu_2}_s}\nonumber\\
    \leq& \sum_{s} \br{\alpha\nu_1(s) + \alphabar\nu_2(s)}\br{\beta_s\polreg\br{\pi^{\mu_1}_s}+(1-\beta_s)\polreg\br{\pi^{\mu_2}_s} - \dfrac{\beta_s (1-\beta_s) \eta_{\polreg}}{2}\norm{\pi^{\mu_1}_s - \pi^{\mu_2}_s}^2_1}\nonumber\\
    =& \sum_{s} \br{\alpha\nu_1(s)\polreg\br{\pi^{\mu_1}_s}+\alphabar\nu_2(s)\polreg\br{\pi^{\mu_2}_s} - \dfrac{\alpha \alphabar\eta_{\polreg} }{2}\dfrac{\nu_1(s) \nu_2(s)}{\alpha\nu_1(s) + \alphabar\nu_2(s)}\norm{\pi^{\mu_1}_s - \pi^{\mu_2}_s}^2_1}.
\end{align}
From here on, we use that
\begin{align}\label{eq:strong_convexity_eq2}
    &\sum_s \dfrac{\nu_1(s) \nu_2(s)}{\alpha\nu_1(s) + \alphabar\nu_2(s)}\norm{\pi^{\mu_1}_s - \pi^{\mu_2}_s}^2_1 \nonumber\\
    &=  \sum_s \underbrace{\dfrac{\max\bc{\nu_1(s),\nu_2(s)}}{\alpha\nu_1(s) + \alphabar\nu_2(s)}}_{\geq 1}\underbrace{\min\bc{\nu_1(s),\nu_2(s)}}_{\geq \numin}\norm{\pi^{\mu_1}_s - \pi^{\mu_2}_s}^2_1\nonumber\\
    &\geq \sum_s \numin \norm{\pi^{\mu_1}_s - \pi^{\mu_2}_s}^2_1\nonumber\\
    &\geq \numin \max_s \norm{\pi^{\mu_1}_s - \pi^{\mu_2}_s}^2_1\nonumber\\
    &\geq \numin /H_\gamma^2 \norm{\mu_1-\mu_2}_1^2.
\end{align}
where we used Proposition~\ref{prop:lipschitz-occ2policy} in the last step. Plugging the inequality \eqref{eq:strong_convexity_eq2} back into \eqref{eq:strong_convexity_eq1} concludes the proof.
\end{proof}

\subsection{Lipschitz gradients}\label{app:sec:lipschitz}
In this section, we show how we can get bounds on the Lipschitz constant $L_{\Kcal}$. To this end, we first need to lower bound the optimal policies.
\paragraph{Policy lower bounds}
The following proposition establishes a lower bound for optimal policies with Shannon and Tsallis entropy regularization.
\begin{proposition}\label{prop:policy_lower_bound}
    Let $H_\gamma = 1/(1-\gamma)$ and $r_{\max} \defeq \norm{r}_\infty$. Then, we have the following lower bounds:
    \begin{enumerate}[a)]
        \item If $\polreg = - \tau \Hcal$, then $\pi^*(a|s) \geq \exp\br{-2r_{\max} H_\gamma/\tau}/|\Acal|^{1+H_\gamma}$.
        \item If $\polreg = - \tau \Hcal_{1/2}$, then $\pi^*(a|s) \geq \br{\br{2r_{\max} /\tau + 3\sqrt{|\Acal|}} H_\gamma}^{-2}$.
    \end{enumerate}
\end{proposition}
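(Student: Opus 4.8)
The plan is to reduce both parts to a single quantity, the action-spread of the optimal soft $q$-values $d_s \defeq \max_a q^*(s,a) - \min_a q^*(s,a)$, and then convert a bound on $d_s$ into a policy lower bound using the explicit optimal-policy formulas. Writing $v^*$ for the optimal soft value function, which satisfies the standard Bellman relations $q^*(s,a) = r(s,a) + \gamma\,\E_{s'\mid s,a}[v^*(s')]$ and $v^*(s) = \polreg^*(q^*_s)$, the reward term contributes at most $2r_{\max}$ to $d_s$ and the value term at most $\gamma(\bar v - \underline v)$, where $\bar v \defeq \max_s v^*(s)$ and $\underline v\defeq \min_s v^*(s)$. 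Hence $d_s \leq 2r_{\max} + \gamma(\bar v - \underline v)$, and it suffices to control the span of $v^*$.

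To bound the span of $v^*$, I would use the fixed-point relation $v^*(s) = \polreg^*(q^*_s)$ together with the elementary estimates $\min_a q^*_s(a)\leq \polreg^*(q^*_s)\leq \max_a q^*_s(a) + C$, where $C\defeq \max_{\pi_s\in\Delta_{\Acal}}(-\polreg(\pi_s))$ equals $\tau\log|\Acal|$ for Shannon and $2\tau(\sqrt{|\Acal|}-1)$ for Tsallis; these follow from $-\polreg\geq 0$ and $\ip{\pi_s}{q^*_s}\in[\min_a q^*_s(a),\max_a q^*_s(a)]$. Plugging in $q^*(s,a) = r(s,a)+\gamma\E_{s'\mid s,a}[v^*(s')]$ yields $\bar v\leq r_{\max} + \gamma\bar v + C$ and $\underline v\geq -r_{\max}+\gamma\underline v$, i.e. $\bar v\leq H_\gamma(r_{\max}+C)$ and $\underline v\geq -r_{\max}H_\gamma$. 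Substituting into $d_s\leq 2r_{\max}+\gamma(\bar v-\underline v)$ and simplifying with $1+\gamma H_\gamma = H_\gamma$ and $\gamma H_\gamma\leq H_\gamma$ gives $d_s\leq 2r_{\max}H_\gamma + \tau H_\gamma\log|\Acal|$ for Shannon and $d_s\leq 2r_{\max}H_\gamma + 2\tau H_\gamma(\sqrt{|\Acal|}-1)$ for Tsallis. For part~a), lower-bounding the numerator and upper-bounding each denominator summand in $\pi^*(a|s) = \exp(q^*(s,a)/\tau)/\sum_{a'}\exp(q^*(s,a')/\tau)$ gives $\pi^*(a|s)\geq \exp(-d_s/\tau)/|\Acal|$, and inserting the Shannon bound on $d_s$ yields the stated $\exp(-2r_{\max}H_\gamma/\tau)/|\Acal|^{1+H_\gamma}$ after collecting the $|\Acal|^{-H_\gamma}$ factor.

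For part~b), the explicit policy \eqref{eq:tsallis_policy} involves the implicitly defined normalizer $x_s$, so the remaining task is to bound $x_s - q^*(s,a)\leq x_s - \min_a q^*_s(a)$ from above. I would pin down $x_s$ using the normalization $\sum_a\pi^*(a|s)=1$ in two complementary ways: since each summand is positive and the sum equals one, every term satisfies $(\tau/(x_s - q^*(s,a)))^2\leq 1$, giving $x_s - q^*(s,a)\geq \tau$ (in particular $x_s>\max_a q^*_s$, so the policy is well-defined); and the largest term is at least the average $1/|\Acal|$, giving $(\tau/(x_s-\max_a q^*_s))^2\geq 1/|\Acal|$, i.e. $x_s - \max_a q^*_s\leq \tau\sqrt{|\Acal|}$. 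Combining the latter with the spread bound gives $x_s - \min_a q^*_s\leq d_s + \tau\sqrt{|\Acal|}\leq \tau H_\gamma(2r_{\max}/\tau + 3\sqrt{|\Acal|})$ after using $H_\gamma\geq 1$ to absorb the leftover terms, and substituting into \eqref{eq:tsallis_policy} yields $\pi^*(a|s)\geq (H_\gamma(2r_{\max}/\tau+3\sqrt{|\Acal|}))^{-2}$, as claimed.

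I expect the main obstacle to be controlling the Tsallis normalizer $x_s$, which is defined only implicitly through $\sum_a\pi^*(a|s)=1$; the sandwiching argument above -- the largest term lies between $1/|\Acal|$ and $1$ -- is the key device that turns this implicit constraint into the explicit two-sided bound $\max_a q^*_s + \tau\leq x_s\leq \max_a q^*_s + \tau\sqrt{|\Acal|}$. A secondary point requiring care is the constant bookkeeping, namely absorbing the stray $\tau\sqrt{|\Acal|}$ and $-2\tau H_\gamma$ contributions into the clean $3\sqrt{|\Acal|}$ term so that the stated constants come out exactly.
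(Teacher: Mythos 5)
Your proof is correct and follows essentially the same route as the paper's: both reduce to the two-sided bounds $-r_{\max}H_\gamma\leq q^*(s,a)\leq (r_{\max}+C)H_\gamma$ (with $C$ the maximal entropy bonus, $\tau\log|\Acal|$ or $2\tau\sqrt{|\Acal|}$) and then read off the policy lower bound from the explicit optimal-policy formulas, including the sandwich $\tau\leq x_s-\max_a q^*_s(a)\leq\tau\sqrt{|\Acal|}$ for the Tsallis normalizer. The one point of difference is that you derive that sandwich directly from the normalization constraint $\sum_a\pi^*(a|s)=1$, whereas the paper imports it from Lemma~10 of Ouhamma et al.; your derivation is a valid self-contained substitute, with the minor caveat that passing from $(x_s-q^*(s,a))^2\geq\tau^2$ to $x_s-q^*(s,a)\geq\tau$ presupposes $x_s>\max_a q^*_s(a)$, which comes from the KKT conditions defining $x_s$ rather than from the squared inequality itself.
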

\begin{proof}  Recall the formula for the optimal policies in Appendix~\ref{app:sec:regularizers}.
    
    \emph{Part a):} Since, $-r_{\max} H_\gamma \leq q^*(s,a) \leq \br{r_{\max} + \tau \log|\Acal|}H_\gamma$, it holds that
    \begin{align}
        \pi^*(a|s) &\geq \dfrac{\exp\br{-r_{\max} H_\gamma/\tau}}{|\Acal|\exp\br{\br{r_{\max} + \tau \log|\Acal|}H_\gamma/\tau}} \\
        &= \dfrac{\exp\br{-r_{\max} H_\gamma/\tau}}{|\Acal|^{1+H_\gamma}\exp\br{r_{\max} H_\gamma/\tau}} = \dfrac{\exp\br{-2r_{\max} H_\gamma/\tau}}{|\Acal|^{1+H_\gamma}}.
    \end{align}
    \emph{Part b):} The proof of b) is similar to \citep[Lemma~8]{ouhamma2023learning}. However, our settings are slightly different. Recall that
    \begin{equation*}
        \pi^*(a|s) = \br{\dfrac{\tau}{x_s - q^*(s,a)}}^2.
    \end{equation*}  
    By \citet[Lemma 10]{ouhamma2023learning} we have $\tau \leq x_s - \norm{q_s^*}_\infty \leq \tau \sqrt{|\Acal|}$. Furthermore, it holds that $-r_{\max} H_\gamma \leq q^*(s,a) \leq \br{r_{\max} + 2\tau \sqrt{|\Acal|}} H_\gamma$. Hence, we have 
    \begin{equation}0<x_s - q^*(s,a) \leq \tau \sqrt{|\Acal|} +  \br{r_{\max} + 2\tau \sqrt{|\Acal|}} H_\gamma + r_{\max} H_\gamma \leq \br{2r_{\max} + 3\tau \sqrt{|\Acal|}} H_\gamma,\end{equation}
    which yields the desired lower bound.    
\end{proof}
We also highlight the following result, which shows that if the policy $\pi^\mu$ is lower bounded on some set $\Kcal\subset \Mcal$, then it is also lower bounded on its convex hull $\conv\Kcal$.
\begin{proposition}
\label{prop:lines2lines}
    Suppose $\mu = \alpha \mu_1 + (1-\alpha) \mu_2$ with $\alpha \in (0,1)$ and $\mu_1, \mu_2\in\Mcal$. Then, 
    \begin{equation}\pi_s^{\mu} = \underbrace{\dfrac{\alpha\nu_1(s)}{\alpha\nu_1(s) + \alphabar\nu_2(s)}}_{\beta_s}\pi^{\mu_1}_s + \underbrace{\dfrac{\alphabar\nu_2(s)}{\alpha\nu_1(s) + \alphabar\nu_2(s)}}_{1-\beta_s}\pi^{\mu_2}_s,\end{equation}
    where $\beta_s \in (0,1)$.
\end{proposition}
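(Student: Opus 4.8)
The plan is to prove the identity by a direct computation from the definitions of the state marginal $\nu$ and the induced conditional policy $\pi^\mu$, the only subtle point being to certify that every conditional involved is genuinely well-defined, i.e. that the relevant state marginals are strictly positive along the segment. First I would record that the state marginal is affine in the occupancy measure: summing over actions in $\mu = \alpha\mu_1 + \alphabar\mu_2$ (with $\alphabar = 1-\alpha$) gives $\nu(s) = \sum_a \mu(s,a) = \alpha\nu_1(s) + \alphabar\nu_2(s)$, so the denominator appearing in the claimed weights is precisely $\nu(s)$.

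Second I would argue positivity, which is where the hypotheses enter. By Assumption~\ref{ass:occ_lower_bound} every $\mu\in\Mcal$ satisfies $\nu(s)\geq\numin>0$; applying this to $\mu_1$ and $\mu_2$ shows $\nu_1(s),\nu_2(s)>0$, so the conditionals $\pi^{\mu_1}_s$ and $\pi^{\mu_2}_s$ are given by the ratio $\mu_i(s,a)/\nu_i(s)$ rather than by the arbitrary $1/|\Acal|$ fallback in the definition of $\pi^\mu$. Since $\alpha,\alphabar\in(0,1)$, we also get $\nu(s)=\alpha\nu_1(s)+\alphabar\nu_2(s)>0$, so $\pi^\mu_s$ is likewise well-defined by the ratio; this is the single step where both $\alpha\in(0,1)$ and the exploration assumption are needed.

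Finally I would substitute $\mu_i(s,a)=\nu_i(s)\,\pi^{\mu_i}(a|s)$ into $\pi^\mu(a|s)=\mu(s,a)/\nu(s)$ and factor out the common denominator $\nu(s)=\alpha\nu_1(s)+\alphabar\nu_2(s)$, yielding $\pi^\mu(a|s)=\beta_s\,\pi^{\mu_1}(a|s)+(1-\beta_s)\,\pi^{\mu_2}(a|s)$ with $\beta_s=\alpha\nu_1(s)/(\alpha\nu_1(s)+\alphabar\nu_2(s))$. The bound $\beta_s\in(0,1)$ is then immediate, since the numerator $\alpha\nu_1(s)$ is strictly positive and strictly smaller than the denominator because $\alphabar\nu_2(s)>0$. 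I do not expect any genuine obstacle here beyond bookkeeping: the real content is simply that the map from occupancy measures to per-state conditionals is a projective (normalized) rather than linear operation, and the strict positivity of the marginals guaranteed by Assumption~\ref{ass:occ_lower_bound} is exactly what keeps this normalization well-defined and the weights $\beta_s$ in the open interval along the whole segment.
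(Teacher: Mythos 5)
Your proof is correct and follows essentially the same route as the paper, which simply points to the identical algebraic manipulation already carried out inside the proof of Proposition~\ref{prop:strong_convexity_general} (writing $\mu_i(s,\cdot)=\nu_i(s)\pi^{\mu_i}_s$ and normalizing by $\nu(s)=\alpha\nu_1(s)+\alphabar\nu_2(s)$). Your version is slightly more self-contained in that it makes explicit the role of Assumption~\ref{ass:occ_lower_bound} in guaranteeing that all the conditionals are defined by ratios rather than the arbitrary fallback, but the underlying computation is the same.
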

\vspace{-0.2cm}
\begin{proof}
    The proof follows immediately from the proof of Proposition~\ref{prop:strong_convexity_general}.
\end{proof}

\paragraph{Hessian upper bounds}
In the following, we establish upper bounds for the Hessians of the occupancy measure regularizations, $\occreg$, resulting from Shannon and Tsallis entropy regularization of the policy. In particular, we aim to upper-bound the maximum norm of the Hessian in terms of the smallest entry of the policy.
\begin{proposition}\label{prop:hessian_upper_bound}
    Let Assumption~\ref{ass:occ_lower_bound} hold. Consider $\mu\in\Mcal$ and let $\pimin = \min_{s,a}\pi^\mu(a|s)>0$. Then, the Hessian of $\occreg$ is upper bounded as follows:
    \begin{enumerate}[a)]
        \item If $\polreg = - \tau \Hcal$, then $\norm{\nabla^2 \occreg(\mu)}_\infty\leq \frac{\tau}{\numin \pimin}$.
        \item If $\polreg = - \tau \Hcal_{1/2}$, then $\norm{\nabla^2 \occreg(\mu)}_\infty\leq \frac{\tau}{\numin \pimin^{3/2}}$. 
    \end{enumerate}
    Here, $\norm{\cdot}_\infty$ denotes the maximum norm $\norm{A}_\infty=\max_{ij} \abs{A_{ij}}$.
\end{proposition}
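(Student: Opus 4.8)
The plan is to exploit that $\occreg$ decomposes over states and that for both regularizers the policy Hessian $\nabla^2\polreg$ is diagonal. First I would write $\occreg(\mu) = \sum_s \nu(s)\,\polreg(\pi_s^\mu) = \sum_s \phi(\mu(s,\cdot))$, where $\phi(x) \defeq (\ones^\top x)\,\polreg\bigl(x/(\ones^\top x)\bigr)$ is the perspective function of $\polreg$ and $\mu(s,\cdot)\defeq(\mu(s,a))_{a\in\Acal}$. Since each summand depends only on the block $\mu(s,\cdot)$, the Hessian $\nabla^2\occreg(\mu)$ is block diagonal with $|\Scal|$ blocks $\nabla^2\phi(\mu(s,\cdot))$, so that $\norm{\nabla^2\occreg(\mu)}_\infty = \max_s \norm{\nabla^2\phi(\mu(s,\cdot))}_\infty$ and it suffices to bound a single perspective block. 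Here $\nu(s)\geq\numin>0$ by Assumption~\ref{ass:occ_lower_bound}, and steepness places $p = \pi_s^\mu$ in the interior of the simplex, so $\phi$ is twice differentiable at $\mu(s,\cdot)$.

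Next I would compute the Hessian of the perspective function. Writing $t = \ones^\top x = \nu(s)$ and $p = x/t = \pi_s^\mu$, a direct differentiation gives $\nabla^2\phi(x) = t^{-1}(I - \ones p^\top)\,\nabla^2\polreg(p)\,(I - p\ones^\top)$. For both regularizers the policy Hessian is diagonal, $\nabla^2\polreg(p) = \diag(d)$, with $d_a = \tau/p_a$ for $\polreg = -\tau\Hcal$ and $d_a = \tfrac{\tau}{2}p_a^{-3/2}$ for $\polreg = -\tau\Hcal_{1/2}$ (from the Hessians recalled in Appendix~\ref{app:sec:regularizers}). Expanding the matrix product entrywise then yields the closed form
\[
    [\nabla^2\phi(x)]_{ac} = \frac{1}{t}\Big(d_a\delta_{ac} - d_a p_a - d_c p_c + \textstyle\sum_b d_b p_b^2\Big),
\]
which reduces the problem to bounding the maximal absolute entry of this explicit expression.

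For the Shannon case I would substitute $d_a p_a = \tau$ and $\sum_b d_b p_b^2 = \tau$, so the off-diagonal entries equal $-\tau/t$ and the diagonal entries equal $\tfrac{\tau}{t}(p_a^{-1} - 1)$; both are bounded in absolute value by $\tau/(t\,\pimin)$, and $t = \nu(s)\geq\numin$ gives part a). For the Tsallis case I would use $d_a p_a = \tfrac{\tau}{2}p_a^{-1/2}$ and $\sum_b d_b p_b^2 = \tfrac{\tau}{2}\sum_b\sqrt{p_b}$. The convenient leverage is that $\nabla^2\phi\succeq 0$ (the perspective of a convex function is convex), so the diagonal entries are nonnegative and I only need to upper-bound them; the off-diagonal entries I would bound on both sides using $0\leq \tfrac{\tau}{2}p_a^{-1/2}\leq\tfrac{\tau}{2}\pimin^{-1/2}$ together with $\sum_b\sqrt{p_b}\leq\sqrt{|\Acal|}$ (Cauchy--Schwarz).

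The main obstacle is controlling $\sum_b\sqrt{p_b}\leq\sqrt{|\Acal|}$ so that it does not introduce a spurious $|\Acal|$ dependence. The key observation that resolves this is $\pimin\leq 1/|\Acal|$ (the minimum of a distribution over $|\Acal|$ atoms is at most its average), which gives $\sqrt{|\Acal|}\leq\pimin^{-3/2}$ since $|\Acal|^{-1}\leq|\Acal|^{-1/3}$ for $|\Acal|>1$; combined with $p_a^{-3/2}\leq\pimin^{-3/2}$ and $p_a^{-1/2}\leq\pimin^{-3/2}$ this bounds every entry by $\tau/(t\,\pimin^{3/2})$, and $t\geq\numin$ yields part b). I would double-check the entrywise differentiation and the positive-semidefiniteness/sign claims, as these are the only places where an error could creep in.
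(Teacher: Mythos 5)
Your proof is correct and follows essentially the same route as the paper's: both arguments compute the per-state Hessian block of $\occreg$ entrywise (the paper by differentiating the gradient formula \eqref{eq:occreg_gradient}, you via the equivalent perspective-function identity $\nabla^2\phi(x)=t^{-1}(I-\ones p^\top)\nabla^2\polreg(p)(I-p\ones^\top)$, whose entries match the paper's expressions) and then bound each entry using $\nu(s)\geq\numin$, $p_a\geq\pimin$, and $\sum_b\sqrt{p_b}\leq\sqrt{|\Acal|}\leq\pimin^{-1/2}$. The only cosmetic difference is your cleaner packaging of the cancellation of first-derivative terms and the use of positive semidefiniteness for the Tsallis diagonal entries, neither of which changes the substance of the bound.
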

\vspace{-0.2cm}
\begin{proof}
    As shown by \citet[Proposition B.2]{pmlr-v202-schlaginhaufen23a}, the gradient of $\occreg$ satisfies
    \begin{equation}\label{eq:occreg_gradient}
        \nabla\occreg(\mu)(s,a) = \polreg(\pi_s^\mu) + \nabla\polreg(\pi_s^\mu)(a) - \ip{\nabla\polreg(\pi_s^\mu)}{\pi_s^\mu}.
    \end{equation}
    Moreover, we have
    \begin{equation}
        \dfrac{\partial \pi^{\mu}(s,a)}{\partial \mu(s',a')} = \delta_{s,s'}\cdot\dfrac{\delta_{a,a'}-\pi^{\mu}(a|s)}{\nu(s)}.
    \end{equation}
    Using the above two formulas, we can calculate the Hessians explicitly.
    
    \emph{Part a):}
    For the Shannon entropy it holds by \eqref{eq:occreg_gradient} that $\nabla\occreg(\mu)(s,a)=\tau \log \pi^\mu(a|s)$. Hence, 
    \begin{equation}\dfrac{\partial^2 \occreg(\mu)}{\partial \mu(s',a')\partial \mu(s,a)} = \tau\cdot\dfrac{1}{\pi^{\mu}(a|s)}\cdot\delta_{s,s'}\cdot\dfrac{\delta_{a,a'}-\pi^{\mu}(a|s)}{\nu(s)},\end{equation}
    and
    \begin{equation*}
        \abs{\nabla^2\occreg(\mu)_{(s',a'),(s,a)}} = \abs{\dfrac{\partial^2 \occreg(\mu)}{\partial \mu(s',a')\partial \mu(s,a)}} \leq \dfrac{\tau}{\numin\pimin}.
    \end{equation*}
    \emph{Part b):} For the Tsallis entropy it holds by \eqref{eq:occreg_gradient} that 
    \begin{equation*}
        \nabla \occreg(\mu)(s,a) = - \tau\br{\sum_{a''}\sqrt{\pi^\mu(a''|s)} + \dfrac{1}{\sqrt{\pi^\mu(a|s)}} - 2}.
    \end{equation*}
    Therefore, the second derivative is bounded as follows
    \begin{align*}
        \abs{\dfrac{\partial^2 \occreg(\mu)}{\partial \mu(s',a')\partial \mu(s,a)}} &= \Bigg| - \tau\cdot\delta_{s,s'}\cdot \Bigg(\sum_{a''}\dfrac{1}{2\sqrt{\pi^\mu(a''|s)}}\dfrac{\delta_{a',a''} -\pi^{\mu}(a''|s)}{\nu(s)} \\ 
        &\hspace{5.3cm}- \dfrac{1}{2 \pi^\mu(a|s)^{3/2}}\dfrac{\delta_{a,a'}-\pi^{\mu}(a|s)}{\nu(s)}\Bigg) \Bigg|\\
        &= \dfrac{\tau\cdot\delta_{s,s'}}{2\nu(s)}\abs{\dfrac{1}{\sqrt{\pi^\mu(a'|s)}} - \sum_{a''} \sqrt{\pi^\mu(a''|s)} - \dfrac{\delta_{a,a'}}{\pi^\mu(a|s)^{3/2}} + \dfrac{1}{\sqrt{\pi^\mu(a|s)}}}\\
        &\leq \dfrac{\tau}{2\numin}\br{\underbrace{\abs{\dfrac{1}{\sqrt{\pi^\mu(a'|s)}} - \sum_{a''} \sqrt{\pi^\mu(a''|s)}}}_{(A)} + \underbrace{\abs{\dfrac{\delta_{a,a'}}{\pi^\mu(a|s)^{3/2}} - \dfrac{1}{\sqrt{\pi^\mu(a|s)}}}}_{(B)}}.
    \end{align*}
    Now, since $\sum_a \sqrt{\pi^\mu(a|s)} \leq \sqrt{A} \leq 1/\sqrt{\pimin}$, we have $(A)+(B)\leq 1/\sqrt{\pimin} + 1/\pimin^{3/2}\leq 2/\pimin^{3/2}$, which yields the desired result 
    \begin{equation}
        \abs{\nabla^2\occreg(\mu)_{(s',a'),(s,a)}}\leq \tau/\br{\numin \pimin^{3/2}}.
    \end{equation}
\end{proof}

\vspace{-0.2cm}
\paragraph{Lipschitz gradients}
Next, we provide explicit Lipschitz constants $L_{\Kcal}$ for $\nabla\occreg$ corresponding to Shannon and Tsallis entropy regularization.
\begin{restatable}[Lipschitz gradients]{proposition}{lipschitzgradients}
\label{prop:lipschitz_gradients}%
Consider some closed convex set $\Kcal\subset \Mcal$ and suppose $\pimin = \min_{\mu\in\Kcal}\min_{s,a} \pi^\mu(a|s)>0$. Then, the gradient of $\occreg$ is Lipschitz continuous over $\Kcal$ i.e. 
\begin{equation}\norm{\nabla\occreg(\mu_1)-\nabla\occreg(\mu_2)}_2 \leq L_{\Kcal} \norm{\mu_1 -\mu_2}_2, \quad \forall \mu_1, \mu_2 \in\Kcal,\end{equation}
where the respective Lipschitz constants are as follows:
\begin{enumerate}[a)]
    \item If $\polreg = - \tau \Hcal$, then $L_{\Kcal} = \tau|\Scal||\Acal| / (\numin\pimin)$.
    \item If $\polreg = - \tau \Hcal_{1/2}$, then $L_{\Kcal} = \tau|\Scal||\Acal| / (\numin\pimin^{3/2})$.
\end{enumerate}
\end{restatable}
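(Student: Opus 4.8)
The plan is to combine the entrywise Hessian bounds of Proposition~\ref{prop:hessian_upper_bound} with the integral form of the mean value theorem. Since $\pimin>0$ on $\Kcal$, every occupancy measure in $\Kcal$ induces a policy in the interior of the simplex, so for both the Shannon and the Tsallis regularizer the explicit gradient formula~\eqref{eq:occreg_gradient} extends to a twice continuously differentiable function on the open region where $\nu(s)>0$ and $\pi^\mu_s$ is interior, which contains $\Kcal$. First I would fix $\mu_1,\mu_2\in\Kcal$, set $\mu_t=\mu_2+t(\mu_1-\mu_2)$, and use convexity of $\Kcal$ to conclude $\mu_t\in\Kcal$ for all $t\in[0,1]$. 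By definition of $\pimin$ — or, made explicit, via Proposition~\ref{prop:lines2lines}, which shows each $\pi^{\mu_t}_s$ is a convex combination of $\pi^{\mu_1}_s$ and $\pi^{\mu_2}_s$ — every entry of $\pi^{\mu_t}$ stays at least $\pimin$, so the Hessian bound of Proposition~\ref{prop:hessian_upper_bound} holds uniformly along the whole segment.

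Next I would write the fundamental theorem of calculus for the gradient,
\[
\nabla\occreg(\mu_1)-\nabla\occreg(\mu_2)=\left(\int_0^1\nabla^2\occreg(\mu_t)\,dt\right)(\mu_1-\mu_2),
\]
which is valid because $t\mapsto\nabla\occreg(\mu_t)$ is $C^1$ on $[0,1]$ by the smoothness just established (note $\mu_1-\mu_2$ is tangent to $\Mcal$, so only the action of the Hessian on this direction enters). Taking Euclidean norms gives
\[
\norm{\nabla\occreg(\mu_1)-\nabla\occreg(\mu_2)}_2\le\sup_{t\in[0,1]}\norm{\nabla^2\occreg(\mu_t)}_2\,\norm{\mu_1-\mu_2}_2,
\]
where $\norm{\cdot}_2$ applied to the matrix denotes the spectral norm.

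The key remaining step is to convert the entrywise max-norm bound of Proposition~\ref{prop:hessian_upper_bound} into a spectral-norm bound. For an $N\times N$ matrix $A$ with $N=|\Scal||\Acal|$ one has $\norm{A}_2\le\norm{A}_F\le N\norm{A}_\infty$, where $\norm{A}_\infty=\max_{ij}|A_{ij}|$ and $\norm{A}_F$ is the Frobenius norm (the second inequality since there are $N^2$ entries each bounded by $\norm{A}_\infty$). Applying this with $A=\nabla^2\occreg(\mu_t)$ and inserting the respective bounds $\tau/(\numin\pimin)$ for the Shannon and $\tau/(\numin\pimin^{3/2})$ for the Tsallis case yields exactly $L_{\Kcal}=\tau|\Scal||\Acal|/(\numin\pimin)$ and $L_{\Kcal}=\tau|\Scal||\Acal|/(\numin\pimin^{3/2})$, respectively.

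I expect the main obstacle to be this dimension-dependent passage from the max norm to the spectral norm: the factor $|\Scal||\Acal|$ originates from the crude Frobenius bound, and one must verify that Proposition~\ref{prop:hessian_upper_bound} is genuinely a uniform entrywise bound, holding at every $\mu_t$ along the segment rather than merely at the endpoints — which is precisely why the convexity of $\Kcal$ together with the policy lower bound (and Proposition~\ref{prop:lines2lines}) is needed. A secondary point to confirm is the twice-differentiability of $\occreg$ required to invoke the integral identity, which again hinges on $\pimin>0$ keeping the induced policies bounded away from the relative boundary of $\Delta_{\Acal}$.
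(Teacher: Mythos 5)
Your proposal is correct and follows essentially the same route as the paper: the fundamental theorem of calculus along the segment (which stays in $\Kcal$ by convexity, so the entrywise Hessian bounds of Proposition~\ref{prop:hessian_upper_bound} apply uniformly with the lower bound $\pimin$), followed by a dimension-dependent norm conversion. The only cosmetic difference is that you pass through the spectral and Frobenius norms ($\norm{A}_2\le\norm{A}_F\le |\Scal||\Acal|\max_{ij}|A_{ij}|$), whereas the paper combines $\norm{\cdot}_2\le\sqrt{|\Scal||\Acal|}\norm{\cdot}_\infty$ with Hölder's inequality and $\norm{h}_1\le\sqrt{|\Scal||\Acal|}\norm{h}_2$; both yield the identical factor $|\Scal||\Acal|$ and hence the same constants.
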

\begin{proof}
    Defining $h=\mu_2 - \mu_1$, Lipschitz continuity follows from
    \begin{align}
        \norm{\nabla\occreg(\mu_1)-\nabla\occreg(\mu_2)}_2 &\stackrel{(i)}{\leq} \sqrt{|\Scal||\Acal|}\norm{\nabla\occreg(\mu_1)-\nabla\occreg(\mu_2)}_\infty \\
        &\stackrel{(ii)}{=} \sqrt{|\Scal||\Acal|}\norm{\int_0^1\nabla^2\occreg(\mu_1 + th)h dt}_\infty \\
         &\stackrel{(iii)}{\leq} \sqrt{|\Scal||\Acal|}\int_0^1\norm{\nabla^2\occreg(\mu_1 + th)h}_\infty dt \\
         &\stackrel{(iv)}{\leq} \sqrt{|\Scal||\Acal|}\int_0^1\norm{\nabla^2\occreg(\mu_1 + th)}_\infty \norm{h}_1 dt\\
         &\stackrel{(v)}{\leq} |\Scal||\Acal| \max_{0\leq t \leq 1}\norm{\nabla^2\occreg(\mu_1 + th)}_\infty \norm{\mu_1 - \mu_2}_2.
    \end{align}
    Here, we used in $(i)$ and $(v)$ that $\norm{x}_1 \leq \sqrt{n}\norm{x}_2 \leq n\norm{x}_\infty$ for $x\in\R^n$, and in $(ii)$ we applied the fundamental theorem of calculus. Moreover, $(iii)$ follows from $\abs{\int f}\leq \int \abs{f}$, and $(iv)$ from Hölder's inequality. Now, by convexity $\mu_1, \mu_2 \in\Kcal$ implies that $\mu_1 + th\in\Kcal$ for $t\in[0,1]$. Hence, plugging in the upper bounds from Proposition~\ref{prop:hessian_upper_bound} concludes the proof.
\end{proof}

\subsection{Dual smoothness and strong convexity}\label{app:sec:dual_smoothness_strong_convexity}
Next, we show that the convex conjugate, $f^*$, of the extended real value function $f\defeq \occreg + \delta_{\Mcal}$ (see Appendix~\ref{app:sec:convex_background}) is -- if understood as a mapping from $\R^{\SAcal} / \Ucal$ to $\R$ -- both smooth and strongly convex on $\Rcal$ with respect to the quotient norm. While it is well-known that global smoothness and strong convexity are dual properties \citep[Proposition 12.60]{rockafellar2009variational}, the key challenge for proving dual strong convexity is that $\occreg$ has only locally Lipschitz gradients (see Proposition~\ref{prop:lipschitz_gradients}). Proposition~\ref{prop:dual_strong_convexity_smoothness} below shows that $\eta$-strong convexity of $\occreg$ implies dual $1/\eta$-smoothness and locally Lipschitz gradients imply dual $\sigma_{\Rcal}$-strong convexity on $\Rcal$ for some $\sigma_{\Rcal}>0$. Moreover, we provide explicit lower bounds on $\sigma_{\Rcal}$ for Shannon and Tsallis entropy entropy regularization.
\begin{restatable}{proposition}{dualstrongconvexitysmoothness}
    \label{prop:dual_strong_convexity_smoothness}%
    Let $f^*$ be the convex conjugate of $f\defeq \occreg + \delta_{\Mcal}$. Then, the following holds:
    \begin{enumerate}[a)]
        \item Suppose that $\occreg$ is $\eta$-strongly convex over $\Mcal$, that is for all $\mu,\mu'\in\Mcal$ it holds that
        \begin{equation*}
            \occreg(\mu') \geq \occreg(\mu) + \ip{\nabla \occreg(\mu)}{\mu'-\mu} + \dfrac{\eta}{2} \norm{\mu-\mu'}_2^2,
        \end{equation*}
        then we have for all $r,r'\in\R^{\SAcal}$ that
        \begin{equation}\label{eq:dual_smoothness}
            f^*(r') \leq  f(r) + \ip{\nabla f^*(r)}{r'-r} + \dfrac{1}{2\eta} \norm{[r']_{\Ucal}-[r]_{\Ucal}}_{2}^2.
        \end{equation}
        \item Suppose that for any closed convex subset $\mathcal{K}\subset \relint \Mcal$, there is some $L_{\mathcal{K}}>0$ such that for all $\mu, \mu'\in\mathcal{K}$ it holds that
        \begin{equation*}
            \norm{\nabla\occreg(\mu) - \nabla\occreg(\mu')}_2 \leq L_{\Kcal} \norm{\mu-\mu'}_2,
        \end{equation*}
        then we have for all $r,r'\in\Rcal$ that
        \begin{equation}\label{eq:dual_sc}
            f^*(r') \geq f^*(r) + \ip{\nabla f^*(r)}{r'-r} + \dfrac{\sigma_{\Rcal}}{2}\norm{[r']_{\Ucal}-[r]_{\Ucal}}_{2}^2,
        \end{equation}
        for some $\sigma_{\Rcal}>0$.
        \item Let $H_\gamma \defeq 1/(1-\gamma)$, $\rmax\defeq\max_{r\in\Rcal}\norm{r}_\infty$, $\DR = \max_{r,r'\in\Rcal} \norm{r-r'}_2$, and suppose that $\tau<D$, then for the Shannon entropy the inequality \eqref{eq:dual_sc} holds with
        \begin{equation}
            \sigma_{\Rcal} = \dfrac{\numin\exp\br{\frac{-2R H_\gamma}{\tau}}}{2D |\Scal||\Acal|^{2+H_\gamma}},
        \end{equation}
        and for the Tsallis entropy with
        \begin{equation}
             \sigma_{\Rcal} = \dfrac{\numin}{2\sqrt{2}D |\Scal||\Acal|\br{\br{2R /\tau + 3\sqrt{|\Acal|}} H_\gamma}^3 }.
        \end{equation}
    \end{enumerate}
\end{restatable}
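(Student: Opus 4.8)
My plan is to treat all three parts through the conjugate-duality dictionary of Appendix~\ref{app:sec:convex_background}. The two facts I would exploit repeatedly are: (i) $\nabla f^*(r) = \RL(r)$ and $r - \nabla\occreg(\RL(r)) \in \Ucal$, so reward classes and optimal occupancy measures are in bijection; and (ii) any two occupancy measures in $\Mcal$ solve the same Bellman-flow equation $(E-\gamma P)^\top\mu = (1-\gamma)\nu_0$, so $\mu - \mu'$ lies in the kernel of $(E-\gamma P)^\top$, which is $\Ucal^\perp$. Fact (ii) gives, for $\mu = \RL(r)$ and $\mu' = \RL(r')$, the identity $\ip{r-r'}{\mu-\mu'} = \ip{\Pi_{\Ucal^\perp}(r-r')}{\mu-\mu'}$, coupling the quotient norm $\norm{[r]_{\Ucal}-[r']_{\Ucal}}_2 = \norm{\Pi_{\Ucal^\perp}(r-r')}_2$ to occupancy distances. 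Combining (i), (ii) and the definition of the conjugate also yields the Bregman identity $D_{f^*}(r',r) = D_{\occreg}(\RL(r),\RL(r'))$, where the $\Ucal$-component of the subgradient is discarded because $\mu-\mu'\perp\Ucal$. This is the common reduction underlying both \eqref{eq:dual_smoothness} and \eqref{eq:dual_sc}.

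\textbf{Part a).} Writing $\mu = \RL(r)$, $\mu' = \RL(r')$, strong convexity of $\occreg$ together with $r - \nabla\occreg(\mu)\in\Ucal$, $r' - \nabla\occreg(\mu')\in\Ucal$ and $\mu-\mu'\perp\Ucal$ gives $\ip{r-r'}{\mu-\mu'} = \ip{\nabla\occreg(\mu)-\nabla\occreg(\mu')}{\mu-\mu'} \geq \eta\norm{\mu-\mu'}_2^2$. With Cauchy--Schwarz and fact (ii) this yields the quotient gradient-Lipschitz bound $\norm{\RL(r)-\RL(r')}_2 \leq \tfrac1\eta\norm{[r]_{\Ucal}-[r']_{\Ucal}}_2$. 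I would then integrate $\nabla f^* = \RL$ along $r_t = r + t(r'-r)$: since $D_{f^*}(r',r) = \int_0^1 \ip{\RL(r_t)-\RL(r)}{r'-r}\,dt$ and $\RL(r_t)-\RL(r)\perp\Ucal$, each integrand is at most $\tfrac{t}\eta\norm{[r']_{\Ucal}-[r]_{\Ucal}}_2^2$, which integrates to the factor $\tfrac1{2\eta}$ in \eqref{eq:dual_smoothness}.

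\textbf{Part b).} This is the delicate direction, because by Assumption~\ref{ass:steep_regularization} $\occreg$ cannot be globally smooth, so I cannot simply dualize a global smoothness bound. My plan is first to \emph{localize}: as $\Rcal$ is compact and $\polreg$ is steep, the optimal policies for $r\in\Rcal$ are uniformly bounded away from $\relbd\Delta_{\Acal}$ (quantified by Proposition~\ref{prop:policy_lower_bound}), so $\RL(\Rcal)$ lies in a compact convex $\Kcal\subset\relint\Mcal$ on which, by Proposition~\ref{prop:lines2lines}, the lower bound $\pimin$ persists; hence $\nabla\occreg$ is $L_{\Kcal}$-Lipschitz on $\Kcal$ with the explicit constant of Proposition~\ref{prop:lipschitz_gradients}. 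I would then invoke the local smoothness--strong-convexity duality of \citet{goebel2008local}, translating $L_{\Kcal}$-Lipschitz continuity of $\nabla\occreg$ on $\Kcal$ into $\sigma_{\Rcal}$-strong convexity of $f^*$ on $\Rcal$ (measured in the quotient norm via fact (ii)). The main obstacle is exactly this translation on a \emph{restricted} domain: the clean co-coercivity / gradient-step argument behind the global duality needs a full neighbourhood of $\Kcal$, so the local strong-convexity modulus must instead be calibrated to the geometry of $\Rcal$. This is where the diameter $\DR$ and the scale $\tau$ enter and where the bookkeeping is heaviest; note that the naive combination of strong convexity and the Lipschitz bound only yields the weaker modulus $\eta/L_{\Kcal}^2$, whereas the Goebel--Rockafellar argument is what produces the sharper scaling $\sigma_{\Rcal}\asymp 1/L_{\Kcal}$ and the factor $1-\tfrac{\tau}{2D}$, with admissibility condition $\tau<2D$.

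\textbf{Part c).} Finally I would specialize by substituting the explicit policy lower bounds of Proposition~\ref{prop:policy_lower_bound} (namely $\exp(-2RH_\gamma/\tau)/|\Acal|^{1+H_\gamma}$ for Shannon and $\br{(2R/\tau + 3\sqrt{|\Acal|})H_\gamma}^{-2}$ for Tsallis) into the Lipschitz constant $L_{\Kcal}$ of Proposition~\ref{prop:lipschitz_gradients}, and then into the general modulus from Part b). Tracking the $\tau$- and $\DR$-dependent factor yields the two displayed formulas for $\sigma_{\Rcal}$; the single-exponential dependence for Shannon and the purely polynomial dependence for Tsallis are inherited directly from the respective $\pimin$.
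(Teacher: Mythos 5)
Your proposal is correct and follows essentially the same route as the paper: part a) is the standard monotonicity-plus-integration proof of the strong-convexity/smoothness duality that the paper obtains by citing \citep[Proposition 12.60]{rockafellar2009variational}, and parts b)--c) reproduce the paper's argument following \citep[Theorem 4.1]{goebel2008local} --- enlarge $\Kcal=\RL(\Rcal)$ to an $\epsilon$-neighbourhood inside $\relint\Mcal$, dualize the local Lipschitz bound by testing the conjugate supremum at a point shifted along $\Pi_{\Ucal^\perp}(r'-r)$, then substitute the policy lower bounds of Proposition~\ref{prop:policy_lower_bound} into $L_{\Kcal}$. Your remark that the naive combination of primal strong convexity with the Lipschitz bound only gives a modulus of order $\eta/L_{\Kcal}^2$, whereas the localized conjugate argument gives $\sigma_{\Rcal}\asymp(1-\tau/(2D))/L_{\Kcal}$, matches the paper's computation exactly.
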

\begin{proof} 

    \textit{Part a):}
        Since $f$ is $\eta$-strongly convex with respect to $\norm{\cdot}_2$, the convex conjugate $f^*$ is $1/\eta$-smooth with respect to the dual norm in $\R^{\SAcal}/\Ucal$  \citep[Proposition 12.60]{rockafellar2009variational}, which is equivalent to \eqref{eq:dual_smoothness}.
        
    \textit{Part b):}
    The show b), we closely follow \citep[Theorem 4.1]{goebel2008local}, but we need to account for the quotient spaces. We define the sets $\Kcal = \nabla f^*(\Rcal) =\RL(\Rcal)$ and $\Kcal_{\epsilon} = \conv( \Kcal )+ \epsilon (\Bcal \cap \operatorname{aff}(\Mcal))$, where $\Bcal\subset\R^{\SAcal}$ denotes the closed unit ball with respect to $\norm{\cdot}_2$ and $\epsilon>0$ is chosen such that $\Kcal_{\epsilon}\subset \relint\Mcal$. Moreover, we let $L$ be the Lipschitz constant of $\nabla \occreg$ over $\Kcal_{\epsilon}$. Now, consider $r\in\Rcal$ and $\mu=\nabla f^*(r)$. Then, for any $r'\in\Rcal$, we have
    \begin{align*}\label{eq:local_sc_proof}
        f^*(r') &= \sup_{\mubar}\bs{\ip{r'}{\mubar}-f(\mubar)}\\
        &\stackrel{(i)}{\geq} \sup_{\mubar\in\Kcal_{\epsilon}}\bs{\ip{r'}{\mubar}-f(\mubar)}\\
        &\stackrel{(ii)}{\geq} \sup_{\mubar\in\Kcal_{\epsilon}}\bs{\ip{r'}{\mubar}-f(\mu) - \ip{r}{\mubar-\mu}-\dfrac{L}{2}\norm{\mubar-\mu}_2^2}\\
        &\stackrel{(iii)}{=} \ip{r}{\mu} - f(\mu) + \sup_{\mubar\in\Kcal_{\epsilon}}\bs{\ip{r'-r}{\mubar}-\dfrac{L}{2}\norm{\mubar-\mu}_2^2}\\
        &\stackrel{(iv)}{=} f^*(r) + \sup_{\mubar\in\Kcal_{\epsilon}}\bs{\ip{r'-r}{\mubar}-\dfrac{L}{2}\norm{\mubar-\mu}_2^2}\\
        &\stackrel{(v)}{=} f^*(r) + \ip{r'-r}{\mu} + \sup_{\mubar\in\Kcal_{\epsilon}}\bs{\ip{r'-r}{\mubar-\mu}-\dfrac{L}{2}\norm{\mubar-\mu}_2^2}\numberthis.
    \end{align*}
\noindent Here, $(i)$ follows from $\Kcal_{\epsilon}\subset \R^{\SAcal}$, $(ii)$ from the fact that Lipschitz gradients imply that
\begin{equation}f(\mubar)\leq f(\mu) + \ip{g}{\mubar-\mu} + \dfrac{L}{2} \norm{\mubar-\mu}_2^2,\end{equation} 
for any $g\in\partial f(\mu)$ \citep[Lemma 5.7]{beck2017first} and $r\in\partial f(\mu)$ (see Theorem~\ref{thm:conjugate_duality}). Moreover, $(iii)$ and $(v)$ follow from rearranging terms, and $(iv)$ from $f^*(r)=\ip{r}{\mu} - f(\mu)$. Now, consider any $\alpha>0$ such that $\sigma_{\Rcal}=2(\alpha - L\alpha^2/2)>0$ and $\alpha D\leq  \epsilon$. Setting $\mubar\in\Kcal_{\epsilon}$ in the above supremum to $(\mubar-\mu)= \alpha \Pi_{\Ucal^{\perp}}(r'-r)$ yields the desired result
\begin{align}
    f^*(r') &\geq f^*(r) + \ip{r'-r}{\nabla f^*(r)} + \alpha\ip{r'-r}{\Pi_{\Ucal^{\perp}}(r'-r)} - \dfrac{L\alpha^2}{2}\norm{\Pi_{\Ucal^{\perp}}(r'-r)}_2^2\\
    &= f^*(r) + \ip{r'-r}{\nabla f^*(r)} + \dfrac{\sigma_{\Rcal}}{2} \norm{[r']_{\Ucal} - [r]_{\Ucal}}_{2}^2.
\end{align}
Note that we indeed have $\mubar\in\Kcal_{\epsilon}$ as $\mu\in\Kcal$ and $\norm{\mu-\mubar}_2\leq \alpha \norm{r-r'}_2\leq \alpha D\leq \epsilon$.

\emph{Part c):} To get an explicit constant for $\sigma_{\Rcal}$, we need to appropriately choose $\epsilon$ and calculate the corresponding Lipschitz constant. To this end, we first recall that according to Proposition~\ref{prop:policy_lower_bound} and \ref{prop:lines2lines} policies corresponding to occupancy measures in $\conv\Kcal$ are, for Shannon and Tsallis entropy, lower bounded by
\begin{equation}
    \piminsh = \dfrac{\exp\br{-2R H_\gamma/\tau}}{|\Acal|^{1+H_\gamma}} \quad \text{and} \quad \pimints = \br{\br{2R /\tau + 3\sqrt{|\Acal|}} H_\gamma}^{-2},
\end{equation}
respectively. Furthermore, for any $\mu \in \Kcal_\epsilon$, we have by Proposition~\ref{prop:lipschitz-occ2policy} and equivalence of norms
\begin{equation}
    \pi^\mu(a|s)\geq \pimin - \dfrac{2\sqrt{|\Scal||\Acal|}}{\numin} \epsilon = \pimin'.
\end{equation}
Hence, by setting $\epsilon=\numin\pimin/(4\sqrt{|\Scal||\Acal|})$, we have $\pi^\mu(a|s)\geq \pimin' = \pimin/2$ for any $\mu\in\Kcal_\epsilon$. As for the Lipschitz constant over $\Kcal_\epsilon$, we have by Proposition~\ref{prop:lipschitz_gradients}
\begin{equation}\label{eq:ref1}
    L_{\text{Sh}} = \dfrac{\tau |\Scal||\Acal|}{\numin \piminsh'} \quad \text{and} \quad L_{\text{Ts}} = \dfrac{\tau |\Scal||\Acal|}{\numin {\pimints'}^{3/2}},
\end{equation}
for the Shannon and Tsallis entropy, respectively. Now, we need to ensure that $\alpha>0$ such that $\sigma_{\Rcal}=2(\alpha - L\alpha^2/2)>0$ and $\alpha \leq  \epsilon /D$. To that end, we set for both regularizations $\alpha = \tau / (LD)$, which in light of \eqref{eq:ref1} ensures that
\begin{equation}\label{eq:dual_sc_constant_proof}
    \alpha = \dfrac{\tau}{LD}  \leq \dfrac{\numin \pimin'}{D |\Scal||\Acal|} \leq \dfrac{\numin \pimin'}{2D\sqrt{|\Scal||\Acal|}} = \dfrac{\epsilon}{D},
\end{equation}
for $|\Scal|,|\Acal|\geq 2$. Moreover, we get the dual strong convexity constant
\begin{equation}
    \sigma_{\Rcal} = 2\br{\dfrac{\tau}{LD} - \dfrac{\tau^2}{2LD^2}} = \dfrac{2\tau}{LD}\br{1-\dfrac{\tau}{2D}},
\end{equation}
which is larger than $\tau/LD$ for $\tau < D$. We can therefore choose $\sigma_{\Rcal} = \tau/LD$ as a dual strong convexity constant. Plugging in the Lipschitz constants for the two regularizations yields
\begin{equation}
    \sigma_{\Rcal, \text{Sh}} = \dfrac{\piminsh'\numin}{D |\Scal||\Acal|} = \dfrac{\piminsh\numin}{2D |\Scal||\Acal|} = \dfrac{\exp\br{-2R H_\gamma/\tau}\numin}{2D |\Scal||\Acal|^{2+H_\gamma}},
\end{equation}
for the Shannon entropy, and 
\begin{align}
    \sigma_{\Rcal, \text{Ts}} &= \dfrac{{\pimints'}^{3/2}\numin}{D |\Scal||\Acal|} = \dfrac{\pimints^{3/2}\numin}{ 2\sqrt{2} D |\Scal||\Acal|} = \dfrac{\numin}{2\sqrt{2}D |\Scal||\Acal|\br{\br{2R /\tau + 3\sqrt{|\Acal|}} H_\gamma}^3 },
\end{align}
for the Tsallis entropy.
\end{proof}

\begin{remark}[Large $\tau$ regime]
    Note that if $\tau \geq 2D/\sqrt{|\Scal||\Acal|}$, we can set $ \sigma_{\Rcal}=\alpha = 1/L$, while still satisfying the condition \eqref{eq:dual_sc_constant_proof}. This leads to the strong convexity constants
    \begin{equation*}
        \sigma_{\Rcal, \text{Sh}} = \dfrac{\exp\br{-2R H_\gamma/\tau}\numin}{2\tau |\Scal||\Acal|^{2+H_\gamma}},\;  \sigma_{\Rcal, \text{Ts}} = \dfrac{\numin}{2\sqrt{2}\tau |\Scal||\Acal|\br{\br{2R /\tau + 3\sqrt{|\Acal|}} H_\gamma}^3 }.
    \end{equation*}
\end{remark}

\subsection{Regularity constants}\label{app:sec:regularity}
In the following Proposition, we summarize the regularity constants for Shannon and Tsallis entropy regularization. We highlight that these constants are lower bounds for $\eta$ and $\sigma_{\Rcal}$.
\begin{restatable}{proposition}{strongconvexitylipschitzgradients}
\label{prop:strong_convexity_lipschitz_gradients}%
Let $H_\gamma \defeq 1/(1-\gamma)$, $\rmax\defeq\max_{r\in\Rcal}\norm{r}_\infty$, and $\DR = \max_{r,r'\in\Rcal} \norm{r-r'}_2$. Suppose that $\tau < D$, then for the Shannon entropy, Assumption~\ref{ass:regularity} holds with 
\begin{equation}
    \eta = \tau \numin / H_\gamma^2  \quad \text{and} \quad  \sigma_{\Rcal} = \dfrac{\exp\br{-2R H_\gamma/\tau}\numin}{2D |\Scal||\Acal|^{2+H_\gamma}},
\end{equation}
and for the Tsallis entropy with
\begin{equation}
    \eta= \tau\numin / (2H_\gamma^2 |\Acal|) \quad \text{and} \quad 
    \sigma_{\Rcal} =  \dfrac{\numin}{2\sqrt{2}D |\Scal||\Acal|\br{\br{2R /\tau + 3\sqrt{|\Acal|}} H_\gamma}^3 }.
\end{equation}
\end{restatable}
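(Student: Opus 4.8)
The plan is to recognize Proposition~\ref{prop:strong_convexity_lipschitz_gradients} as an \emph{assembly} of the technical results already developed in this appendix, rather than a new computation. Concretely, I must (i) verify that part (a) of Assumption~\ref{ass:regularity} holds and extract the primal strong convexity constant $\eta$, (ii) verify that part (b) of Assumption~\ref{ass:regularity} holds, and (iii) read off the dual strong convexity constant $\sigma_{\Rcal}$. Since nearly all the genuine analytic work is contained in the preceding propositions, the proof reduces to substituting the correct regularizer-specific constants and checking that the norms line up.

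For the primal strong convexity constant $\eta$, I would start from the strong convexity of the policy regularizer $\polreg$ with respect to $\norm{\cdot}_1$ recalled in Appendix~\ref{app:sec:regularizers}, namely $\eta_{\polreg}=\tau$ for the Shannon entropy and $\eta_{\polreg}=\tau/(2|\Acal|)$ for the Tsallis-$1/2$ entropy. Feeding these into Proposition~\ref{prop:strong_convexity_general} directly produces $\eta=\tau\numin/H_\gamma^2$ for the Shannon entropy and $\eta=\tau\numin/(2H_\gamma^2|\Acal|)$ for the Tsallis entropy, but as strong convexity with respect to $\norm{\cdot}_1$. Because $\norm{x}_2\leq\norm{x}_1$ on $\R^{\SAcal}$, this immediately upgrades to strong convexity with respect to $\norm{\cdot}_2$ with the same constant, which is exactly the form required by \eqref{eq:strong_convexity}.

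For part (b) of Assumption~\ref{ass:regularity}, I would invoke Proposition~\ref{prop:lipschitz_gradients}, which shows that $\nabla\occreg$ is Lipschitz on any closed convex $\Kcal\subset\relint\Mcal$ whose policies are bounded away from the simplex boundary; the required uniform policy lower bound on the relevant set $\RL(\Rcal)$ is supplied by Proposition~\ref{prop:policy_lower_bound} and extended to its convex hull by Proposition~\ref{prop:lines2lines}. This verifies the locally Lipschitz gradient hypothesis, so Assumption~\ref{ass:regularity} holds for both regularizers. The explicit value of $\sigma_{\Rcal}$ then follows verbatim from part (c) of Proposition~\ref{prop:dual_strong_convexity_smoothness}, which already performed the delicate localization argument — choosing the enlargement $\epsilon$ and the step $\alpha$ so that $\sigma_{\Rcal}=2(\alpha-L\alpha^2/2)>0$ under the assumption $\tau<2D$ — and produced precisely the stated expressions.

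The only point requiring a little care, rather than any real difficulty, is the norm conversion in the second step: Proposition~\ref{prop:strong_convexity_general} is phrased for $\norm{\cdot}_1$, whereas part (a) of Assumption~\ref{ass:regularity} (and hence the dual smoothness estimate of Proposition~\ref{prop:dual_strong_convexity_smoothness}(a) underlying the upper bound in Lemma~\ref{lem:bregman_bound}) is stated for $\norm{\cdot}_2$. I would therefore record the elementary inequality $\norm{\cdot}_2\leq\norm{\cdot}_1$ so that the extracted $\eta$ is legitimately of the $\norm{\cdot}_2$ form. Beyond this bookkeeping there is no genuine obstacle, since the hard analytic content — in particular the dual strong convexity obtained under only locally Lipschitz gradients — is already fully established in Proposition~\ref{prop:dual_strong_convexity_smoothness}.
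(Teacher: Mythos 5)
Your proposal is correct and matches the paper's own proof, which is literally a one-line pointer to Proposition~\ref{prop:strong_convexity_general} for $\eta$ and Proposition~\ref{prop:dual_strong_convexity_smoothness} for $\sigma_{\Rcal}$; you simply spell out the same assembly in more detail, including the (correct) observation that $\norm{\cdot}_2\leq\norm{\cdot}_1$ lets the $\norm{\cdot}_1$-strong-convexity constant carry over to the $\norm{\cdot}_2$ form required by Assumption~\ref{ass:regularity}(a).
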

\begin{proof}
    The derivation for $\eta$ is given in Proposition~\ref{prop:strong_convexity_general} and for $\sigma_{\Rcal}$ in Proposition~\ref{prop:dual_strong_convexity_smoothness} above.
\end{proof}
\subsection{Suboptimality bounds}\label{app:sec:suboptimality_bounds}
\bregmandivergence*
\begin{proof}
    Let $\mu = \RL(r)$. We have
    \begin{align}
    \mathsf{SubOpt}(r, \mu') &= \ip{r}{\mu - \mu'} - \occreg(\mu) + \occreg(\mu')\\
    &= \ip{\nabla \occreg(\mu)}{\mu - \mu'} - \occreg(\mu) + \occreg(\mu')\\
    &= D_{\occreg}(\mu', \mu),
    \end{align}
    where the second equality holds, as by \eqref{eq:feasible_reward_set_explicit} we have $r-\nabla \occreg(\mu)\in \Ucal$, and $\mu - \mu' \in \Ucal^\perp$.
\end{proof}

\bregmanbound*
\begin{proof}
    Let $f\defeq \occreg + \delta_{\Mcal}$ and $\mu = \RL(r), \mu'=\RL(r')$. We then have
    \begin{align*}
        D_{\occreg}(\mu, \mu') &\stackrel{(i)}{=} f(\mu) - f(\mu') - \ip{r'}{\mu- \mu'}\\
        &\stackrel{(ii)}{=} f^*(r') - \ip{r'}{\mu'} - f^*(r) + \ip{r}{\mu} - \ip{r'}{\mu- \mu'}\\
        &\stackrel{(iii)}{=} f^*(r') - f^*(r) - \ip{r'-r}{\nabla f^*(r)} = D_{f^*}(r',r).
    \end{align*}
    Here, $(i)$ follows from the definition of $f$ and $r'\in [\nabla \occreg(\mu')]_{\Ucal}$, in $(ii)$ we use that $f(\mu) = \ip{r}{\mu} - f^*(r)$ and $f(\mu') = \ip{r'}{\mu'} - f^*(r')$, and $(iii)$ follows from rearranging terms and $\mu=\nabla f^*(r)$. The result then follows from dual strong convexity and smoothness as established in Proposition~\ref{prop:dual_strong_convexity_smoothness}.
\end{proof}
Note that without steep regularization it is impossible to lower bound the suboptimality in terms of reward distances in $\R^{\SAcal}/\Ucal$ (Proposition~\ref{prop:bregman_divergence} doesn't hold). However, we still have the following upper bound.
\begin{proposition}\label{prop:unreg_subopt_bound}
Consider an arbitrary regularization and let $\mu\in\RL(r), \mu'\in\RL(r')$. Then,
\begin{equation}
    \mathsf{SubOpt}(r, \mu') \leq 2 \norm{[r]_{\Ucal} - [r']_{\Ucal}}_\infty \leq 2 \norm{[r]_{\Ucal} - [r']_{\Ucal}}_2.
\end{equation}
\end{proposition}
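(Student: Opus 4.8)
The plan is to exploit the optimality of $\mu'$ for $r'$ to collapse the suboptimality into a single inner product, and then to use the fact that differences of occupancy measures are orthogonal to the potential shaping subspace $\Ucal$. The appeal of this route is that it never invokes steepness or strong convexity, which is precisely why it survives for an arbitrary regularizer, in contrast to the lower bound of Lemma~\ref{lem:bregman_bound}.

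First I would write the suboptimality out directly. Since $\mu\in\RL(r)$ attains the maximum of $J(r,\cdot)$, we have
\[
  \ell(r,\mu') = J(r,\mu) - J(r,\mu') = \ip{r}{\mu-\mu'} - \br{\occreg(\mu)-\occreg(\mu')}.
\]
The regularizer difference is the only nonlinear term, and I would eliminate it via the optimality of $\mu'$ for $r'$: from $J(r',\mu')\geq J(r',\mu)$ one gets $\occreg(\mu)-\occreg(\mu')\geq \ip{r'}{\mu-\mu'}$, and substituting this yields the clean estimate $\ell(r,\mu')\leq \ip{r-r'}{\mu-\mu'}$.

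Next I would bring in the geometry. Both $\mu$ and $\mu'$ satisfy the Bellman flow constraint $(E-\gamma P)^\top\mu = (1-\gamma)\nu_0$, so $(E-\gamma P)^\top(\mu-\mu')=0$, i.e.\ $\mu-\mu'\in\ker\br{(E-\gamma P)^\top} = \br{\ima(E-\gamma P)}^\perp = \Ucal^\perp$. Hence $\ip{v}{\mu-\mu'}=0$ for every $v\in\Ucal$, and I may replace $r-r'$ by $(r-r')+v$ at no cost. Applying Hölder's inequality together with $\norm{\mu-\mu'}_1\leq 2$ (as $\mu,\mu'\in\Delta_{\SAcal}$ are probability measures) gives $\ell(r,\mu')\leq 2\norm{(r-r')+v}_\infty$; minimizing over $v\in\Ucal$ produces the first inequality $\ell(r,\mu')\leq 2\norm{[r]_{\Ucal}-[r']_{\Ucal}}_\infty$, where the quotient $\infty$-norm is read as $\min_{v\in\Ucal}\norm{(r-r')+v}_\infty$. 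The second inequality is then immediate: evaluating the $2$-norm minimizer $v_2^\star$ (for which $(r-r')+v_2^\star=\Pi_{\Ucal^\perp}(r-r')$) in the $\infty$-norm and using $\norm{x}_\infty\leq\norm{x}_2$ pointwise gives $\norm{[r]_{\Ucal}-[r']_{\Ucal}}_\infty \leq \norm{\Pi_{\Ucal^\perp}(r-r')}_2 = \norm{[r]_{\Ucal}-[r']_{\Ucal}}_2$.

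I do not anticipate a genuine obstacle, as the argument is short. The one point requiring care is the direction of the optimality inequality: it is $\mu'$'s optimality for $r'$, not $\mu$'s for $r$, that removes the regularizer and leaves $\ip{r-r'}{\mu-\mu'}$ with the correct sign. The only conceptual step is recognizing the orthogonality $\mu-\mu'\in\Ucal^\perp$, which simultaneously makes the reward difference enter solely through its equivalence class and supplies the total-variation bound $\norm{\mu-\mu'}_1\leq 2$.
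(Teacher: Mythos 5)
Your proof is correct, and it takes a genuinely different route from the paper's. You collapse the suboptimality to the single bilinear term $\ip{r-r'}{\mu-\mu'}$ by combining the exact optimality of $\mu$ for $r$ with the variational inequality $\occreg(\mu)-\occreg(\mu')\geq\ip{r'}{\mu-\mu'}$ coming from $\mu'$'s optimality for $r'$, and then invoke the orthogonality $\mu-\mu'\in\ker\br{(E-\gamma P)^\top}=\Ucal^\perp$ explicitly, finishing with H\"older and $\norm{\mu-\mu'}_1\leq 2$. The paper instead picks the best $\infty$-norm representative $r''\in[r']_{\Ucal}$, observes that $\mu'$ remains optimal for $r''$ (potential shaping leaves $J$ unchanged up to a constant on $\Mcal$, so the quotient structure enters only implicitly), and bounds $\abs{\max_\mu J(r,\mu)-\max_\mu J(r'',\mu)}$ and $\abs{J(r'',\mu')-J(r,\mu')}$ separately via $\abs{\max f - \max g}\leq\max\abs{f-g}$ and H\"older, collecting the factor $2$ from two terms each with $\norm{\mu}_1=1$ rather than from $\norm{\mu-\mu'}_1\leq 2$. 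Your version makes the role of $\Ucal^\perp$ transparent and is a clean three-point argument; the paper's version has the minor advantage of never using the hypothesis $\mu\in\RL(r)$ at all (it bounds $\ell(r,\mu')$ directly from its definition as a supremum), so it would survive verbatim if no maximizer existed. You are also right about the one delicate sign: it is $\mu'$'s optimality for $r'$, not $\mu$'s for $r$, that eliminates the regularizer difference with the correct orientation, and your reading of $\norm{[r]_{\Ucal}-[r']_{\Ucal}}_\infty$ as $\min_{v\in\Ucal}\norm{(r-r')+v}_\infty$ matches the paper's implicit convention.
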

\begin{proof}
Let $r''\defeq\argmin_{\tilde r\in [r']_{\Ucal}} \norm{\tilde r - r}_\infty$, then the following holds
\begin{align*}
        \mathsf{SubOpt}(r, \mu') &= \max_{\mu\in\Mcal} J(r, \mu) - J(r, \mu') \\
        &\stackrel{(i)}{\leq} \abs{\max_{\mu\in\Mcal} J(r, \mu) - \max_{\mu\in\Mcal} J(r'', \mu)} + \abs{J(r'', \mu') - J(r, \mu')}\\
        &\stackrel{(ii)}{\leq} \max_{\mu\in\Mcal} \abs{\ip{r - r''}{\mu}} + \abs{\ip{r - r''}{\mu'}}\\
        &\stackrel{(iii)}{\leq} 2\norm{r - r''}_\infty \stackrel{(iv)}{=} 2\norm{[r]_{\Ucal} - [r']_{\Ucal}}_\infty \leq 2\norm{[r]_{\Ucal} - [r']_{\Ucal}}_2.\numberthis
\end{align*}
Here, $(i)$ follows from the triangle inequality  and optimality of $\mu'$, $(ii)$ from $\abs{\max f- \max g}\leq \max\abs{f-g}$ and simplifying, $(iii)$ from Hölder's inequality, and $(iv)$ from the definition of $r''$ and the quotient norm.
\end{proof}

\subsection{Perturbation bounds}\label{app:sec:perturbation_bounds}
Next, we provide a bound quantifying the change in the quotient norm when changing the generating subspace.
\begin{proposition}\label{prop:quotientnorm_transfer}
    Consider $x,y\in\R^n$ and two subspaces $\Vcal,\Wcal\subset \R^n$ of dimension $m<n$. Then,
    \begin{equation}
        \norm{[x]_{\Wcal}-[y]_{\Wcal}}_2 \leq \snorm{\Pi_{\Wcal}-\Pi_{\Vcal}}\cdot\norm{x-y}_2 + \norm{[x]_{\Vcal}-[y]_{\Vcal}}_2,
    \end{equation}
    where $\snorm{\Pi_{\Wcal}-\Pi_{\Vcal}} = \sin\br{\theta_{\max}(\Vcal,\Wcal)}$.
\end{proposition}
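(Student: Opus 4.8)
The plan is to translate both quotient-norm differences into orthogonal projections and then split the estimate into an elementary vector inequality and a known spectral identity for projector differences. Recall that the quotient norm satisfies $\norm{[r]_{\Vcal}}_2 = \norm{\Pi_{\Vcal^\perp}r}_2$, so that, writing $z\defeq x-y$, the two sides become $\norm{[x]_{\Wcal}-[y]_{\Wcal}}_2 = \norm{\Pi_{\Wcal^\perp}z}_2$ and $\norm{[x]_{\Vcal}-[y]_{\Vcal}}_2 = \norm{\Pi_{\Vcal^\perp}z}_2$. The claim therefore reduces to showing $\norm{\Pi_{\Wcal^\perp}z}_2 \leq \snorm{\Pi_{\Wcal}-\Pi_{\Vcal}}\,\norm{z}_2 + \norm{\Pi_{\Vcal^\perp}z}_2$ for every $z\in\R^n$.

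First I would prove this reduced inequality. Using $\Pi_{\Wcal^\perp} = I-\Pi_{\Wcal}$ and $\Pi_{\Vcal^\perp} = I-\Pi_{\Vcal}$ gives the exact decomposition $\Pi_{\Wcal^\perp}z = \Pi_{\Vcal^\perp}z + (\Pi_{\Vcal}-\Pi_{\Wcal})z$. Taking norms, applying the triangle inequality, and bounding $\norm{(\Pi_{\Vcal}-\Pi_{\Wcal})z}_2 \leq \snorm{\Pi_{\Vcal}-\Pi_{\Wcal}}\norm{z}_2$ by the definition of the spectral norm yields exactly the desired bound, since $\snorm{\Pi_{\Vcal}-\Pi_{\Wcal}} = \snorm{\Pi_{\Wcal}-\Pi_{\Vcal}}$. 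This part is a one-line manipulation.

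The substantive step is the identity $\snorm{\Pi_{\Wcal}-\Pi_{\Vcal}} = \sin(\theta_{\max}(\Vcal,\Wcal))$, which I expect to be the main obstacle. This is a classical fact about pairs of equidimensional orthogonal projectors, and I would establish it through the principal-angle (CS) decomposition of \citep{galantai2013projectors}: aligning orthonormal bases of $\Vcal$ and $\Wcal$ to the principal vectors simultaneously block-diagonalizes $\Pi_{\Vcal}$ and $\Pi_{\Wcal}$, and a direct computation shows that the nonzero singular values of $\Pi_{\Wcal}-\Pi_{\Vcal}$ are precisely the $\sin\theta_i(\Vcal,\Wcal)$ (each with multiplicity two), whose maximum is $\sin(\theta_{\max}(\Vcal,\Wcal))$. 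Rather than recompute the full decomposition, I would cite it and only indicate the block structure. The point worth flagging is that this clean formula hinges on $\dim\Vcal = \dim\Wcal = m$, exactly the standing hypothesis: with unequal dimensions the difference would acquire singular values equal to $1$ from the directions in the larger subspace, and the bound would no longer reduce to $\sin(\theta_{\max})$.
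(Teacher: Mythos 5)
Your proposal is correct and follows essentially the same route as the paper's proof: rewrite the quotient norms as $\norm{\Pi_{\Wcal^\perp}(x-y)}_2$ and $\norm{\Pi_{\Vcal^\perp}(x-y)}_2$, split via the triangle inequality using $\Pi_{\Wcal^\perp}-\Pi_{\Vcal^\perp}=\Pi_{\Vcal}-\Pi_{\Wcal}$, and invoke the known identity $\snorm{\Pi_{\Wcal}-\Pi_{\Vcal}}=\sin\br{\theta_{\max}(\Vcal,\Wcal)}$ (which the paper also cites rather than reproves). Your remark that the equal-dimension hypothesis is what keeps singular values equal to $1$ out of the picture is a worthwhile addition not spelled out in the paper.
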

\begin{proof}
    The result follows from the triangle inequality and the definition of the spectral norm:
    \begin{align*}
    \norm{[x]_{\Wcal}-[y]_{\Wcal}}_2&= \norm{\Pi_{\Wcal^\perp}(x-y)}_2\\
    &= \norm{(\Pi_{\Wcal^\perp} - \Pi_{\Vcal^\perp})(x-y) +  \Pi_{\Vcal^\perp}(x-y)}_2\\
    &\leq \norm{(\Pi_{\Wcal^\perp} - \Pi_{\Vcal^\perp})(x-y)}_2 + \norm{\Pi_{\Vcal^\perp}(x-y)}_2\\
    &= \norm{(\Pi_{\Wcal} - \Pi_{\Vcal})(x-y)}_2 +  \norm{[x]_{\Vcal}-[y]_{\Vcal}}_2\\
    &\leq \snorm{\Pi_{\Wcal}-\Pi_{\Vcal}}\cdot\norm{x-y}_2 + \norm{[x]_{\Vcal}-[y]_{\Vcal}}_2.
\end{align*}
Furthermore, for a proof of $\snorm{\Pi_{\Wcal}-\Pi_{\Vcal}} = \sin\br{\theta_{\max}(\Vcal,\Wcal)}$ we refer to \citep{Zlatko2000}.
\end{proof}
The following proposition shows that the maximal principal angle between two transition laws can be upper bounded by the spectral norm difference of the transition laws.

\maxangle*

Before proceeding with the proof of Proposition~\ref{prop:maxangle_bound}, we need the following technical result.
\begin{proposition}\label{prop:smallest_singular_value}
    For any $P\in\Delta_{\SAcal}^{\Scal}$ the smallest singular value of $E-\gamma P$ satisfies
    \begin{equation}
        \sigma_{\min}\br{E-\gamma P} \geq \sqrt{|\Acal|/|\Scal|}(1-\gamma).
    \end{equation}
\end{proposition}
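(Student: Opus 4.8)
The plan is to work directly from the variational characterization of the smallest singular value,
\[
    \sigma_{\min}\br{E-\gamma P} = \min_{v\in\R^{\Scal},\,\norm{v}_2=1}\norm{\br{E-\gamma P}v}_2,
\]
and to lower bound $\norm{(E-\gamma P)v}_2$ for an arbitrary unit vector $v$ by isolating a single, cleverly chosen coordinate. First I would record the explicit form $\br{(E-\gamma P)v}(s,a)=v(s)-\gamma (Pv)(s,a)$, where $(Pv)(s,a)=\sum_{s'}P(s'|s,a)v(s')$ is a \emph{convex combination} of the entries of $v$, since $P(\cdot|s,a)\in\Delta_{\Scal}$. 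Consequently $\abs{(Pv)(s,a)}\leq\norm{v}_\infty$ for every $(s,a)$, which is the one structural fact the whole argument rests on.

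The key step is to select a state $s^*\in\argmax_s\abs{v(s)}$, so that $\abs{v(s^*)}=\norm{v}_\infty$. For each action $a$, the reverse triangle inequality together with the bound above gives
\[
    \abs{v(s^*)-\gamma(Pv)(s^*,a)} \;\geq\; \abs{v(s^*)} - \gamma\abs{(Pv)(s^*,a)} \;\geq\; (1-\gamma)\norm{v}_\infty .
\]
Keeping only the $|\Acal|$ terms with $s=s^*$ and discarding the rest (they are nonnegative) then yields
\[
    \norm{(E-\gamma P)v}_2^2 \;\geq\; \sum_a\abs{v(s^*)-\gamma(Pv)(s^*,a)}^2 \;\geq\; |\Acal|(1-\gamma)^2\norm{v}_\infty^2 .
\]
Finally I would convert back to the Euclidean norm using $\norm{v}_\infty^2\geq\norm{v}_2^2/|\Scal|$, so that for a unit $v$ we get $\norm{(E-\gamma P)v}_2^2\geq |\Acal|(1-\gamma)^2/|\Scal|$, which is exactly the claimed bound after taking the minimum over $v$.

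The main obstacle is recognising that the obvious route is too lossy and must be avoided. Splitting $\norm{(E-\gamma P)v}_2\geq\norm{Ev}_2-\gamma\norm{Pv}_2$ gives $\norm{Ev}_2=\sqrt{|\Acal|}\norm{v}_2$, but a naive per-state estimate only yields $\norm{Pv}_2\leq\sqrt{|\Scal||\Acal|}\norm{v}_2$, leading to $\sqrt{|\Acal|}\br{1-\gamma\sqrt{|\Scal|}}$, which is vacuous (indeed negative) as soon as $|\Scal|\geq 2$ and $\gamma$ is moderately large. The fix, and the conceptual heart of the proof, is to abandon the global spectral-norm estimate of $P$ and instead restrict attention to the single worst-case state $s^*$, exploiting the pointwise $\ell_\infty$-domination $\abs{(Pv)(s,a)}\leq\norm{v}_\infty$ of the convex-combination term rather than an operator-norm bound.
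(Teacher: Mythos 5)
Your proof is correct. It reaches the same bound as the paper but by a dual route: the paper passes to the transpose and uses that $(P_a)^\top$ is an $\ell_1$-contraction, obtaining $\norm{(I-\gamma (P_a)^\top)x}_1 \geq (1-\gamma)\norm{x}_1$ for each action $a$ and hence the per-action bound $\sigma_{\min}(I-\gamma P_a)\geq (1-\gamma)/\sqrt{|\Scal|}$, before summing the action blocks; you instead use the equivalent $\ell_\infty$-contraction of $P_a$ itself, namely $\abs{(Pv)(s,a)}\leq \norm{v}_\infty$, and isolate the $|\Acal|$ rows indexed by the max-magnitude state $s^*$, which witnesses the bound simultaneously for every action. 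Both arguments lose exactly the same $\sqrt{|\Scal|}$ factor when converting between norms ($\ell_1$-to-$\ell_2$ in the paper, $\ell_2$-to-$\ell_\infty$ in yours) and both collect the $|\Acal|$ factor by summing squares over actions, so neither is sharper. Your version is slightly more self-contained, since it avoids the implicit appeal to $\sigma_{\min}(A)=\sigma_{\min}(A^\top)$, whereas the paper's yields the per-action statement as a reusable intermediate. Your closing diagnosis of why the naive splitting $\norm{Ev}_2-\gamma\norm{Pv}_2$ is too lossy is accurate, though the paper never attempts that route.
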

\begin{proof}
    The main idea of the proof is to use that $\sigma_{\min}(A) = \min_{x\neq 0}\norm{Ax}_2 / \norm{x}_2$ for any matrix $A\in\R^{n\times m}$. We first lower bound $\sigma_{\min}(I-\gamma P_a) = \sigma_{\min}\br{I-\gamma (P_a)^\top}$, where $P_a$ denotes the state transition matrix under action $a$. Let $x\in\R^{S}$, then we have
    \begin{align}
        \norm{\br{I-\gamma (P_a)^\top} x}_2 &\geq 1/\sqrt{|\Scal|}\norm{\br{I-\gamma (P_a)^\top} x}_1\\
        &\geq 1/\sqrt{|\Scal|}\br{\norm{x}_1 - \gamma \norm{ (P_a)^\top x}_1}\\
        &\geq (1-\gamma)/\sqrt{|\Scal|} \norm{x}_1 \geq (1-\gamma)/\sqrt{|\Scal|} \norm{x}_2,
    \end{align}
    where the third inequality follows from 
    \begin{equation}
        \norm{ (P_a)^\top x}_1 = \sum_{s} \abs{\sum_{s'} P_a(s|s')x(s')}\leq \sum_{s}\sum_{s'} P_a(s|s')\abs{x(s')} = \norm{x}_1.
    \end{equation}
    Hence, $\sigma_{\min}(I-\gamma P_a)\geq (1-\gamma)/\sqrt{|\Scal|}$. Therefore, we have for any $x\in\R^{\Scal}$ that
    \begin{align}
        \norm{(E-\gamma P)x}_2^2 = \sum_a \norm{(I-\gamma P_a)x}_2^2 \geq |\Acal|/|\Scal|(1-\gamma)^2 \norm{x}_2^2,
    \end{align}
    which yields the desired result.
\end{proof}
We are now ready to prove Proposition~\ref{prop:maxangle_bound}.
\begin{proof}[Proof of Proposition~\ref{prop:maxangle_bound}]
    The first principal angle $\theta_1(P,P')=0$ is zero as we always have $\ones\subseteq \Ucal$.
    The bound on the maximal angle follows from a well-known perturbation result for orthogonal projections. Namely, if $A, B\in\R^{n\times m}$ are matrices of the same rank and $\Pi_A, \Pi_B$ denote the orthogonal projections onto their column span, then we have \citep{ji1987perturbation}
    \begin{equation}
        \snorm{\Pi_A-\Pi_B} \leq \min\bc{\snorm{A^\dagger}, \snorm{B^\dagger}}\snorm{A-B},
    \end{equation}
    where $A^\dagger$ denotes the Moore-Penrose inverse \citep{penrose1955generalized}.
    Recall that $\Ucal_P = \ima(E-\gamma P)$, $\sin\br{\theta_{\max}(P,{P'})} = \snorm{\Pi_{\Ucal_P} - \Pi_{\Ucal_{P'}}}$, and by Proposition~\ref{prop:smallest_singular_value}
    \begin{equation}
        \snorm{(E-\gamma P)^\dagger} = \br{\sigma_{\min}\br{E-\gamma P}}^{-1} \leq \sqrt{|\Scal|/|\Acal|}H_\gamma.
    \end{equation}
    Therefore, we get
    \begin{equation}
        \sin\br{\theta_{\max}(P,{P'})} \leq \sqrt{|\Scal|/|\Acal|} \cdot \gamma \cdot H_\gamma \cdot \snorm{P-P'}.
    \end{equation}
\end{proof}

\section{Proof of claim in Example~\ref{ex1}}\label{app:sec:ex1}
We recall Example~\ref{ex1} from the main paper. 

\textbf{Example~\ref{ex1}.}
We consider a two-state, two-action MDP with $\Scal=\Acal=\bc{0,1}$, uniform initial state distribution, discount rate $\gamma = 0.9$, and Shannon entropy regularization $\polreg = - \Hcal$ (see Appendix~\ref{app:sec:regularizers}). Suppose the expert reward is $\rE(s,a) = \ind\{s=1\}$ and consider the transition laws, $P^0$ and $P^1$, defined by $P^0(0|s, a) = 0.75$ and $P^1(0|s,a) = 0.25 + \beta\cdot\ind\bc{s=0,a=0}$ for some $\beta\in[0,0.75]$. Also, consider the two experts $\muE_{P^0}=\RL_{P^0}(\rE)$ and $\muE_{P^1}=\RL_{P^1}(\rE)$, and suppose we recovered the reward $\rhat(s,a) = - \rE$. Then, the following holds: 1) We have $\mathsf{SubOpt}_{P^0}(\rhat, \muE_{P^0})=0$ and $\mathsf{SubOpt}_{P^1}(\rhat, \muE_{P^1})= \Ocal(\beta)$. That is, for small $\beta$, the reward $\rhat$ is a good solution to the IRL problem, as both experts are approximately optimal under $\rhat$. 2) The rank condition \eqref{eq:rank_cond} between $P^0$ and $P^1$ is satisfied for any $\beta>0$. 3) For a new transition law $P$ defined by $P(0|s,a)=\ind\bc{s=1,a=0}$, we have $\mathsf{SubOpt}_{P}(\rE, \RL_P(\rhat))\approx 4.81$, i.e. $\RL_P(\rhat)$ performs poorly under the experts' reward.

In the following we prove the claims 1. and 2., while 3. is computed via regularized dynamic programming \citep{geist2019theory}.\footnote{The code is openly accessible at \url{https://github.com/andrschl/irl_transferability}.} Furthermore, we illustrate the occupancy measure spaces corresponding to $P^0$ and $P^1$ for different $\beta$ in Figure~\ref{fig:occ_spaces}.
\begin{figure}[h]
\vspace{-0.2cm}
  \centering
  \includegraphics[width=0.9\linewidth]{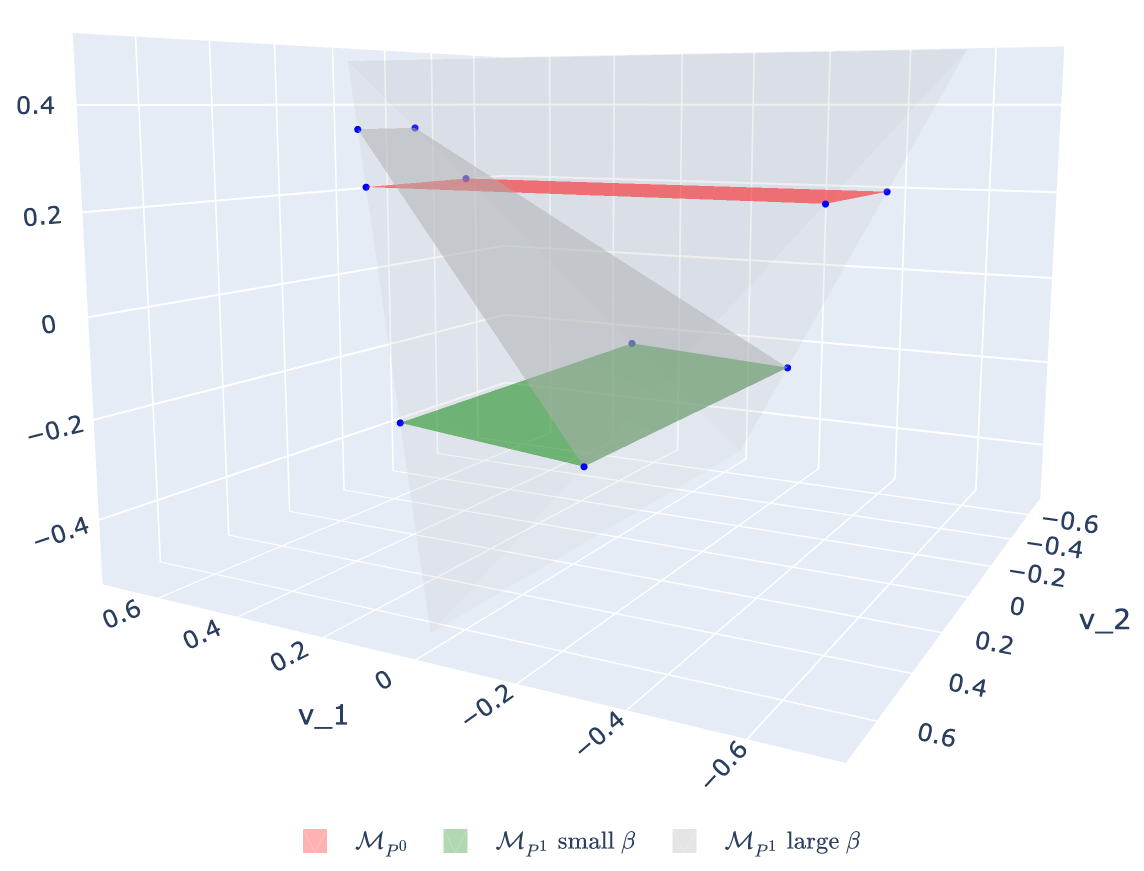}
  \caption{The set of occupancy measures $\Mcal_{P^0}$ and $\Mcal_{P^1}$ are illustrated in $\R^{\SAcal}/\ones\cong \ones^\perp$. For a two-state-two-action MDP, the set of occupancy measures is given by the intersection of a two-dimensional affine subspace (a plane in $\R^{\SAcal}/\ones$) with the probability simplex in $\R^4$ (a tetrahedron in $\R^{\SAcal}/\ones$). We see that for a small $\beta$, the sets $\Mcal_{P^0}$ and $\Mcal_{P^1}$ are approximately parallel. That is, the angle between their normal vectors, which span the potential shaping spaces $\Ucal_{P^0}$ and $\Ucal_{P^1}$, is small. In contrast, for a large $\beta$ the orientation of $\Mcal_{P^0}$ and $\Mcal_{P^1}$ is very different, resulting in a large angle between the corresponding normal vectors.}\label{fig:occ_spaces}
\end{figure}
\begin{enumerate}
    \item Consider the transition law $P'$ defined by $P'(0|s,a)=0.25$. First, we observe that while $\snorm{P^0 - P'}$ is large, the potential shaping spaces $\Ucal_{P^0}$ and $\Ucal_{P'}$ coincide. To see this note that we have $P'(\cdot|s,a) = P^0(\cdot|s,a) + \Delta$, where $\Delta=[-0.5, 0.5]^\top$. Hence, we have for any $x\in\R^2$ that 
    \begin{equation*}
        (E-\gamma P')x = (E-\gamma P^0)x - \gamma \ip{\Delta}{x} \ones_4 = (E-\gamma P^0)\br{x - \frac{\gamma \ip{\Delta}{x}}{1-\gamma} \ones_2},
    \end{equation*}
    where $\ones_n$ denotes the all-one vector in $\R^n$. Therefore, $\spn(E-\gamma P') = \spn(E-\gamma P^0)$. Moreover, we have 
    \begin{equation}
        \snorm{P^1 - P'} \leq \sqrt{\sum_{s,s',a} (P^1(s'|s,a)-P'(s'|s,a))^2} = \sqrt{2}\beta.
    \end{equation}
    In light of Propositions \ref{prop:unreg_subopt_bound}, \ref{prop:quotientnorm_transfer}, and \ref{prop:maxangle_bound}, this implies that
    \begin{align}
        \mathsf{SubOpt}_{P^1}(\rhat, \RL_{P^1}(\rE)) &\leq 2 \norm{[\rhat]_{\Ucal_{P^1}}-[\rE]_{\Ucal_{P^1}}}_2\\
        &\leq 2 \snorm{\Pi_{\Ucal_{P^1}}- \Pi_{\Ucal_{P'}}}\norm{\rhat -\rE}_2\\
        &\leq 2\gamma \cdot H_\gamma \cdot \snorm{P-P'}\leq 2\sqrt{2}\gamma \cdot H_\gamma \cdot \beta.
    \end{align}
    \item We need to show that $P^0$ and $P^1$ are satisfying the rank condition
    \begin{equation}
    \rank\br{\myvec{E-\gamma P^0, && E-\gamma P^1}} = 2|\Scal| - 1.
    \end{equation}
    By the same reasoning as above, we can equivalently show the rank condition for the transition laws ${P^0}', {P^1}'$ defined by ${P^0}'(0|s,a)=1$ and ${P^1}'(0|s,a)=\beta\cdot\ind\bc{s=0,a=0}$. To this end, we choose the matrix representation
    \begin{equation}
        E = \myvec{I\\ I} \quad \text{and} \quad P = \myvec{P_{a_0}\\ P_{a_1}},
    \end{equation}
    where $I\in\R^{|\Scal|\times |\Scal|}$ is the identity matrix and $P_{a_0}, P_{a_1}\in\R^{|\Scal|\times |\Scal|}$ are the state transition matrices corresponding to the actions $0,1$, respectively. Let $C = \myvec{E-\gamma {P^0}', && E-\gamma {P^1}'}$. We have
    \begin{equation}
    {P^0}'= \myvec{1 & 0\\ 1 & 0\\ 1 & 0\\ 1 & 0}  \quad \text{and} \quad {P^1}' =  \myvec{\beta & 1-\beta\\ 0 & 1\\ 0 & 1\\ 0 & 1},
    \end{equation}
    and
    \begin{equation}
    C = \myvec{1-\gamma & 0 & 1-\beta\gamma & -\gamma + \beta\gamma\\
            -\gamma & 1 & 0 & 1-\gamma\\
           1-\gamma & 0 & 1 & -\gamma\\
            -\gamma & 1 & 0 & 1-\gamma}.
    \end{equation}
    It's straightforward to see that the vector $\myvec{1&1&-1&-1}^\top$ lies in the kernel of $C$, but there is a $3\times 3$ submatrix with non-zero determinant:
    \begin{equation}
    \det\br{\myvec{1-\gamma & 0 & 1-\beta\gamma \\
            -\gamma & 1 & 0 \\
             1-\gamma & 0 & 1 }} = 1\cdot\bs{(1-\gamma) - (1-\gamma)(1-\beta\gamma)}=\beta\gamma(1-\gamma)>0.
    \end{equation}
    In other words, we have $\rank C = 3$ for any $\beta>0$.
\end{enumerate}

\section{Proof of Theorem~\ref{thm:global_transferability}}\label{app:sec:global_transferability}
\globaltransferability*
The proof of Theorem~\ref{thm:global_transferability} hinges on Lemma~\ref{lem:bregman_bound} and the following reward approximation result.
\begin{restatable}{lemma}{twoexpertidentifiability}
    \label{lem:two_expert_identifiability}
    Let $\norm{[\rE]_{\Ucal_{P^k}} - [\rhat]_{\Ucal_{P^k}}}_{2} \leq \epsilonbar$ for $k=0,1$. Then, if $\theta_2({P^0},{P^1})>0$, it holds that
    \begin{equation}\norm{[\rE]_{\ones} - [\rhat]_{\ones}}_{2} \leq \frac{\epsilonbar}{\sin\br{\theta_2({P^0},{P^1})/2}}. \end{equation}
\end{restatable}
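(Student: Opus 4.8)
The plan is to reduce the claim to a purely geometric statement about distances in the quotient space $\R^{\SAcal}/\ones$ and the angle between the two subspaces $\Ucal_{P^0}$ and $\Ucal_{P^1}$. Let me write $\Ucal_0\defeq\Ucal_{P^0}$ and $\Ucal_1\defeq\Ucal_{P^1}$, and let $d\defeq\rE-\rhat$. The two hypotheses state that $\norm{\Pi_{\Ucal_0^\perp}d}_2\leq\epsilonbar$ and $\norm{\Pi_{\Ucal_1^\perp}d}_2\leq\epsilonbar$, since the quotient norm on $\R^{\SAcal}/\Ucal_k$ equals the norm of the orthogonal projection onto $\Ucal_k^\perp$. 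What I want to bound is $\norm{\Pi_{\ones^\perp}d}_2$. The key structural fact is that $\ones\subseteq\Ucal_0$ and $\ones\subseteq\Ucal_1$, and moreover $\Ucal_0\cap\Ucal_1=\ones$ exactly when $\theta_2(P^0,P^1)>0$, because the first principal angle is always zero (this is Proposition~\ref{prop:maxangle_bound}) and so the intersection is one-dimensional, equal to $\ones$, precisely when the second principal angle is positive.

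**Decomposing the target vector.**
First I would reduce to the component of $d$ orthogonal to $\ones$: writing $x\defeq\Pi_{\ones^\perp}d$, note that since $\ones\subseteq\Ucal_k$ we have $\Pi_{\Ucal_k^\perp}x=\Pi_{\Ucal_k^\perp}d$, so the two hypotheses become $\norm{\Pi_{\Ucal_0^\perp}x}_2\leq\epsilonbar$ and $\norm{\Pi_{\Ucal_1^\perp}x}_2\leq\epsilonbar$, and the goal is simply $\norm{x}_2\leq\epsilonbar/\sin(\theta_2/2)$. Now decompose $x=x_0+x_0^\perp$ where $x_0\defeq\Pi_{\Ucal_0}x$ and $x_0^\perp\defeq\Pi_{\Ucal_0^\perp}x$; the second piece has norm at most $\epsilonbar$. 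The crux is to bound $\norm{x_0}_2$, i.e. the component of $x$ lying \emph{inside} $\Ucal_0$. The point is that $x_0\in\Ucal_0$ lives in a subspace that meets $\Ucal_1$ only along $\ones$, yet $x_0$ (like $x$) is orthogonal to $\ones$; combined with the fact that its $\Ucal_1^\perp$-component is small, this forces $x_0$ to be small. Quantitatively, for any unit vector $v\in\Ucal_0\cap\ones^\perp$ the inner geometry of the two subspaces guarantees $\norm{\Pi_{\Ucal_1^\perp}v}_2\geq\sin(\theta_2)$, since on the orthogonal complement of the shared line $\ones$ the relevant principal angle governing how far $\Ucal_0$ tilts away from $\Ucal_1$ is $\theta_2$.

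**Closing the bound with trigonometry.**
Applying this to $v=x_0/\norm{x_0}_2$ gives $\norm{\Pi_{\Ucal_1^\perp}x_0}_2\geq\sin(\theta_2)\,\norm{x_0}_2$. On the other hand $\Pi_{\Ucal_1^\perp}x_0=\Pi_{\Ucal_1^\perp}x-\Pi_{\Ucal_1^\perp}x_0^\perp$, so by the triangle inequality $\norm{\Pi_{\Ucal_1^\perp}x_0}_2\leq\norm{\Pi_{\Ucal_1^\perp}x}_2+\norm{x_0^\perp}_2\leq2\epsilonbar$. Combining these yields $\norm{x_0}_2\leq2\epsilonbar/\sin(\theta_2)$, and then $\norm{x}_2\leq\norm{x_0}_2+\norm{x_0^\perp}_2\leq2\epsilonbar/\sin(\theta_2)+\epsilonbar$. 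The final step is to check that this is dominated by $\epsilonbar/\sin(\theta_2/2)$; using the half-angle identity $\sin\theta_2=2\sin(\theta_2/2)\cos(\theta_2/2)$ the bound rearranges into the stated form, and the proof-sketch picture (Figure~\ref{fig:reward_spaces}(b)) suggests the cleaner route is to do the trigonometry directly with the isoceles triangle whose two equal sides are the $\epsilonbar$-balls around $[\rE]_{\Ucal_0}$ and $[\rE]_{\Ucal_1}$ and whose apex angle is $\theta_2$, giving the half-angle factor immediately.

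**Anticipated obstacle.**
The main obstacle is the step identifying $\theta_2(P^0,P^1)$ as the correct angle, i.e. showing that after quotienting out the common line $\ones$ the smallest nontrivial principal angle between $\Ucal_0\cap\ones^\perp$ and $\Ucal_1\cap\ones^\perp$ equals $\theta_2(\Ucal_0,\Ucal_1)$. This requires care because the principal angles of Definition~\ref{def:principal_angles} are defined for subspaces of equal dimension and I am intersecting with $\ones^\perp$, which lowers the dimension by one; I would need to argue that removing the shared direction $\ones$ (which contributes the zero angle $\theta_1$) shifts the principal-angle sequence so that $\theta_2$ becomes the new smallest angle governing the projection inequality $\norm{\Pi_{\Ucal_1^\perp}v}_2\geq\sin(\theta_2)\norm{v}_2$ for $v\in\Ucal_0\cap\ones^\perp$. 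Establishing this cleanly — likely via the relation $\cos\theta_i=\sigma_i(\Pi_{\Ucal_0}\Pi_{\Ucal_1})$ between principal angles and singular values of the product of projections, and the observation that the shared line contributes a singular value exactly $1$ — is the technical heart of the argument.
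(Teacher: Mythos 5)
Your setup is sound and closely mirrors the paper's: you decompose $d=\rE-\rhat$ relative to $\ones$, $\Ucal_0\cap\ones^\perp$, and $\Ucal_0^\perp$, and your key geometric claim --- that $\norm{\Pi_{\Ucal_1^\perp}v}_2\geq\sin\br{\theta_2}$ for every unit $v\in\Ucal_0\cap\ones^\perp$ --- is correct and is justified exactly as you anticipate (the first principal vector pair lies in $\ones$, so restricting both subspaces to $\ones^\perp$ shifts the principal-angle sequence to start at $\theta_2$). The problem is the last step. Chaining your inequalities gives $\norm{x_0}_2\leq 2\epsilonbar/\sin\br{\theta_2}$ and hence $\norm{x}_2\leq\epsilonbar\sqrt{1+4/\sin^2\br{\theta_2}}$ (or $2\epsilonbar/\sin\br{\theta_2}+\epsilonbar$ with the triangle inequality), and this is \emph{strictly larger} than the claimed $\epsilonbar/\sin\br{\theta_2/2}$ for every $\theta_2\in(0,\pi/2]$: writing $c=\cos\theta_2$, the inequality $4/(1-c^2)+1\leq 2/(1-c)$ reduces to $(1+c)^2\geq 4$, which holds only at $c=1$. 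Already your leading term $2/\sin\theta_2=1/\br{\sin(\theta_2/2)\cos(\theta_2/2)}$ exceeds $1/\sin(\theta_2/2)$, so the asserted "rearrangement via the half-angle identity" does not exist. At $\theta_2=\pi/2$ your bound is $\sqrt{5}\,\epsilonbar$ versus the lemma's $\sqrt{2}\,\epsilonbar$.

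The loss comes from using the triangle inequality to bound $\norm{\Pi_{\Ucal_1^\perp}x_0}_2\leq\norm{\Pi_{\Ucal_1^\perp}x}_2+\norm{x_0^\perp}_2\leq 2\epsilonbar$: the two worst cases in that estimate cannot occur simultaneously. In the actual extremal configuration one has $\norm{x_0}_2=\epsilonbar\cot\br{\theta_2/2}=\epsilonbar\cos(\theta_2/2)/\sin(\theta_2/2)$, whereas your bound gives $\epsilonbar/\br{\sin(\theta_2/2)\cos(\theta_2/2)}$ --- off by a factor $\cos^2(\theta_2/2)$, up to $2$. The paper avoids this by parametrizing the constraint $x_0+y_0=x_1+y_1$ exactly, solving the resulting linear system for the magnitudes $\alpha_0,\alpha_1$ of the in-subspace components, and maximizing in closed form; the optimum is attained at $\norm{y_0}_2=\norm{y_1}_2=\epsilonbar$ and yields exactly $\epsilonbar^2\cdot 2/\br{1-\cos\theta_2}=\epsilonbar^2/\sin^2\br{\theta_2/2}$. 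To repair your argument you would need to replace the two triangle-inequality steps with this exact worst-case computation (equivalently, the isoceles-triangle trigonometry you allude to but do not carry out); as written, your proof establishes only a weaker constant, not the lemma as stated.
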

\begin{proof}[Proof of Lemma~\ref{lem:two_expert_identifiability}]
    Throughout this proof, we will use the short-hand notation $\Ucal_k\defeq \Ucal_{P^k}$ for $k=0,1$. Recall that since $\ones\subseteq \Ucal_{0}\cap\Ucal_{1}$, we have $\theta_1(\Ucal_{0},\Ucal_{1})=0$ and by assumption we also have $\theta_2(\Ucal_{0},\Ucal_{1})>0$, which implies that $\Ucal_{0}\cap\Ucal_{1}=\ones$. Furthermore, since for $k=0,1$ we can rewrite $\R^{\SAcal}$ as the orthogonal sum $\R^{\SAcal}=\Ucal_{k}\cap\ones^\perp \oplus \Ucal_{k}^\perp \oplus \ones$, we can uniquely decompose $\rE - \rhat$ into $\rE - \rhat = x_k  + y_k + z$, where $x_k\in\Ucal_{k}\cap\ones^\perp$, $y_k\in\Ucal_{k}^\perp$, $z\in\ones$, for $k=0,1$. Then, it holds that $x_0  + y_0 = x_1 + y_1$. Since $\norm{[\rE]_{P^k} - [\rhat]_{P^k}}_{P^k, 2} = \norm{y_k}_2$, the Assumption of Lemma~\ref{lem:two_expert_identifiability} implies that $\norm{y_k}_2 \leq \epsilonbar$.
    For the $2$-distance between the equivalence classes $[\rE]_{\ones}$ and $[\rhat]_{\ones}$ the Pythagorean theorem implies that
    \begin{equation}\label{app:eq:identifiability1}
        \norm{[\rE]_{\ones} - [\rhat]_{\ones}}_{\ones, 2}^2 = \norm{x_0}_2^2 + \norm{y_0}^2 = \norm{x_1}_2^2 + \norm{y_1}_2^2 \leq \max_{\substack{u_k\in\Ucal_k\cap\ones^\perp, v_k\in\Ucal_k^\perp,\\
        \norm{u_k}_2=\norm{v_k}_2=1,\\
        \alpha_k\in\R_+, \beta_k\in[0,\epsilonbar], k=0,1,\\
        \alpha_0 u_0 + \beta_0 v_0 = \alpha_1 u_1 + \beta_1 v_1}} \alpha_0^2 + \beta_0^2,
    \end{equation}
    where the upper bound follows from $x_0+y_0=x_1+y_1$ and $\norm{y_k}_2 \leq \epsilonbar$.
    Next, we want to show that the maximum on the right-hand side of \eqref{app:eq:identifiability1} is achieved for $\beta_0 = \beta_1 = \epsilonbar$. To see this, note that taking inner products between $u_0$ and $u_1$, respectively, and the equation $\alpha_0 u_0 + \beta_0 v_0 = \alpha_1 u_1 + \beta_1 v_1$, we arrive at
    \begin{equation}
        \alpha_0 = \alpha_1 \ip{u_0}{u_1} + \beta_1 \ip{u_0}{v_1},\; \alpha_1 = \alpha_0 \ip{u_0}{u_1} + \beta_0 \ip{u_1}{v_0},\nonumber
    \end{equation}
    which is for any choice of $\beta_k, u_k, v_k, k=0,1$ an invertible linear system of equations for $\alpha_0,\alpha_1$ with the solutions
    \begin{equation}
        \alpha_0 = \dfrac{\beta_0 \ip{u_0}{u_1}\ip{u_1}{v_0}  + \beta_1\ip{u_0}{v_1}}{1-\ip{u_0}{u_1}^2},\; \alpha_1 = \dfrac{\beta_1 \ip{u_1}{u_0}\ip{u_0}{v_1}  + \beta_0\ip{u_1}{v_0}}{1-\ip{u_1}{u_0}^2} \label{app:eq:identifiability2}
    \end{equation}
    where $\ip{u_0}{u_1}<1$, due to $\Ucal_0\cap\Ucal_1\cap\ones^\perp = 0$. As the sign of $\ip{u_0}{u_1}\ip{u_1}{v_0}$ and $\ip{u_0}{v_1}$ can be chosen arbitrarily by an appropriate choice of $v_0, v_1$, the objective in the right-hand-side of \eqref{app:eq:identifiability1} is increasing in $\beta_0,\beta_1$ and hence the maximum is achieved for $\beta_0 = \beta_1 = \epsilonbar$ and $\alpha\defeq\alpha_0 = \alpha_1 = \frac{\epsilonbar \ip{u_0}{v_1}}{1-\ip{u_0}{u_1}}$. Therefore, it holds that
    \begin{align*}
        \norm{[\rE]_{\ones} - [\rhat]_{\ones}}_{\ones, 2}^2 &\leq \max_{\substack{u_k\in\Ucal_k\cap\ones^\perp, v_1\in\Ucal_1^\perp,\\
        \norm{u_k}_2=\norm{v_1}_2=1, k=0,1}} \epsilonbar^2\bs{1 + \br{\dfrac{\ip{u_0}{v_1}}{1-\ip{u_0}{u_1}}}^2}\\
        &\stackrel{(i)}{=}\max_{\substack{u_0\in\Ucal_0\cap\ones^\perp, \\
        \norm{u_0}_2=1}} \epsilonbar^2\bs{1 + \br{\dfrac{\max_{v_1\in\Ucal_1^\perp, \norm{v_1}_2=1}\ip{u_0}{v_1}}{1-\max_{u_1\in\Ucal_1\cap\ones^\perp, \norm{u_1}_2=1}\ip{u_0}{u_1}}}^2}\\
        &\stackrel{(ii)}{=} \max_{\substack{u_0\in\Ucal_0\cap\ones^\perp, \\
        \norm{u_0}_2=1}} \epsilonbar^2\bs{1 + \br{\dfrac{\norm{\Pi_{\Ucal_1^\perp}u_0}_2}{1-\norm{\Pi_{\Ucal_1\cap\ones^\perp} u_0}_2}}^2}\\
        &\stackrel{(iii)}{=}\max_{\substack{u_0\in\Ucal_0\cap\ones^\perp, \\
        \norm{u_0}_2=1}} \epsilonbar^2\bs{1 + \br{\dfrac{\sqrt{1-\norm{\Pi_{\Ucal_1}u_0}_2^2}}{1-\norm{\Pi_{\Ucal_1\cap\ones^\perp} u_0}_2}}^2}\\
        &\stackrel{(iv)}{=}\max_{\substack{u_0\in\Ucal_0\cap\ones^\perp, \\
        \norm{u_0}_2=1}} \epsilonbar^2\bs{1 + \br{\dfrac{\sqrt{1-\norm{\Pi_{\Ucal_1\cap\ones^\perp}u_0}_2^2}}{1-\norm{\Pi_{\Ucal_1\cap\ones^\perp} u_0}_2}}^2}\\
        &\stackrel{(v)}{=}\max_{\substack{u_0\in\Ucal_0\cap\ones^\perp, \\
        \norm{u_0}_2=1}} \epsilonbar^2\bs{1 + \dfrac{1+\norm{\Pi_{\Ucal_1\cap\ones^\perp}u_0}_2}{1-\norm{\Pi_{\Ucal_1\cap\ones^\perp} u_0}_2}}\\
        &\stackrel{(vi)}{=} \epsilonbar^2\bs{1 + \dfrac{1+\cos\br{\theta_2(\Ucal_0, \Ucal_1)}}{1-\cos\br{\theta_2(\Ucal_0, \Ucal_1)}}}\\
        &\stackrel{(vii)}{=} \epsilonbar^2\dfrac{2}{1-\cos\br{\theta_2(\Ucal_0, \Ucal_1)}}\\
        &\stackrel{(viii)}{=} \dfrac{\epsilonbar^2}{\sin\br{\theta_2(\Ucal_0, \Ucal_1)/2}^2}.
    \end{align*}
    Here, we took the maximum over $u_1, v_1$ in $(i)$, we used that $\max_{v\in \Vcal, \norm{v}_2=1}\ip{v}{u} = \norm{\Pi_{\Vcal} u}_2$ in $(ii)$, and $(iii)$ follows from the Pythagorean theorem. Furthermore, $(iv)$ follows from $u_0\in\ones^\perp$ and $(v)$ from simplifying. In $(vi)$ we then again use $\max_{v\in \Vcal, \norm{v}_2=1}\ip{v}{u} = \norm{\Pi_{\Vcal} u}_2$, the definition of the second principal angle (Definition~\ref{def:principal_angles}), and the fact that the first principal vectors lie in $\ones$. Lastly, $(vii)$ follows from simplifying and $(viii)$ from $\sin(x/2)^2 = (1-\cos x)/2$.
\end{proof}

\begin{proof}[Proof of Theorem~\ref{thm:global_transferability}]
    As mentioned in the proof sketch in the main paper, it follows from the lower bound in Lemma~\ref{lem:bregman_bound} that $\norm{[\rE]_{\Ucal_{P^k}} - [\rhat]_{\Ucal_{P^k}}}_2\leq \sqrt{2\epsilonhat/\sigma_{\Rcal}}$. In light of Lemma~\ref{lem:two_expert_identifiability}, this implies that for any $P\in\Delta_{\SAcal}^{\Scal}$ we have
    \begin{equation}
        \norm{[\rE]_{\Ucal_P} - [\rhat]_{\Ucal_P}}_{2} \leq \norm{[\rE]_{\ones} - [\rhat]_{\ones}}_{2} \leq \dfrac{\sqrt{2\epsilonhat/\sigma_{\Rcal}}}{\sin\br{\theta_2({P^0},{P^1})/2}}.
    \end{equation}
Hence, applying the upper bound in Lemma~\ref{lem:bregman_bound} yields
\begin{equation}
    \mathsf{SubOpt}_P(\rE, \RL_P(\rhat)) \leq \dfrac{1}{2\eta} \norm{[\rE]_{\Ucal_P} - [\rhat]_{\Ucal_P}}_{2}^2 \leq \dfrac{\epsilonhat}{\eta \sigma_{\Rcal}\sin\br{\theta_2({P^0},{P^1})/2}^2}.
\end{equation}
\end{proof}

\section{Proof of Theorem~\ref{thm:local_transferability}}\label{app:sec:local_transferability}
\localtransferability*

\begin{proof}
    Similar to Theorem~\ref{thm:global_transferability}, it follows from Lemma~\ref{lem:bregman_bound} that $\norm{[\rE]_{\Ucal_{P^0}} - [\rhat]_{\Ucal_{P^0}}}_2\leq \sqrt{2\epsilonhat/\sigma_{\Rcal}}$. By Proposition~\ref{prop:quotientnorm_transfer}, we then have that
    \begin{align}
        \norm{[\rE]_{\Ucal_{P}} - [\rhat]_{\Ucal_{P}}}_2 &\leq \sin\br{\theta_{\max}(P, {P^0})} \norm{\rE-\rhat}_2 + \norm{[\rE]_{\Ucal_{P^0}} - [\rhat]_{\Ucal_{P^0}}}_2\\
        &\leq \sin\br{\theta_{\max}(P, {P^0})} \DR + \sqrt{2\epsilonhat/\sigma_{\Rcal}}.
    \end{align}
    Hence, applying Lemma~\ref{lem:bregman_bound} again yields
    \begin{align}
        \mathsf{SubOpt}_P\br{\rE, \RL_P(\rhat)} &\leq \dfrac{1}{2\eta} \norm{[\rE]_{\Ucal_{P}} - [\rhat]_{\Ucal_{P}}}_2^2\\
        &\leq \dfrac{\br{\DR\sin\br{\theta_{\max}(P, {P^0})} + \sqrt{2\epsilonhat/\sigma_{\Rcal}}}^2}{2\eta}\\
        &\leq \dfrac{2\max\bc{\DR^2\sin\br{\theta_{\max}(P, {P^0})}^2 , 2\epsilonhat/\sigma_{\Rcal}}}{\eta}.
    \end{align}
\end{proof}

\section{Estimating principal angles}\label{app:sec:estimating_principal_angles}
Consider two full rank matrices $A, B\in\R^{n\times m}$ and let the columns of $U_{A}, U_{B}\in\R^{n\times m}$ form an orthonormal basis of $\Vcal = \ima A$ and $\Wcal = \ima B$, respectively. Then, as discussed by \citep{ji1987perturbation} we have
\begin{equation}
    \sigma_i = \cos(\theta_{i}(\Vcal, \Wcal)), i=1,\hdots,m,
\end{equation}
where $1 \geq \sigma_1 \geq \hdots \geq \sigma_m\geq 0$ denote the singular values of $U_{A}^\top U_{B}$ sorted in decreasing order. Hence, given the transition matrices $P^0, P^1$, we can compute the principle angles $\theta_i(P^0, P^1)$ by first computing orthonormal bases for the column spans of $E-\gamma P_i, i=1,2$, and then computing the singular values as described above.

Now, suppose that $\hat{P}^0,\hat{P}^1$ are empirical estimates of $P^0, P^1$, then we have by \citep[Theorem 3.1]{ji1987perturbation} the following perturbation result
\begin{align}
    \max_i\abs{\sin(\theta_i(P^0, P^1))-\sin(\theta_i(\hat P^0, \hat P^1))} &\leq \norm{\Pi_{\Ucal_{P^0}}-\Pi_{\Ucal_{\hat P^0}}}+\norm{\Pi_{\Ucal_{P^1}}-\Pi_{\Ucal_{\hat P^1}}}\\
    &\leq \gamma  H_\gamma\sqrt{|\Scal|/|\Acal|} \br{\snorm{P^0-\hat P^0}+\snorm{P^1-\hat P^1}},
\end{align}
where the last inequality follows from Propositions~\ref{prop:quotientnorm_transfer} and \ref{prop:maxangle_bound}. Hence, we can estimate $\sin  \theta_i(P^0, P^1)$ up to an error of $\Ocal\br{\max\bc{\snorm{P^0-\hat P^0}, \snorm{P^1-\hat P^1}}}$.

\section{Proof of Theorem~\ref{thm:convergence}}\label{app:sec:irl_convergence}
\irlconvergence*
\noindent The proof of Theorem~\ref{thm:convergence} is inspired by \citep[Theorem~2]{syed2007game}. However, in contrast to \citet{syed2007game}, we consider the regularized problem with multiple experts, we use the suboptimality as the convergence metric, and we use a projected gradient descent update (instead of multiplicative weights). The proof hinges on Hoeffding's inequality and a regret bound for online gradient descent, which are provided in Theorem~\ref{thm:hoeffding} and \ref{thm:ogd} below.

\begin{restatable}[Hoeffding's inequality \citep{hoeffding1963}]{theorem}{hoeffding}
    \label{thm:hoeffding}%
    Let $X_0,\hdots,X_{M-1}$ be independent random variables with $X_l\in[a,b]$ and let $S_M:=X_0+\hdots+X_{M-1}$. Then,
    \begin{equation}
        \Pr\br{\abs{S_M - \E S_M} \geq c}\leq 2\exp\br{-\dfrac{2c^2}{M(b-a)^2}}.
    \end{equation}
\end{restatable}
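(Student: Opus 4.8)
The plan is to prove the one-sided tail bound $\Pr(S_M - \E S_M \geq c) \leq \exp(-2c^2/(M(b-a)^2))$ via the Chernoff (exponential moment) method, and then recover the two-sided statement by symmetry and a union bound. First I would center the variables by setting $Y_l \defeq X_l - \E X_l$, so that each $Y_l$ lies in an interval of width $b-a$, has $\E Y_l = 0$, and $S_M - \E S_M = \sum_{l=0}^{M-1} Y_l$. For any $\lambda > 0$, applying Markov's inequality to the nonnegative random variable $\exp(\lambda \sum_l Y_l)$ and then factorizing the expectation via independence gives
\begin{equation}
    \Pr\br{\sum_{l=0}^{M-1} Y_l \geq c} \leq e^{-\lambda c}\, \E\bs{\exp\br{\lambda \sum_{l=0}^{M-1} Y_l}} = e^{-\lambda c}\prod_{l=0}^{M-1} \E\bs{e^{\lambda Y_l}}.
\end{equation}

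The crux of the argument, and the main obstacle, is the \emph{Hoeffding lemma}: for a zero-mean random variable $Y$ supported in $[\alpha,\beta]$ with width $w = \beta - \alpha$, one has $\E[e^{\lambda Y}] \leq \exp(\lambda^2 w^2/8)$. I would establish this by convexity: for $y \in [\alpha,\beta]$, writing $y = \frac{\beta-y}{w}\alpha + \frac{y-\alpha}{w}\beta$ and using convexity of $t\mapsto e^{\lambda t}$ yields $e^{\lambda y} \leq \frac{\beta-y}{w}e^{\lambda\alpha} + \frac{y-\alpha}{w}e^{\lambda\beta}$. Taking expectations and using $\E Y = 0$ collapses this to $\E[e^{\lambda Y}] \leq \frac{\beta}{w}e^{\lambda\alpha} - \frac{\alpha}{w}e^{\lambda\beta}$, which I denote $e^{\psi(\lambda)}$. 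It then remains to show $\psi(\lambda) \leq \lambda^2 w^2/8$. Substituting $s = \lambda w$ and $p = -\alpha/w \in [0,1]$, I would write $\psi$ in the form $\phi(s) = -ps + \log(1-p+pe^s)$ and verify by direct differentiation that $\phi(0) = 0$, $\phi'(0) = 0$, and $\phi''(s) = \frac{(1-p)pe^s}{(1-p+pe^s)^2} = t(1-t) \leq \tfrac14$, where $t = \frac{pe^s}{1-p+pe^s} \in [0,1]$. A second-order Taylor expansion with remainder then gives $\phi(s) \leq s^2/8$, i.e. $\psi(\lambda) \leq \lambda^2 w^2/8$. This uniform bound $\phi'' \leq \tfrac14$ is the technical heart of the proof; everything else is bookkeeping.

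Combining the Hoeffding lemma (with $w = b-a$) with the Chernoff bound yields $\Pr(\sum_l Y_l \geq c) \leq \exp(-\lambda c + M\lambda^2(b-a)^2/8)$ for every $\lambda > 0$. I would then minimize the exponent over $\lambda$: the quadratic $-\lambda c + M\lambda^2(b-a)^2/8$ attains its minimum at $\lambda^\star = 4c/(M(b-a)^2) > 0$, giving the value $-2c^2/(M(b-a)^2)$ and hence the one-sided bound $\Pr(S_M - \E S_M \geq c) \leq \exp(-2c^2/(M(b-a)^2))$. Finally, applying the identical argument to the variables $-X_l$, whose ranges again have width $b-a$, gives the matching lower-tail bound $\Pr(S_M - \E S_M \leq -c) \leq \exp(-2c^2/(M(b-a)^2))$, and a union bound over the two tails produces the factor of $2$ in the claimed estimate.
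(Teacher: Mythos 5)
Your proof is correct and complete: the Chernoff bound, Hoeffding's lemma via convexity and the bound $\phi''(s)=t(1-t)\leq\tfrac14$, the optimization $\lambda^\star = 4c/(M(b-a)^2)$, and the symmetry-plus-union-bound step are all sound. Note that the paper does not prove this statement at all — it imports it as a classical result from \citet{hoeffding1963} — and your argument is precisely the standard exponential-moment proof from that reference, so there is nothing to reconcile.
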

\begin{restatable}[Online gradient descent \citep{zinkevich2003online}]{theorem}{ogd}
    \label{thm:ogd}%
    Consider some bounded closed convex set $\Xcal\subset \R^n$ with $D\defeq \max_{x,x'\in\Xcal} \norm{x-x'}_2$. Moreover, let $\Pi_{\Xcal}:\R^n\to\Xcal$ be the orthogonal projection onto $\Xcal$. For any sequence of convex differentiable functions $f_0,\hdots, f_{T-1} : \Xcal \to \R$ satisfying $\max_{x\in\Xcal} \norm{\nabla f_t(x)}_2\leq G$, the online projected gradient descent update 
    \begin{equation}
        x_{t+1}\leftarrow\Pi_{\Xcal}\br{x_t-\alpha \nabla f_t(x_t)},
    \end{equation}
    with step-size $\alpha=D/(G\sqrt{T})$ satisfies
    \begin{equation}
        \sum_{t=0}^{T-1} f_t(x_t) - \min_{x^*\in\Xcal}  \sum_{t=0}^{T-1} f_t(x^*) \leq DG\sqrt{T}.
    \end{equation}
\end{restatable}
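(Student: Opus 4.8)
The plan is to follow the classical potential-function argument for online gradient descent. Writing $g_t \defeq \nabla f_t(x_t)$ and fixing an arbitrary comparator $x^*\in\Xcal$, the first step is to linearize the regret using convexity of each $f_t$: convexity gives $f_t(x_t) - f_t(x^*) \leq \ip{g_t}{x_t - x^*}$, so it suffices to upper bound $\sum_t \ip{g_t}{x_t - x^*}$.

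The second step is to track the squared distance $\norm{x_t - x^*}_2^2$ as a potential. Since $x^*\in\Xcal$ and the Euclidean projection onto a closed convex set is non-expansive, we have $\norm{x_{t+1} - x^*}_2 = \norm{\Pi_{\Xcal}(x_t - \alpha g_t) - \Pi_{\Xcal}(x^*)}_2 \leq \norm{(x_t - \alpha g_t) - x^*}_2$, using $\Pi_{\Xcal}(x^*)=x^*$. Expanding the right-hand side yields
\[
    \norm{x_{t+1} - x^*}_2^2 \leq \norm{x_t - x^*}_2^2 - 2\alpha \ip{g_t}{x_t - x^*} + \alpha^2 \norm{g_t}_2^2,
\]
and rearranging bounds each inner product by the one-step decrease in potential plus an $\tfrac{\alpha}{2}\norm{g_t}_2^2$ term.

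The third step is to sum over $t=0,\dots,T-1$. The potential differences telescope, leaving $\tfrac{1}{2\alpha}\br{\norm{x_0 - x^*}_2^2 - \norm{x_T - x^*}_2^2} \leq \tfrac{D^2}{2\alpha}$ after dropping the nonnegative final term and invoking the diameter bound, while each gradient term is bounded by $G^2$ via $\norm{g_t}_2 \leq G$. Combining with the first step gives
\[
    \sum_{t=0}^{T-1} \bs{f_t(x_t) - f_t(x^*)} \leq \frac{D^2}{2\alpha} + \frac{\alpha T G^2}{2}.
\]

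Finally, substituting $\alpha = D/(G\sqrt{T})$ balances the two terms, each reducing to $\tfrac{1}{2}DG\sqrt{T}$, so the bound becomes $DG\sqrt{T}$. Since $x^*$ was arbitrary, taking the minimum over $x^*\in\Xcal$ (attained by compactness of $\Xcal$) yields the stated regret bound. I do not expect a genuine obstacle here beyond careful bookkeeping; the only step requiring attention is the non-expansiveness of $\Pi_{\Xcal}$, which depends on $\Xcal$ being closed and convex (so that $\Pi_{\Xcal}$ is a well-defined $1$-Lipschitz map) and on the comparator lying in $\Xcal$ so that $\Pi_{\Xcal}(x^*)=x^*$.
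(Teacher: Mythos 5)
Your proof is correct, and it is exactly the classical potential-function argument of \citet{zinkevich2003online}: the paper itself does not prove Theorem~\ref{thm:ogd} but imports it as a cited black-box result, and your linearize--telescope--balance derivation (with the non-expansiveness of $\Pi_{\Xcal}$ justified precisely where it is needed) reproduces the standard proof from that reference, with the correct constants $\tfrac{D^2}{2\alpha}+\tfrac{\alpha T G^2}{2} = DG\sqrt{T}$ at the stated step size.
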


\begin{proof}[Proof of Theorem~\ref{thm:convergence}]
The proof is in three steps. First, we use Hoeffding's inequality to prove concentration of the empirical occupancy measures around the true occupancy measures. Then, we use the union bound to upper bound the probability that any of our bounds fails to hold. Finally, we prove the convergence rate of Algorithm~\ref{alg:multi_expert_irl} using the regret bound in Theorem~\ref{thm:ogd}.

\emph{Step 1:} 
Let $\Dcal=\bc{(s_0,a_0,\hdots, s_{H-1}, a_{H-1})}_{i=0}^{N-1}$ be sampled from some policy $\pi^\mu$ and recall that the corresponding empirical occupancy measure is defined as
\begin{equation*}
     \muhat_{\Dcal}(s,a) = \dfrac{1-\gamma}{N} \sum_{i=0}^{N-1} \sum_{t=0}^{H-1} \gamma^t \ind\{s_t^i=s, a_t^i=a\}.
\end{equation*}
It will be convenient to define the truncated occupancy measure
\begin{equation*}
     \mu_H(s,a) = (1-\gamma)\sum_{t=0}^{H-1} \gamma^t \Pp_{\nu_0}^{\pi^\mu}\{s_t^i=s, a_t^i=a\}.
\end{equation*}
For $K$ data sets $\Dcal_1, \hdots, \Dcal_K$ sampled from $\pi^{\mu_k}$ we then have
\begin{align*}
    \max_{r\in\Rcal} \sum_{k=0}^{K-1}\ip{r}{\mu_k - \muhat_{\Dcal_k}} &\stackrel{(i)}{\leq} \max_{r\in\Rcal}\norm{r}_1\norm{\sum_{k=0}^{K-1} \br{\mu_k - \muhat_{\Dcal_k}}}_\infty \stackrel{(ii)}{\leq} \norm{\sum_{k=0}^{K-1} \br{\mu_k - \muhat_{\Dcal_k}}}_\infty\\
    &\stackrel{(iii)}{\leq} \underbrace{\norm{\sum_{k=0}^{K-1}\br{\mu_k - \mu_{H, k}}}_\infty}_{I_1} + \underbrace{\norm{\sum_{k=0}^{K-1}\br{\mu_{H,k} - \muhat_{\Dcal_k}}}_\infty}_{I_2},
\end{align*}
where $(i)$ follows from Hölder's inequality, $(ii)$ from our definition of $\Rcal$ as the 1-norm ball, and $(iii)$ from the triangle inequality. Since $\norm{\mu - \mu_H}_\infty \leq \gamma^H$, we have $I_1\leq \gamma^H K$. Moreover, applying Hoeffding's inequality to the $M=KN$ independent random variables
\begin{equation*}
    X_{kN + i} = \dfrac{1-\gamma}{N} \sum_{t=0}^{H-1} \gamma^t \ind\{ s_t^{k,i}=s, a_t^{k,i}=a \}, \; i\in[N], k\in[K],
\end{equation*}
with $X_i\in[0, 1/N]$, we arrive at
\begin{equation*}
    \Pr\br{\abs{S_M - \E S_M} \geq \varepsilon_{\text{stat}}/2}=\Pr\br{\abs{\sum_{k=0}^{K-1} \muhat_{\Dcal_k}(s,a)-\mu_{K,k}(s,a)} \geq \varepsilon_{\text{stat}}/2} \leq 2\exp\br{-\dfrac{\varepsilon_{\text{stat}}^2 N}{2K}}.
\end{equation*}
Hence, applying the union bound over all $|\Scal||\Acal|$ components of the occupancy measure yields
\begin{equation*}
    \Pr(I_2 < \varepsilon_{\text{stat}}/2) = 1- \Pr(I_2 \geq \varepsilon_{\text{stat}}/2) \geq 1 - 2|\Scal||\Acal|\exp\br{-\dfrac{\varepsilon_{\text{stat}}^2 N}{2K}}.
\end{equation*}
Therefore, to ensure that with probability at least $1-\delta_{\text{stat}}$ it holds that 
\begin{equation}\max_{r\in\Rcal} \sum_{k=0}^{K-1}\ip{r}{\mu_k - \muhat_{\Dcal_k}}\leq \varepsilon_{\text{stat}},\end{equation}
it suffices to choose 
\begin{equation*}
    N\geq  \dfrac{2K\log\br{2|\Scal||\Acal|/\delta_{\text{stat}}}}{\varepsilon_{\text{stat}}^2} \quad \text{and} \quad H\geq \dfrac{\log\br{2K/\varepsilon_{\text{stat}}}}{\log(1/\gamma)}.
\end{equation*}
This concentration result applies to both empirical occupancy measures generated from the expert data sets $\DE_k$, as well as the data sets $\Dcal_{k,t}$ generated by Algorithm~\ref{alg:multi_expert_irl}.

\emph{Step 2:}
When analyzing Algorithm~\ref{alg:multi_expert_irl} there are three sources of stochasticity. The first two are due to the randomness in the data sets $\DE_k$ and $\Dcal_{k,t}$, and the third is due to the randomness in the forward RL algorithm, $\ARL^{\varepsilon_{\text{opt}}, \delta_{\text{opt}}}_{P^k}$, that upon a query with the reward $r_t$ outputs a policy $\pi_{k,t}$ such that with probability at least $1-\delta_{\text{opt}}$ it holds $\mathsf{SubOpt}_{P^k}(r_t, \mu^{\pi_{k,t}})\leq \varepsilon_{\text{opt}}$. Let's denote the event that $\max_{r\in\Rcal} \sum_{k=0}^{K-1}\ip{r}{\muE_{P^k} - \muEhatk} > \varepsilon_{\text{stat}, \textsf{E}}$ by $\Ecal_{\text{stat}, \textsf{E}}$, the event that $\max_{r\in\Rcal} \sum_{k=0}^{K-1}\ip{r}{\mu^{\pi_{k,t}} - \muhat_{\Dcal_{k,t}}} > \varepsilon_{\text{stat}}$ by $\Ecal_{\text{stat}, t}$, and the event that $\mathsf{SubOpt}_{P^k}(r_t, \mu^{\pi_{k,t}})> \varepsilon_{\text{opt}}$ by $\Ecal_{\text{opt}, k, t}$. Moreover, let us assume that $\Ecal_{\text{stat}, \textsf{E}}$ happens with probability at most $\delta_{\text{stat}, \textsf{E}}$, $\Ecal_{\text{stat}, t}$ happens with probability at most $\delta_{\text{stat}}$, and $\Ecal_{\text{opt},k, t}$ happens with probability at most $\delta_{\text{opt}}$. By union bound, the probability of the event
\begin{equation}
    \Fcal \defeq \neg\Ecal_{\text{stat}, \textsf{E}} \wedge \bigwedge_{t=0}^{T-1} \neg \Ecal_{\text{stat}, t}\wedge \bigwedge_{t=0}^{T-1}\bigwedge_{k=0}^{K-1} \neg \Ecal_{\text{opt}, k, t},
\end{equation}
that none of the above events happens is lower bounded by
\begin{align}
    \Pr\br{\Fcal} &= 1 - \Pr\br{\Ecal_{\text{stat}, \textsf{E}} \lor \bigvee_{t=0}^{T-1} \Ecal_{\text{stat}, t}\lor \bigvee_{t=0}^{T-1}\bigvee_{k=0}^{K-1} \Ecal_{\text{opt}, k, t}} \\
    &\geq 1 - \br{\Pr\br{\Ecal_{\text{stat}, \textsf{E}}} + \sum_{t=0}^{T-1}\Pr\br{\Ecal_{\text{stat}, t}}  + \sum_{t=0}^{T-1}\sum_{k=0}^{K-1} \Pr\br{\Ecal_{\text{opt}, k, t}}}\\
    &\geq 1 - \br{\delta_{\text{stat}, \textsf{E}} + T\delta_{\text{stat}} + KT \delta_{\text{opt}}}.
\end{align}
Hence, to ensure that $\Fcal$ happens with probability at least $1-\deltahat$, it suffices to choose
\begin{align}
    N&\geq  \dfrac{2K\log\br{6|\Scal||\Acal|/\deltahat}}{\varepsilon_{\text{stat}, \textsf{E}}^2} \quad \text{and} \quad H\geq \dfrac{\log\br{2K/\varepsilon_{\text{stat}, \textsf{E}}}}{\log(1/\gamma)},\\
    N_t&\geq  \dfrac{2K\log\br{6T|\Scal||\Acal|/\deltahat}}{\varepsilon_{\text{stat}}^2}\quad \text{and} \quad \delta_{\text{opt}} = \dfrac{\deltahat}{3KT}.
\end{align}

\emph{Step 3:}
    Note that we can bound $\norm{g_t}_2 \leq \norm{g_t}_1 \leq \sum_{k=0}^{K-1} \norm{\muEhatk}_1 + \norm{\muhat_{k,t}}_1 \leq 2K \eqdef G$ and the diameter of $\Rcal$ is $D=2$. Hence, given that event $\Fcal$ happens, we can bound the suboptimalities of the $K$ experts under the reward, $\rhat$, recovered by Algorithm~\ref{alg:multi_expert_irl} with stepsize $\alpha=D/(G\sqrt{T})$ as follows
    \begin{align}
        &\sum_{k=0}^{K-1}\mathsf{SubOpt}_{P^k}(\rhat, \muE_{P^k})\nonumber\\
        =& \sum_{k=0}^{K-1} \bs{\max_{\mu\in\Mcal_{P^k}} \ip{\rhat}{\mu - \muE_{P^k}} - \occreg(\mu) + \occreg(\muE_{P^k})}\nonumber\\
        \stackrel{(i)}{\leq}& \varepsilon_{\text{stat}, \textsf{E}} + \sum_{k=0}^{K-1} \bs{\max_{\mu\in\Mcal_{P^k}} \ip{\rhat}{\mu - \muEhatk} - \occreg(\mu) + \occreg(\muE_{P^k})}\nonumber\\
        \stackrel{(ii)}{\leq}& \varepsilon_{\text{stat}, \textsf{E}} + \sum_{k=0}^{K-1} \dfrac{1}{T}\sum_{t=0}^{T-1} \bs{\max_{\mu\in\Mcal_{P^k}} \ip{r_t}{\mu - \muEhatk} - \occreg(\mu) + \occreg(\muE_{P^k})}\nonumber\\
        =&  \varepsilon_{\text{stat}, \textsf{E}} + \dfrac{1}{T}\sum_{t=0}^{T-1} \sum_{k=0}^{K-1} \bs{\max_{\mu\in\Mcal_{P^k}} \ip{r_t}{\mu - \muEhatk} - \occreg(\mu) + \occreg(\muE_{P^k})}\nonumber\\
        \stackrel{(iii)}{\leq}&   \varepsilon_{\text{stat}, \textsf{E}} + K\varepsilon_{\text{opt}} + \dfrac{1}{T}\sum_{t=0}^{T-1} \sum_{k=0}^{K-1} \bs{\ip{r_t}{\mu_{k,t}- \muEhatk} - \occreg(\mu_{k,t}) + \occreg(\muE_{P^k})}\nonumber\\
        \stackrel{(iv)}{\leq}&  \varepsilon_{\text{stat}, \textsf{E}} + K\varepsilon_{\text{opt}} + \varepsilon_{\text{stat}} + \dfrac{1}{T}\sum_{t=0}^{T-1} \sum_{k=0}^{K-1} \bs{\ip{r_t}{\muhat_{\Dcal_{k,t}}- \muEhatk} - \occreg(\mu_{k,t}) + \occreg(\muE_{P^k})}\nonumber\\
        \stackrel{(v)}{\leq}& \varepsilon_{\text{stat}, \textsf{E}} + K\varepsilon_{\text{opt}} + \varepsilon_{\text{stat}} +  \dfrac{DG}{\sqrt{T}} + \min_{r\in\Rcal}\dfrac{1}{T}\sum_{t=0}^{T-1} \sum_{k=0}^{K-1} \bs{\ip{r}{\muhat_{\Dcal_{k,t}}- \muEhatk} - \occreg(\mu_{k,t}) + \occreg(\muE_{P^k})}\nonumber\\
        \stackrel{(vi)}{\leq}&  2\varepsilon_{\text{stat}, \textsf{E}} + K\varepsilon_{\text{opt}} + 2\varepsilon_{\text{stat}}+ \dfrac{DG}{\sqrt{T}} + \min_{r\in\Rcal} \dfrac{1}{T}\sum_{t=0}^{T-1}\sum_{k=0}^{K-1}\bs{ \ip{r}{\mu_{k,t}- \muE_{P^k}} - \occreg(\mu_{k,t}) + \occreg(\muE_{P^k})} \nonumber\\
        \stackrel{(vii)}{\leq}& 2\varepsilon_{\text{stat}, \textsf{E}} + K\varepsilon_{\text{opt}} + 2\varepsilon_{\text{stat}}+  \dfrac{DG}{\sqrt{T}} \nonumber\\
        & \hspace{1.05cm}  +\underbrace{\min_{r\in\Rcal} \sum_{k=0}^{K-1} \bs{\ip{r}{\mubar_k- \muE_{P^k}} - \occreg(\mubar_k) + \occreg(\muE_{P^k})}}_{\leq 0}, \quad \text{with } \mubar_k\defeq \dfrac{1}{T}\sum_{t=0}^{T-1}\mu_{k,t},\nonumber\\
        \stackrel{(viii)}{\leq}& 2\varepsilon_{\text{stat}, \textsf{E}} + K\varepsilon_{\text{opt}} + 2\varepsilon_{\text{stat}}+  \dfrac{DG}{\sqrt{T}} = 2\varepsilon_{\text{stat}, \textsf{E}} + K\varepsilon_{\text{opt}} + 2\varepsilon_{\text{stat}}+  \dfrac{4K}{\sqrt{T}}.
    \end{align}
    Here, the inequalities $(i),(iv)$, and $(vi)$ follow from the concentration bound established in step 1. Moreover, inequality $(ii)$ holds since $\rhat\mapsto \max_{\mu\in\Mcal_{P^k}} \ip{\rhat}{\mu - \muE_{P^k}} - \occreg(\mu) + \occreg(\muE_{P^k})$ is the pointwise maximum of affine functions and therefore convex. Furthermore, $(iii)$ follows from $\varepsilon_{\text{opt}}$-optimality of $\mu_{k,t}$, $(v)$ from Theorem~\ref{thm:ogd}, and $(vii)$ from concavity of the mapping $\mu_{k,t} \mapsto \ip{r}{\mu_{k,t}- \muE_{P^k}} - \occreg(\mu_{k,t})$. Finally, $(viii)$ holds because all experts are optimal for the reward $\rE$. In conclusion, to ensure that with probability at least $1-\deltahat$ it holds that $\mathsf{SubOpt}_{P^k}(\rhat, \muE_{P^k})\leq \sum_{k=0}^{K-1}\mathsf{SubOpt}_{P^k}(\rhat, \muE_{P^k})\leq \epsilonhat$ it suffices to choose $T=\frac{256K^2}{\epsilonhat^2}$, $\alpha=\frac{\epsilonhat}{16K^2}$, $N = \frac{128K\log\br{6 |\Scal||\Acal|/\deltahat}}{\epsilonhat^2}$, $H = H_t = \frac{\log\br{16K/\epsilonhat}}{\log(1/\gamma)}$, $N_t = \frac{128K\log\br{1536 K^2 |\Scal||\Acal|/(\deltahat\epsilonhat^2)}}{\epsilonhat^2}$, $\delta_{\text{opt}} = \frac{\deltahat\epsilonhat^2}{768K^3}$ , $\varepsilon_{\text{opt}}=\frac{\epsilonhat}{4K}$.
\end{proof}

\section{Suboptimal experts}\label{app:sec:suboptimal_experts}
In our problem formulation, we assumed that the $K$ experts are optimal with respect to $\rE$, i.e. $\muE_{P^k} = \RL_{P^k}(\rE)$ for $k=0,\hdots, K-1$. This assumption can be weakened by requiring that
\begin{equation}
    \max_{r\in\Rcal}\abs{J(r, \muE_{P^k})-J(r,\RL_{P^k}(\rE))}\leq \varepsilon_{\text{mis}},
\end{equation}
where $\varepsilon_{\text{mis}}$ is some misspecification error. The transferability results in Theorem~\ref{thm:global_transferability} and \ref{thm:local_transferability} still apply whenever we recover a reward $\rhat$ such that $\mathsf{SubOpt}_{P^k}(\rhat, \RL_{P^k}(\rE))\leq \epsilonhat$. Moreover, with a straightforward modification of the proof of Theorem~\ref{thm:convergence}, it follows that with high probability Algorithm~\ref{alg:multi_expert_irl} recovers a reward $\rhat$ such that $\mathsf{SubOpt}_{P^k}(\rhat, \RL_{P^k}(\rE))\leq \epsilonhat + 2K\varepsilon_{\text{mis}}$. Hence, our end-to-end transferability guarantees apply with $\epsilonhat\leftarrow \epsilonhat + 2K\varepsilon_{\text{mis}}$. However, $\varepsilon_{\text{mis}}$ cannot be further reduced by collecting more samples from the expert or MDP.

\section{Experimental details}\label{app:sec:experiments}
\paragraph{Setup}
To validate our results experimentally, we are using a stochastic adaption of the \texttt{WindyGridworld} environment \citep{sutton2018reinforcement}.\footnote{All our experiments were carried out -- within a day -- on a MacBook Pro with an Apple M1 Pro chip and 32 GB of RAM. } In particular, we consider a 6x6 grid with 4 actions (Up, Down, Left, Right), a wind direction (North, East, South, West), and a wind strength $\beta\in[0,1]$. When the agent takes an action, with probability $(1-\beta)$, it moves to the intended grid cell, and with probability $\beta$, the wind pushes the agent one step further in the direction of the wind. This means that the transition law is a convex combination of two laws: $(1-\beta)P^{\text{Gridworld}} + \beta P^{\text{Wind}}$, where $P^{\text{Gridworld}}$ and $P^{\text{Wind}}$ represent the transition laws for a deterministic \texttt{Gridworld} and a deterministic \texttt{WindyGridworld}. For our experiments, we then consider the pairs of expert transition laws $P_{\beta}^0 = (1-\beta)P^{\text{Gridworld}} + \beta P^{\text{North}}$ and $P_{\beta}^1 = (1-\beta)P^{\text{Gridworld}} + \beta P^{\text{East}}$ with $\beta$ in $\{0.01,0.1,0.5,1.0\}$. As shown in Figure~\ref{fig:experiment}(a), the second principal angle between $P_{\beta}^0$ and $P_{\beta}^1$, calculated using a singular value decomposition \citep{knyazev2002principal}, increases as the wind strength $\beta$ increases.

\paragraph{Inverse reinforcement learning} 
We observed that under a small second principal angle, the recovered reward heavily depends on both the expert reward and the reward initialization. Hence, we sample 10 independent expert rewards, each generated by first sampling a random set of 10 state-action pairs and then randomly assigning a reward of $\pm 1$. Using Shannon entropy regularization with $\tau=0.3$, we then use soft policy iteration to get expert policies for each combination of expert reward and wind strength $\beta$. For each of these expert policies, we then generate expert data sets with $\NE\in\{10^3, 10^4, 10^5, 10^6\}$ trajectories of length $H=100$. Next, we run Algorithm~\ref{alg:multi_expert_irl}, with soft policy iteration as a subroutine, for $30'000$ iterations, where rewards are initialized by sampling from a standard normal distribution. As a reward class, we choose the $\norm{\cdot}_1$-ball with radius $10^3$ (essentially unbounded), as a stepsize $\alpha = 0.05$ for the first $15'000$ iterations and $\alpha = 0.005$ for the second half. Moreover, we sample $N=100$ new trajectories of horizon $H=100$ at each gradient step. Figure~\ref{fig:experiment}(b) illustrates the distances between the recovered $\rhat$ and the experts' reward $\rE$, measured in $\R^{\SAcal}/\ones$. It is evident that the recovered reward gets closer to the experts' reward as the number of expert demonstrations increases. Moreover, we observe that the recovered reward is closer to the experts' reward when the second principal angle between the experts is larger, as expected from Lemma~\ref{lem:two_expert_identifiability}.\looseness-1

\begin{figure}[h]
  \centering
  \includegraphics[width=0.9\linewidth]{experiments2.png}
  \vspace{-0.5cm}
    \begin{flushleft}\hspace{2.2cm}(\textit{a})  \hspace{2.8cm}(\textit{b})  \hspace{2.8cm}(\textit{c})  \hspace{2.8cm}(\textit{d})\end{flushleft}\vspace{-0.1cm}
  \caption{(\textit{a}) shows the second principal angle between $P_{\beta}^0$ and $P_{\beta}^1$ for varying wind strength $\beta$. Furthermore, (\textit{b}) shows the distance between $\rhat$ and $\rE$ in $\R^{\SAcal}/\ones$ for a varying number of expert demonstrations $\NE$ and wind strength $\beta$. Moreover, (\textit{c}) and (\textit{d}) show the transferability to $P^{\text{South}}$ and $P^{\text{Shifted}}$ in terms of $\mathsf{SubOpt}_{P^{\text{South}}}(\rE, \RL_{P^{\text{South}}}(\rhat))$ and $\mathsf{SubOpt}_{P^{\text{Shifted}}}(\rE, \RL_{P^{\text{Shifted}}}(\rhat))$, respectively. The dots indicate the median and the shaded areas the 0.2 and 0.8 quantiles over the 10 independent realizations.
  \label{fig:experiment}}
\end{figure}

\paragraph{Transferability} We evaluate the transferability of the obtained reward by considering two new environments. First, a south wind setting $P^{\text{South}}$ with wind strength $\beta=1$, and second, a deterministic gridworld $P^{\text{Shifted}}$, with cyclically shifted actions, i.e., Right$\to$ Down, Up$\to$ Right, Left$\to$ Up, Down$\to$ Left. In Figure \ref{fig:experiment}(c) and (d), we illustrate the transferability in terms of $\mathsf{SubOpt}_{P^{\text{South}}}(\rE, \RL_{P^{\text{South}}}(\rhat))$ and $\mathsf{SubOpt}_{P^{\text{Shifted}}}(\rE, \RL_{P^{\text{Shifted}}}(\rhat))$, respectively. We observe that for both environments the transferability improves with a larger second principal angle, thus confirming our theoretical result in Theorem~\ref{thm:global_transferability}. The effect is even more pronounced for the shifted environment. While confirming our results, the experiments also reveal a high sample complexity in terms of expert demonstrations. This is to be expected, as IRL aims to match the expert's empirical occupancy measure, leading to overfitting when there are not enough demonstrations \citep{ho2016generative}. This issue can be mitigated by reducing the dimension of the reward class (see e.g. \citep{abbeel2004apprenticeship}).\looseness-1

\end{document}